\documentclass[11pt]{article}

\usepackage{amsmath,amssymb}
\usepackage{amsthm}
\usepackage{mathtools}
\usepackage[noend]{algorithmic}
\usepackage[ruled,vlined]{algorithm2e}
\usepackage{setspace}
\usepackage{url}
\usepackage{fullpage}
\usepackage{makeidx}
\usepackage{enumerate}
\usepackage[top=1in, bottom=1.25in, left=1in, right=1in]{geometry}
\usepackage{graphicx,float,psfrag,epsfig,caption}
\usepackage[usenames,dvipsnames,svgnames,table]{xcolor}
\definecolor{darkgreen}{rgb}{0.0,0,0.9}
\usepackage[pagebackref,colorlinks=true,pdfpagemode=UseNone,citecolor=OliveGreen,linkcolor=BrickRed,urlcolor=BrickRed,pdfstartview=FitH]{hyperref}
\usepackage{epstopdf}
\usepackage{color}
\usepackage{xr}
\usepackage{subfig}
\usepackage{caption}
\usepackage{graphicx}
\usepackage[utf8]{inputenc}

\usepackage{cancel}

\usepackage{scalerel,stackengine}

\stackMath
\newcommand\reallywidehat[1]{%
\savestack{\tmpbox}{\stretchto{%
  \scaleto{%
    \scalerel*[\widthof{\ensuremath{#1}}]{\kern.1pt\mathchar"0362\kern.1pt}%
    {\rule{0ex}{\textheight}}
  }{\textheight}%
}{2.4ex}}%
\stackon[-6.9pt]{#1}{\tmpbox}%
}
\parskip 1ex
\usepackage[mathscr]{euscript}

\DeclareSymbolFont{rsfs}{U}{rsfs}{m}{n}
\DeclareSymbolFontAlphabet{\mathscrsfs}{rsfs}

\numberwithin{equation}{section}

\newtheoremstyle{myexample} 
    {\topsep}                    
    {\topsep}                    
    {\rm }                   
    {}                           
    {\bf }                   
    {.}                          
    {.5em}                       
    {}  

\newtheoremstyle{myremark} 
    {\topsep}                    
    {\topsep}                    
    {\rm}                        
    {}                           
    {\bf}                        
    {.}                          
    {.5em}                       
    {}  

\newtheorem{claim}{Claim}[section]
\newtheorem{lemma}[claim]{Lemma}

\newtheorem{theorem}{Theorem}
\newtheorem{proposition}[claim]{Proposition}
\newtheorem{corollary}[claim]{Corollary}
\newtheorem{definition}[claim]{Definition}

\theoremstyle{myremark}
\newtheorem{remark}{Remark}[section]

\theoremstyle{myremark}

\theoremstyle{myexample}
\newtheorem{example}[remark]{Example}

\definecolor{darkgreen}{rgb}{0.0, 0.5, 0.0}

\newcommand{\bea}{\begin{eqnarray}}
\newcommand{\eea}{\end{eqnarray}}
\newcommand{\<}{\langle}
\renewcommand{\>}{\rangle}
\newcommand{\E}{{\mathbb E}}

\def\fr{\frac}
\def\fr12{\frac{1}{2}}

\def\lt{\left}
\def\rt{\right}
\def\sval{\mbox{{\rm\tiny val}}}
\def\strain{\mbox{{\rm\tiny train}}}
\def\ssurr{\mbox{{\rm\tiny surr}}}

\def\tsurr{{\sf surr}}
\def\tbias{{\sf bias}}

\def\aperp{\alpha_{\perp}}

\def\beperp{\beta_{\perp}}
\def\R{{\mathbb R}}
\def\obR{\overline{\mathbb R}}

\def\sP{{\sf P}}

\def\surr{{\mbox{\tiny\rm su}}}
\def\srand{{\mbox{\tiny\rm rand}}}

\def\sls{{\mbox{\tiny\rm ls}}}
\def\sMM{{\mbox{\tiny\rm MM}}}

\def\spn{{\rm span}}

\def\bG{{\boldsymbol G}}

\def\bX{{\boldsymbol X}}

\def\tbG{\tilde{\boldsymbol G}}
\def\tbH{\tilde{\boldsymbol H}}
\def\opi{\overline{\pi}}
\def\sF{{\sf F}}
\def\ophi{\overline{\varphi}}

\def\sTV{\mbox{\tiny \rm TV}}

\def\snew{\mbox{\tiny \rm new}}
\def\sexc{\mbox{\tiny \rm exc}}
\def\stest{\mbox{\tiny \rm test}}
\def\sunb{\mbox{\tiny \rm unb}}
\def\snr{\mbox{\tiny \rm nr}}

\def\Unif{{\sf Unif}}

\def\eps{{\varepsilon}}

\def\id{{\boldsymbol{I}}}
\def\bh{\boldsymbol{h}}

\def\cuA{\mathscrsfs{A}}

\def\cuP{\mathscrsfs{P}}
\def\cuK{\mathscrsfs{K}}

\def\cuL{\mathscrsfs{L}}
\def\hcuL{\widehat{\mathscrsfs{L}}}
\def\cuN{\mathscrsfs{N}}
\def\cuS{\mathscrsfs{S}}

\def\ocuA{\overline{\mathscrsfs{A}}}
\def\cuU{\mathscrsfs{U}}

\def\S{{\mathbb S}}

\def\sQ{{\sf Q}}

\def\hbxi{\hat{\boldsymbol{\xi}}}
\def\hbtheta{\hat{\boldsymbol{\theta}}}
\def\surrth{\hat{\boldsymbol{\theta}}^{\mbox{\tiny\rm su}}}
\def\btheta{{\boldsymbol{\theta}}}

\def\bxi{{\boldsymbol{\xi}}}

\def\oR{\overline{R}}

\def\bSigma{{\boldsymbol{\Sigma}}}

\def\bP{{\boldsymbol{P}}}

\def\bA{{\boldsymbol{A}}}
\def\bu{{\boldsymbol{u}}}

\def\bh{{\boldsymbol{{h}}}}

\def\bQ{{\boldsymbol{Q}}}
\def\bM{{\boldsymbol{M}}}

\def\bg{{\boldsymbol{g}}}

\def\bzero{{\mathbf 0}}

\def\cF{{\mathcal F}}
\def\cG{{\mathcal G}}

\def\cC{{\mathcal C}}

\def\op{\mbox{\tiny\rm op}}

\def\naturals{{\mathbb N}}
\def\reals{{\mathbb R}}

\def\normal{{\sf N}}

\def\sT{{\sf T}}

\def\bv{{\boldsymbol{v}}}
\def\bz{{\boldsymbol{z}}}
\def\bx{{\boldsymbol{x}}}
\def\ba{{\boldsymbol{a}}}

\def\bs{{\boldsymbol{s}}}

\def\bA{\boldsymbol{A}}
\def\bB{\boldsymbol{B}}

\def\bH{\boldsymbol{H}}
\def\hbH{\boldsymbol{\hat{H}}}

\def\de{{\rm d}}

\def\bX{\boldsymbol{X}}

\def\bW{\boldsymbol{W}}
\def\prob{{\mathbb P}}
\def\E{{\mathbb E}}

\def\<{\langle}
\def\>{\rangle}
\def\Tr{{\sf Tr}}

\def\Ball{{\sf B}}

\def\sign{{\rm sign}}

\def\diag{{\rm diag}}

\def\hR{\hat{R}}

\def\by{{\boldsymbol{y}}}

\def\cW{{\mathcal W}}
\def\obW{\overline{\boldsymbol W}}


\def\hp{\hat{p}}

\def\bu{{\boldsymbol{u}}}
\def\b0{{\boldsymbol{0}}}

\def\bfone{{\boldsymbol 1}}

\DeclareMathOperator*{\plim}{p-lim}

\def\bdelta{{\boldsymbol \delta}}

\def\cS{{\mathcal S}}

\def\cB{{\mathcal B}}

\def\argmin{{\rm arg\,min}}

\def\ess{{\rm ess}}

\def\balpha{{\boldsymbol \alpha}}
\def\balphap{\overline{\boldsymbol\alpha}_{\perp}}
\def\alphap{{\alpha}_{\perp}}

\def\err{{\sf err}}

\def\bfzero{\boldsymbol{0}}

\def\bbeta{{\boldsymbol\beta}}

\def\pr{{\pi}}

\def\rcoeff{\rho}

\def\code#1{\texttt{#1}}

\title{Towards a statistical theory of data selection under weak supervision}

\author{Germain Kolossov${}^*$ \and Andrea Montanari${}^*$ \and 
Pulkit Tandon\thanks{Granica Computing Inc. --- \href{www.granica.ai}{granica.ai}}}

\begin{document}

\maketitle

\begin{abstract}
Given a sample of size $N$, it is often useful to select a subsample of smaller size $n<N$ to 
be used for statistical estimation or learning. 
Such a data selection step is useful to reduce the requirements of data labeling and the computational complexity
of learning. 
We assume to be given $N$ unlabeled samples $\{\bx_i\}_{i\le N}$, 
and to be given access to a  `surrogate model' 
that can predict labels $y_i$ better than random guessing.
Our goal is to select a subset of the samples, to be denoted by $\{\bx_i\}_{i\in G}$, 
of size $|G|=n<N$.
We then acquire labels  for this set and we use them to train a model
via regularized empirical risk minimization.

By using a mixture of numerical experiments
on real and synthetic data, and mathematical derivations under low- and high- dimensional asymptotics,
we show that: $(i)$~Data selection can be very effective, in particular
beating training on the full sample in some cases; $(ii)$~Certain popular choices in data selection methods (e.g. unbiased reweighted subsampling,
or influence function-based subsampling) can be substantially suboptimal.
\end{abstract}

\tableofcontents

\section{Introduction}

Labeling is a notoriously laborious task in machine learning. 
A possible approach towards reducing this burden is to identify a
small subset of training samples that are most valuable for training. 
To be concrete, consider the standard supervised learning setting in which the whole
data consist of $N$ feature vector-label pairs 
$(\bx_i,y_i)\in\reals^p\times\reals$, $i\le N$, but now assume that the labels $y_i$ are not observed. We would like to label only the $n$ most important vectors
in this sample.

More explicitly, data selection-based learning consists of 
two steps:
\begin{enumerate}
\item \emph{Data selection.} Given feature vectors
$\bX:=(\bx_i)_{i\le N}$, select a subset $G\subseteq [N]$ of size $n$ (or close to $n$).
\item \emph{Training.} Having acquired labels for the selected subset 
$\{y_i\}_{i\in G}$, train a model $f(\, \cdot\, ;\btheta):\reals^p\to\reals$
(with parameters $\btheta$) on the labeled data $\{(\bx_i,y_i)\}_{i\in G}$.
\end{enumerate}
Throughout this paper we will focus on methods with the following structure.
In the first step, set $G$ is generated by selecting each datapoint $i$ independently with 
probability $\pr_i = \pr(\bx_i)$. Namely 
\begin{align}
\prob(i\in G|\by,\bX)=\pi_i, \;\;\; \mbox{independently for }i\le N.
\end{align}

The second step (training) is carried out by (weighted) empirical risk minimization (ERM) over the selected set 
$G$. Namely, given loss function $\ell:\reals\times\reals\to\reals$
and regularizer $\Omega:\reals^p\to\reals$, we solve
\begin{align}
\hbtheta & = \arg\min_{\btheta}\hR_N(\btheta)\, ,\label{eq:Mestimators}\\
\hR_N(\btheta)&:= \frac{1}{N}\sum_{i\in G}w_i\, \ell(y_i,f(\bx_i;\btheta))
+\lambda\, \Omega(\btheta)\, . \label{eq:ER}
\end{align}
The weights $w_i$ can depend on the feature vectors,
and are designed as to reduce the estimation or prediction error.
In particular, a popular choice is $w_i={\rm const}/\pr_i$ because this implies that
$\hR_N(\btheta)$ is an unbiased version of the full empirical risk (corresponding to $\pi_i=1$ for all $i$). 

Throughout this paper, we will assume data $(\bx_i,y_i)$ 
to be i.i.d. samples from a common distribution $\prob\in\cuP(\reals^d\times \reals)$ and will evaluate a data selection procedure via the resulting
test error.
\begin{align}
R_{\stest}(\btheta) = \E\, \ell_{\stest}\big(y_{\snew};f(\bx_{\snew};\btheta)\big)\, ,\label{eq:DefTestError}
\end{align}
where expectation is taken with respect to the test
sample $(\bx_{\snew},y_{\snew})\sim \prob$. We allow the test loss
$\ell_{\stest}$ to be different from the training loss $\ell$.
We will denote the (training) population risk
by $R(\btheta) = \E\, \ell\big(y;f(\bx;\btheta)\big)$.

Given its practical utility, data selection has been extensively
studied in a variety of settings, including experimental design, active learning, 
learning algorithms and data optimization. While some heuristics 
(e.g., focus on samples that are most difficult to predict) have resurfaced  from different viewpoints, their implementation
and effectiveness depends on the problem formulation. In particular, unlike the most common active learning scenario, 
it is important that we assume to be given a fixed data sample $\{\bx_i\}_{i\le N}$
(without labels) and carry out a single data selection step. (This is sometime
referred to as `pool-based active learning' \cite{settles2012active}.)
 
Before summarizing our contributions, 
it is useful to mention some of the existing
approaches. 

\paragraph{Bayesian methods.}   Within a Bayesian setting, one can quantify uncertainty about $\btheta$ by using its conditional entropy given the data, and is therefore natural to select the subsample $G$ as to minimize this conditional entropy. This is a first example of ``uncertainty sampling."  It was introduced in the seminal work of Lindley \cite{lindley1956measure}
and developed in a high-dimensional setting by Seung, Opper, Sompolinsky
\cite{seung1992query}. We refer to \cite{houlsby2011bayesian,gal2017deep} for recent 
pointers to this line of work, emphasizing that its focus is largely on online active learning.
(See \cite{cui2021large} for a recent exception.)

\paragraph{Heuristic approaches.} Several groups developed subsampling schemes based on
some measure of the impact of each samples on the trained model. Among others:
\cite{lewis94sequential} measures uncertainty using probabilities predicted by
a single current model;
\cite{vodrahalli2018all} selects the samples with largest norm of the
gradient $\nabla_{\btheta}\ell(y_i,f(\bx_i,\btheta))$;  \cite{jiang2019accelerating}
selects sample $i$ with probability increasing in the loss itself $\ell(y_i,f(\bx_i,\btheta))$
(more precisely, the probability depends on the order statistics of $\ell(y_i,f(\bx_i,\btheta))$).

\paragraph{Leverage scores} have been used for a long time in statistics  as a measure of importance of a data point $i\in \{1,\dots,N\}$ in linear regression \cite{chatterjee1986influential}. 
Recall that the leverage of sample $\bx_i$ is defined as $H_{ii}:=\bx_i^{\sT}(\bX^{\sT}\bX)^{-1}\bx_i$. Subsampling methods based on leverage scores suggest to
sample rows with probabilities $\pr_i \propto H_{ii}$  and weight them with $w_i=1/\pr_i$
to obtain an unbiased risk estimate
\cite{drineas2006sampling,drineas2011faster}. 

These approaches were initially developed from a  numerical linear algebra perspective.
As a consequence, their objective was to approximate the full sample linear regression solution, rather
than to achieve small estimation or generalization error.
Statistical analyses of leverage score-based data selection was developed in \cite{ma2014statistical,raskutti2016statistical,ma2022asymptotic}. These works all assume variations of the original scheme of   \cite{drineas2006sampling} possibly replacing 
$\pr_i \propto H_{ii}$  by $\pr_i \propto f(H_{ii})$ for some monotone increasing function $f$,
and  $w_i=1/\pr_i$ for unbiasedness.

\paragraph{Influence functions.} Let $\btheta_*= \argmin_{\btheta\in\reals^p} R(\btheta)$ denote
the population parameter values. 
For a number of estimators (in particular, for M-estimators of the form \eqref{eq:Mestimators}, under certain regularity conditions)
the following approximate linearity holds 
as $n\to\infty$ for $p$ fixed \cite{van2000asymptotic}
\begin{align}
\hbtheta-\btheta_* = \frac{1}{n}\sum_{i\in G} w_i\psi(\bx_i,y_i)+o_P(1/\sqrt{n})\, ,
\end{align}
where $\psi:\reals^p\times\reals\to\reals$ is the so-called influence 
function. This approximation can be used to select the probabilities $\pi_i$ and weights $w_i$.
Several authors have developed this approach in the context of generalized linear models,
resulting in the choice $\pi_i\propto \|\psi(\bx_i,y_i)\|_2$
\cite{ting2018optimal,wang2018optimal,ai2021optimal}. However this conclusion is derived assuming unbiased subsampling,
i.e. for $w_i=1/\pi_i$. (The recent paper \cite{wang2020less} shows potential for improvement by 
using biased schemes.) 

Further, most of these works \cite{wang2018optimal,wang2020less,ai2021optimal} assume a specific asymptotics 
in which size of the subsample $n$ diverges \emph{after} the full sample size $N$ diverges.
In other words, they focus on the regime $1\ll n\ll N$, in which the estimation (or generalization) error achieved
after data selection is much larger than the one on the full sample. In many applications,
we are most interested in the case in which the two errors (before and after selection) are comparable. In other words, we are practically interested in selecting a constant fraction 
of the data: $n=\Theta(N)$.

\paragraph{Margin-based selection.} The recent paper \cite{sorscher2022beyond} shows that, in the case of the binary perceptron, under a noiseless teacher-student  distribution,
selecting samples far from the margin
is beneficial in a high-dimensional or data-poor regime. 
This is in stark contrast with influence-function and related approaches that upsample data points whose labels are more difficult to predict.
While our results confirm these findings,  measuring
uncertainty in terms of distance from the margin is fairly specific to binary classification under certain distributional assumptions \footnote{We note that \cite{sorscher2022beyond} 
proposes schemes that are
empirically effective more generally, but the connection with their 
mathematical analysis is indirect.}.
%
%
\section{Definitions}

As mentioned in the introduction, an important application of data selection is to alleviate the
need to label data. We will therefore focus on data selection schemes that do not make use of the labels
$(y_i)_{i\le N}$.
On the other hand, we will assume to have access to a surrogate model
$\sP_{\surr}(\de y|\bx)$ which is a probability kernel
from $\reals^d$ to $\reals$. 
We will consider two settings:
\begin{itemize}
    \item \emph{Ideal surrogate.} In this case $\sP_{\surr}(\de y|\bx) = \prob(\de y|\bx)$ is the actual conditional distribution of the labels. Of course this is the very object we want to learn. Since however the overall
    data selection and learning scheme is constrained, the problem is nevertheless non-trivial.
    \item \emph{Imperfect surrogates.} In this case $\sP_{\surr}(\de y|\bx)$ is predictive of the actual value of the labels, but does not coincide with the Bayes predictor. We will study the dependence of optimal data-selection on the accuracy of the surrogate model.
\end{itemize}

The selection probabilities and weights can depend on $\bx_i$ and  $\sP_{\surr}(\, \cdot\, |\bx_i)$
(the conditional distribution of $y_i$ given $\bx_i$ under the surrogate). Formally
\begin{align}
\pi_i=\pi(\bx_i,\sP_{\surr}(\, \cdot\, |\bx_i))\, ,\;\;\;\; w_i=w(\bx_i,\sP_{\surr}(\, \cdot\, |\bx_i))\, ,
\end{align}
for some functions $\pi(\,\cdots\,)$,
$w(\,\cdots\,)$. In order to simplify notations, we will omit the dependence  on the
surrogate predictions, unless needed.

It is convenient to encode
the selection process into random variables $S_i(\bx_i,\sP_{\surr}(\, \cdot\, |\bx_i))\ge 0$ which depend on
$\bx_i,\sP_{\surr}(\, \cdot\, |\bx_i)$, and also on additional  randomness 
independent across different 
samples\footnote{Formally, $S_i(\bx_i,\sP_{\surr}(\, \cdot\, |\bx_i)) = s(U_i,\bx_i,\sP_{\surr}(\, \cdot\, |\bx_i))$, where $s$ 
is a deterministic function and $(U_i)_{i\le N}\sim_{i.i.d.}\Unif([0,1])$. }.
Again, we will omit the dependence on
$\sP_{\surr}(\, \cdot\, |\bx_i)$.
We recover the original formulation by setting
\begin{align}
S_i(\bx_i) = w(\bx_i) \,  \bfone_{i\in G}\, ,\;\;\; \prob(i\in G|\bX,\by) = \pi(\bx_i)\, .
\end{align}
We can thus rewrite the empirical risk \eqref{eq:ER} as:
\begin{align}
\hR_N(\btheta)&= \frac{1}{N}\sum_{i=1}^N\,S_i(\bx_i) \;  L(\btheta;y_i,\bx_i) 
+\lambda\, \Omega(\btheta)\, ,\label{eq:ER2}\\
L(\btheta;y,\bx) & := \ell(y,f(\bx;\btheta))\, .
\end{align}
(Analogously, we introduce the notation  $L_{\stest}(\btheta;y,\bx)  := \ell_{\stest}(y,f(\bx;\btheta))$
for the test loss.)
We also recall the notation for full sample (not reweighted) population risk and 
its minimizer
\begin{align}
R(\btheta)&:= \E\big\{ L(\btheta;y,\bx)\big\} \,,\;\;\;\;\;\;
\btheta_*:= \argmin_{\btheta} R(\btheta)\, .
\end{align}

We will enforce that the target sample size $n$ is achieved in expectation, namely
\begin{align}
n= \sum_{i=1}^N\E\, \pi(\bx_i)\, ,
\;\;\;\;\;\; \pi(\bx_i) :=\prob\big(S_i(\bx_i)>0\big|\bX,\by\big)\, .\label{eq:Normalization}
\end{align}
%
A special role is played of course by unbiased schemes.
\begin{definition}\label{def:Unbias}
We denote the set of data selection schemes defined above
(namely, randomized functions $(\bx,\sP_{\surr}(\, \cdot\, |\bx)) \mapsto S(\bx,\sP_{\surr}(\, \cdot\, |\bx))$) by $\cuA$.

A data selection scheme is \emph{unbiased} if $\E\{S(\bx)|\bX,\by\}=1$. We denote the set of unbiased 
data selection schemes by $\cuU$. 
\end{definition}

Throughout, we consider $n,N\to\infty$, with converging ratio 
\begin{align}
\frac{n}{N}\to \gamma\in (0,1)\, \label{eq:GammaDef}.
\end{align}
We refer to $\gamma$ as to the `subsampling fraction.' For the sake of simplicity, we will think of
sequences of instances indexed by $N\to\infty$ and it will be understood that $n(N) \to\infty$ as well.
As mentioned above, we are interested in 
$\gamma$ bounded away from $0$ because we want to
keep the accuracy close to the full sample accuracy.

\subsection*{Notations}

The unit sphere in $d$ dimensions is denoted by 
$\S^{d-1}$. Given two symmetric  matrices $\bA,\bB\in\reals^{d\times d}$,
we write $\bA\succeq \bB$ if $\bA-\bB$ is positive semidefinite, and
 $\bA\succ \bB$ if $\bA-\bB$ is strictly positive definite.
We denote by $\plim_{n\to\infty}X_n$ the limit in probability of a sequence of random variables 
$(X_n)_{n\ge 1}$. We use $O_P(\,\cdot\,)$, $o_P(\,\cdot\,)$  and so on for the 
standard Oh-notation in probability. For instance, given a sequence of random variables $X_N$,
and deterministic quantities $b_N$, we have
\begin{align}
    X_N = o_P(b_N)\;\; \Leftrightarrow \;\; \plim_{N\to\infty}\frac{|X_N|}{b_N} = 0\, .
\end{align}
%
%
\section{Summary of results}
\label{sec:Summary}

\begin{figure}
\begin{center}
\includegraphics[width=0.7\linewidth]{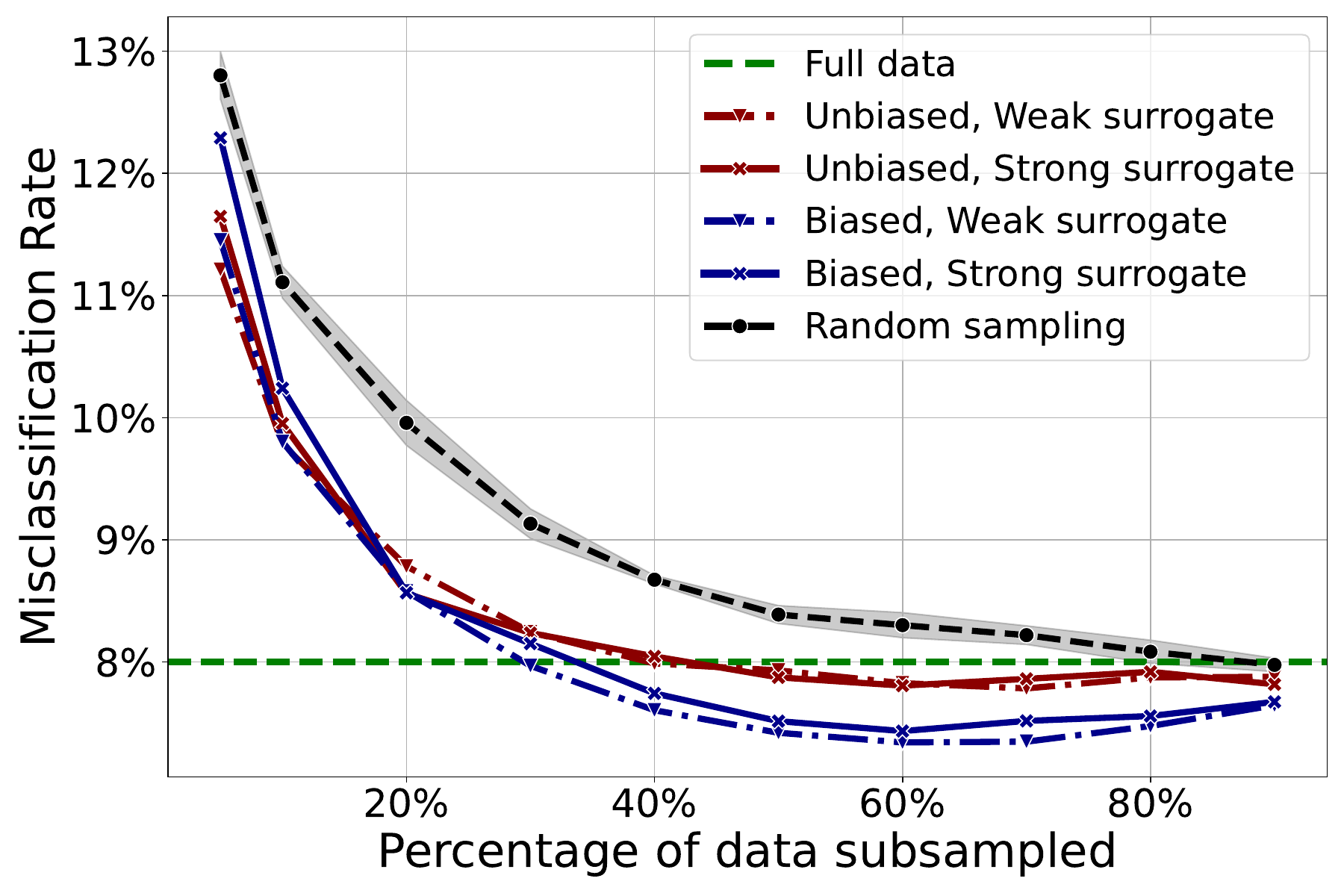}
\end{center}
\caption{Test (misclassification) error in a binary image classification problem. We train logistic regression after data selection on feature vectors obtained from SwAV embeddings 
($N=34345$ samples in $p=2048$ dimensions). The data selection schemes use  surrogate models that were trained on a small separated fraction of the 
data ($N_{\surr}= 14720$ for the `strong surrogate' and $N_{\surr}= 1472$ for the `weak surrogate'). See Section \ref{sec:NumericalReal} for further explanations.}
\label{fig:Summary}
\end{figure}

Our theory is based on two types of asymptotics, that capture complementary
regimes. In Sections \ref{sec:LowDim}, \ref{sec:LowDimImperf} we will study the low-dimensional asymptotics, whereby $p$ is fixed as $n,N\to\infty$.
This analysis applies to a fairly general class of data distributions and 
parametric models $f(\,\cdot\,;\btheta)$.
In Section \ref{sec:HighDim}, we will instead assume 
$p\to\infty$ as $n,N\to\infty$, with $n/p\to \delta\in (0,\infty)$. In this case
we restrict ourselves to the case of convex M-estimators, 
for models that are linear in the feature vectors $\bx_i$.
Namely, we assume $L(\btheta;y_i,\bx_i) = 
L_*(\<\btheta,\bx_i\>;y_i)$, with $L_*$ convex in its first argument.
Further, we assume the $\bx_i$ to be standard Gaussian. 
 In this setting, we can use techniques based on Gaussian comparison inequalities.

We next summarize some of the insights emerging from our work. Some of these points are 
illustrated in Figures \ref{fig:Summary} and \ref{fig:Summary_Opt}, which report the results of data selection experiments with real data, described in Section \ref{sec:NumericalReal}. 
\begin{description}
\item[Unbiased subsampling can be suboptimal.] As mentioned in the introduction, the 
vast majority of (non-Bayesian) theoretical studies assumes unbiased subsampling schemes,
whereby $w_i\propto 1/\pi_i$. We show both theoretically and empirically that this can lead to 
potentially unbounded loss in accuracy with respect to biased schemes. Low-dimensional
asymptotics provide a particularly compelling explanation of this phenomenon. 

In the low-dimensional regime, the estimation error is ---roughly speaking--- inversely proportional to the curvature of the expected risk. Unbiased methods do not change the  curvature, while biased methods can increase the curvature.
In fact, we will prove that unbiased methods are  
suboptimal under a broad set of conditions conditions.

This result is illustrated in Figures \ref{fig:Summary} and \ref{fig:Summary_Opt}. In Figure \ref{fig:Summary}  we compare a biased and an unbiased scheme
(with selection probabilities which approximate the influence-function scheme). In Figure \ref{fig:Summary_Opt}
we select, for each value of the undersampling ratio 
$n/N$, the best among the biased and unbiased scheme. We observe that the biased scheme often achieves better test error than the unbiased one.
\item[Optimal selection depends on the subsampling fraction.] 
Popular data selection methods are independent of subsampling fraction in the sense that the selection probabilities $\pi_i$ can be computed without knowing the target sampling fraction $n/N$, except from the overall normalization that enforces the constraint \eqref{eq:Normalization}. In particular, both leverage score and influence function methods compute scores that depend on the data, but not on the target sample size.

In contrast, we will see that optimal schemes depend in a
non-trivial way on $n/N$.
This is intuitively very natural. Consider, for instance the leverage score
$\bx_i^{\sT}(\bX^{\sT}\bX)^{-1}\bx_i$.
This measures how much datapoint $\bx_i$ differs from the the other data. However, 
it makes more sense to measure the difference from other data in the selected set $G$. This suggests the use of
$\bx_i^{\sT}(\bX_G^{\sT}\bX_G)^{-1}\bx_i$ to select a new data point to add to $G$. This 
intuition will be reflected in the optimal methods derived below.
\begin{figure}
\centering
\phantom{A}\hspace{-1.cm}
\includegraphics[width=0.53\linewidth]{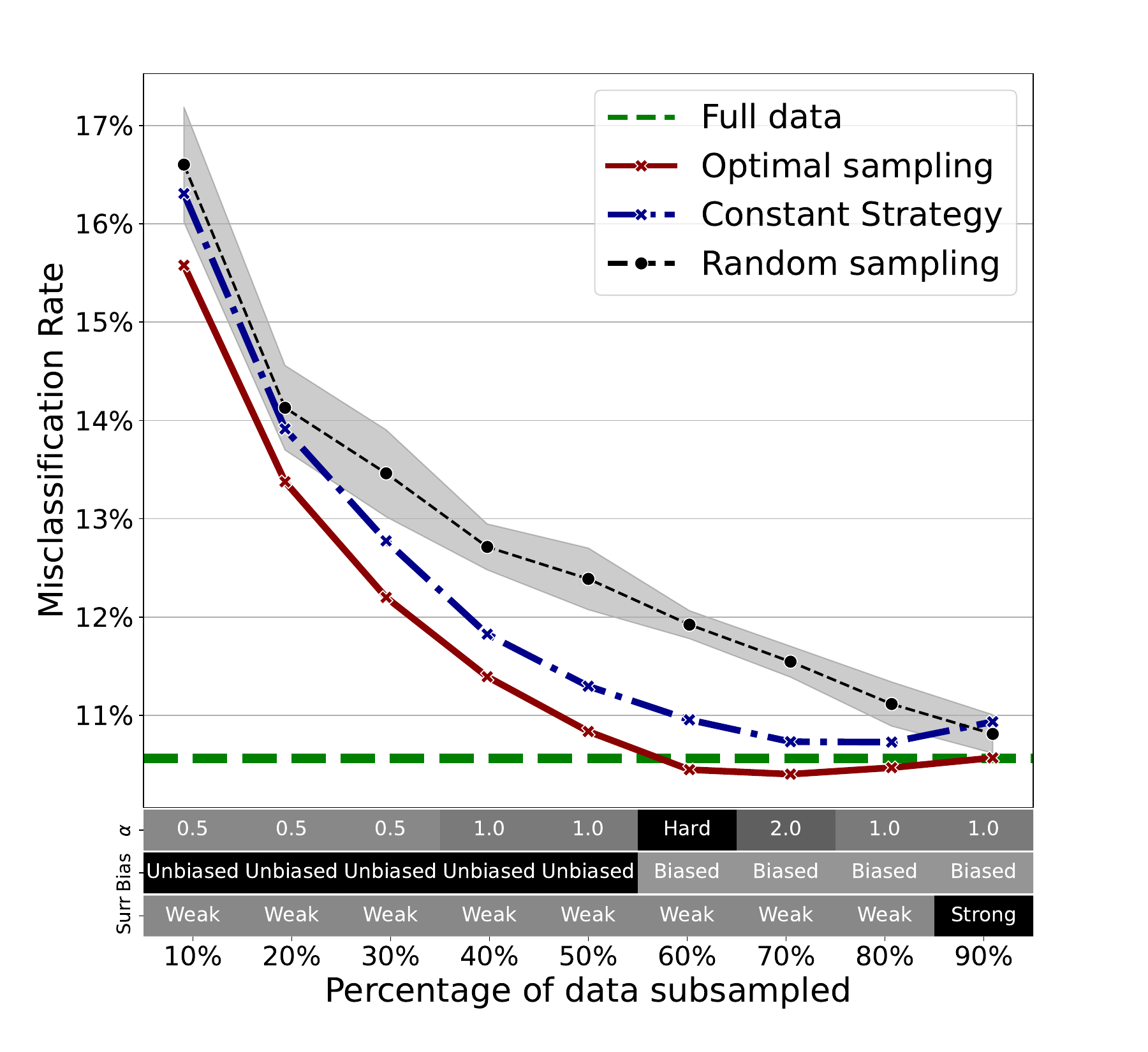}
\hspace{-1cm}
 \includegraphics[width=0.53\linewidth]{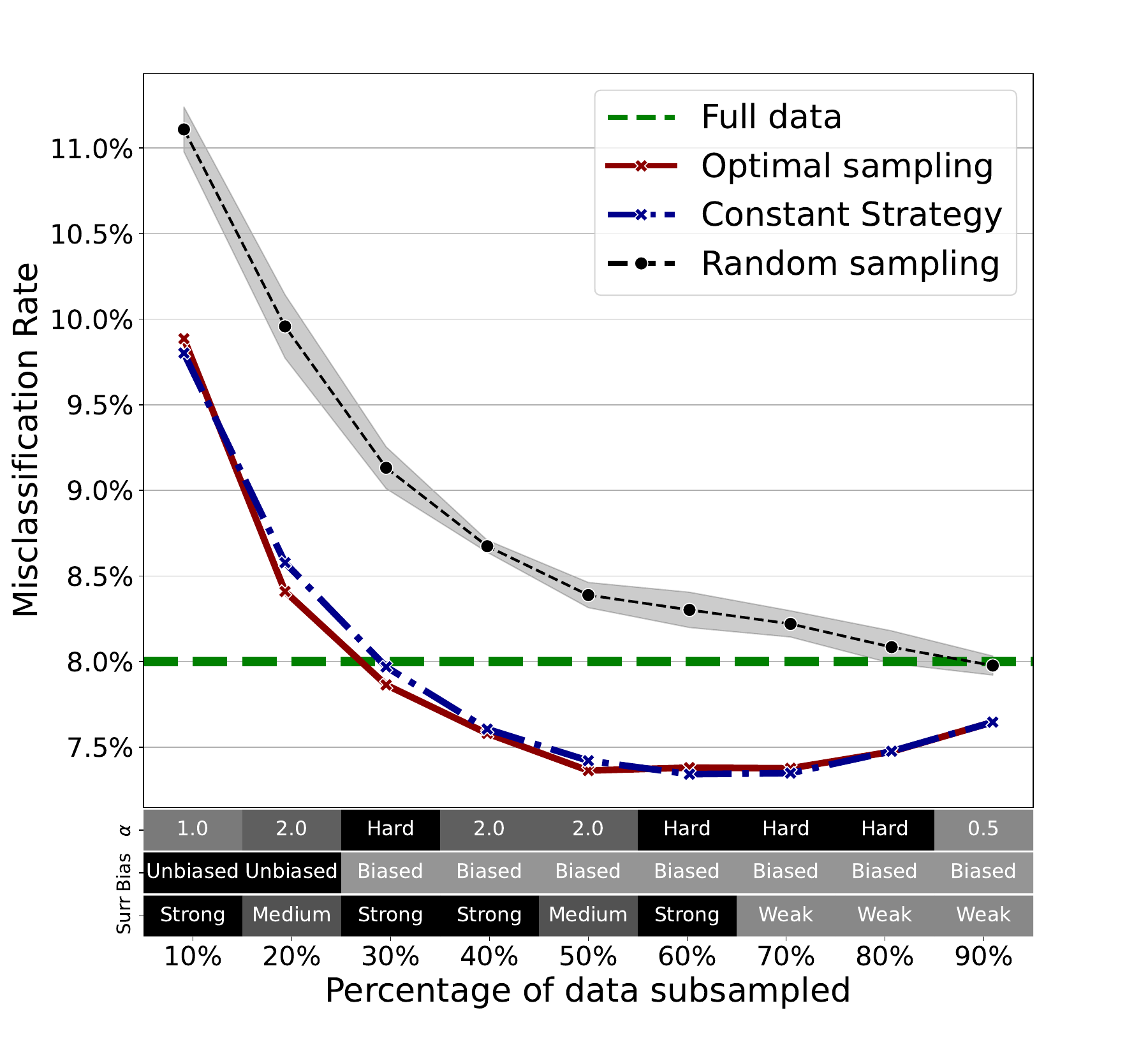}
\caption{Test error in the same binary classification problem as in Figure
\ref{fig:Summary} (see Section \ref{sec:NumericalReal}). At each value of the 
subsampling rate, we optimize our scheme over the following parameters: 
ridge regularization $\lambda$; exponent $\alpha$ in the data selection scheme
(larger $\alpha$ corresponds to selecting `harder' samples); biased or unbiased subsampling; and
strength of the surrogate model. Constant strategy refers to biased subsampling with parameters $\lambda=0.01$, $\alpha=0.5$, and weak surrogate models. Left plot: $N=3434$, $p=2048$; right plot $N=34345$, $p=2048$.}
\label{fig:Summary_Opt}
\end{figure}

\item[Plugin use of the surrogate can be suboptimal.]
How to construct selection schemes in absence of the labels $y_i$? A natural 
idea would be to start from selection probabilities that use the labels, 
$\pi_0(y_i,\bx_i)$, and then take a conditional expectation with respect to $y_i$,
to get $\pi(\bx_i)= \E\{\pi_0(y_i,\bx_i)|\bx_i\}$. 
We can then replace this conditional
expectation by conditional expectation with respect to the surrogate:
$\pi_{\surr}(\bx_i)= {\sf E}_{\surr}\{\pi(y_i,\bx_i)|\bx_i\}$.  

We will show that this is not always the best option, and in particular
better surrogate models do not always lead to better data selection.
This is (partially) illustrated  by Figure \ref{fig:Summary}, which reports test error achieved by selecting data on the basis of surrogate models of different accuracy. We observe that the test error is 
insensitive to the difference of surrogate models.
\item[Uncertainty-based subsampling is effective.]
The simplest rule of thumb emerging from our work broadly confirms earlier research: subsampling schemes based on the sample `hardness,'
i.e. on how uncertain surrogate predictions are, 
perform well in a broad array of settings.
One prominent example is provided by influence function-based 
selection, which is biased towards most uncertain
samples. 

While influence-function based subsampling is often a good
starting point, it is not generally optimal. 
First: in the low-dimensional asymptotics, stronger bias
towards hard examples is beneficial, cf. Section \ref{sec:BiasedFirst}. 
Second: in certain high-dimensional cases, this bias
must be reversed, and ``easier'' examples should be selected,
see Section \ref{sec:HighDim}.
This is in agreement with the results of \cite{sorscher2022beyond}.
Third, balancing the previous point, selecting the `hardest' or `easiest'  even in high dimension, is not always optimal.
For instance, in some cases biasing towards hard samples
can be beneficial in high-dimension, but selecting the hardest 
ones, e.g. those that have largest value of influence,
can lead to significant failures, see e.g. Section \ref{sec:NumericalSynth}.
\item[Data subsampling can improve generalization.] As stated several times, our objective is to select a fraction $\gamma\in (0,1)$ of the data such that, training a model on the selected subset yields test error close to the one achieved by training on the whole subset.
To our surprise, we discover that good data selection, not only can keep test error 
close to the original one down to $\gamma$ as small as $0.4$, but can actually reduce the 
test error below the one obtained from the full sample.
This is illustrated by Figure \ref{fig:Summary}.

The reader might wonder whether this happens because 
information is being passed from the
surrogate model to the trained model via data selection. However, things are more interesting than this.
Indeed, we achieve a reduction in test error even 
when the surrogate model is trained on significantly than $(1-\gamma)N$ random samples from the same population (so that overall, we are not using the entire dataset).

In Section \ref{sec:NonMono}, we will construct explicit examples 
for which we prove that learning after data selection achieves smaller
test error than learning on the full sample.
\end{description}
%
%
\section{Low-dimensional asymptotics: Ideal surrogate}
\label{sec:LowDim}

In this section we study the classical asymptotics in which $p$ is fixed as $n,N\to\infty$.
This setting is simpler and allows for weaker 
assumptions on the data distribution.

\subsection{General asymptotics}

The asymptotics of $\hbtheta$  depends on the population risk associated to sampling scheme $S$
(that is the expectation of the empirical risk \eqref{eq:ER2}):
\begin{align}
R_S(\btheta) : =\E\big\{S(\bx)\, L(\btheta;y,\bx)\big\}\, .
\end{align}
(In this notation, the argument $S$ indicates the dependence on the 
\emph{function} that defines the subsampling procedure.)
The conditional gradient covariance, and conditional Hessian will play an important role,
and are defined below:
\begin{align}
\bG(\bx):= \E\big\{\nabla_{\btheta}L(\btheta_*;y,\bx)
 \nabla_{\btheta}L(\btheta_*;y,\bx)^{\sT}|\bx\big\}\, ,\;\;\;\;\;\;
 \bH(\bx):= \E\big\{\nabla^2_{\btheta}L(\btheta_*;y,\bx)|\bx\big\}\, .\label{eq:GHDef}
\end{align}
Our first result characterizes the error under weighted quadratic losses
$\|\hbtheta-\btheta_*\|_{\bQ}^2:=\<\hbtheta-\btheta_*,\bQ(\hbtheta-\btheta_*)\>$,
for $\bQ\in\reals^{p\times p}$. This covers both the standard $\ell_2$ estimation error 
(by setting $\bQ=\id$) and, under smoothness conditions, the test error $R_{\stest}(\hbtheta)$.

We define the asymptotic error coefficient via
%
\begin{align}
\rcoeff(S;\bQ) :=\lim_{M\to\infty}\lim_{N\to\infty}\E\big\{
\big(N\|\hbtheta-\btheta_*\|_{\bQ}^2\big)\wedge M\big\}\, ,\label{eq:RhoCoeffDef}
\end{align}
whenever the limit exists. The next result is an application of textbook asymptotic statistics,
as detailed in Appendix \ref{sec:ProofLowDim}.
\begin{proposition}\label{propo:LowDimAsymptotics}
Assume the following:
\begin{itemize}
\item[{\sf A1.}]  $R_S(\btheta)$ is uniquely minimized at $\btheta=\btheta_*$.
\item[{\sf A2.}] $\btheta\mapsto L(\btheta;y,\bx)$ is non-negative, lower semicontinuous. Further, for every 
$\bu\in \S^{p-1}$, 
define $L_{\infty}(\bu;y,\bx)\in [0,\infty]$
\begin{align}
L_{\infty}(\bu;y,\bx) = 
\liminf_{\substack{\btheta\to+\infty\\
\btheta/\|\btheta\|\to\bu}} L(\btheta;y,\bx)\, ,
\end{align}
and assume $\inf_{\bu\in\S^{p-1}}\E \{S(\bx)L_{\infty}(\bu;y,\bx)\} > R_S(\btheta_*)$.
\item[{\sf A3.}] $\btheta\mapsto L(\btheta;y,\bx)$ is differentiable at $\btheta_*$
for $\prob$-almost every  $(y,\bx)$.
Further, for  $B$  a neighborhood of $\btheta_*$,
\begin{align*}
\E\sup_{\btheta_1\neq\btheta_2\in  B}\lt\{S(\bx)^2\frac{|L(\btheta_1;y,\bx)-L(\btheta_2;y,\bx)|^2}
{\|\btheta_1-\btheta_2\|^2_2}\rt\}<\infty
\end{align*}
\item[{\sf A4.}] $\btheta\mapsto R_S(\btheta)$ is twice differentiable at $\btheta_*$,
with $\nabla^2R_S(\btheta_*) = \E\{S(\bx)\bH(\bx)\}\succ \bzero$. 
\end{itemize}
Then, for any $\bQ\in\reals^{p\times p}$ symmetric, the limit of Eq.~\eqref{eq:RhoCoeffDef}
exists and is given by
\begin{align}
\rho(S;\bQ) & =  \frac{\E\{S(\bx)^2\}}{\E\{S(\bx)\}^2}\, \Tr\big(\bG_S\bH_S^{-1}
\bQ\bH_S^{-1}\big)\, ,\label{eq:GeneralAsymp}
\end{align}
where 
\begin{align}
\bG_S:= \frac{\E\big\{S(\bx)^2\, \bG(\bx)\big\}}{\E \{S(\bx)^2\}}\, ,
\;\;\;\;\;\;\;\;
 \bH_S:= \frac{\E\big\{S(\bx)\, \bH(\bx)\big\}}{\E \{S(\bx)\}}\, .
\end{align}
Further, $\lim_{M\to\infty}\lim_{N\to\infty}\prob\big\{N\|\hbtheta-\btheta_*\|_{\bQ}^2>M\big\}=0$. 

In particular, if $\btheta\mapsto R_{\stest}(\btheta)$ is twice continuously differentiable at $\btheta_*$, with $\nabla R_{\stest}(\btheta_*) = \bzero$, then
\begin{align}
R_{\stest}(\hbtheta) &= R_{\stest}(\btheta_*) + \frac{1}{N}\cdot \rho(S;\bQ_{\stest})\cdot Y_S +o_P(1/N)\, ,
\end{align}
where $\bQ_{\stest} := \frac{1}{2}\nabla^2R_{\stest}(\btheta_*)$, and $\E Y_{S}=1$. (More explicitly,
$Y_S=\|\bg_S\|^2_{\bQ_{\stest}}/\E\{\|\bg_S\|^2_{\bQ_{\stest}}\}$, where $\bg$ is a centered Gaussian vector with covariance 
$\bH_S^{-1}\bG_S\bH_S^{-1}$.)
\end{proposition}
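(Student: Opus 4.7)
This is a textbook M-estimator asymptotics result applied to the weighted empirical criterion $\hR_N(\btheta) = N^{-1}\sum_i S_i(\bx_i) L(\btheta; y_i,\bx_i)$, with the sampling weight $S_i(\bx_i)$ absorbed into the loss. The plan has three steps plus a Taylor expansion. First I establish consistency $\hbtheta \toP \btheta_*$. Condition A1 isolates the unique minimizer of $R_S$, A2 provides nonnegativity and lower semicontinuity, and the hypothesis $\inf_{\bu\in\S^{p-1}}\E\{S(\bx)L_{\infty}(\bu;y,\bx)\} > R_S(\btheta_*)$ gives coercivity, ensuring that near-minimizers of $\hR_N$ lie in a fixed compact set with probability $1-o(1)$. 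A uniform law of large numbers on this compact set (its local modulus of continuity is controlled by A3) combined with the argmin continuous mapping theorem then gives $\hbtheta \toP \btheta_*$.

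The second step is asymptotic linearization. Since A3 only provides pointwise differentiability of $L(\,\cdot\,;y,\bx)$ at $\btheta_*$, I cannot directly differentiate the first-order condition $\nabla \hR_N(\hbtheta) = \bzero$; instead I invoke the argmin theorem for locally quadratic criteria (van der Vaart, Theorem 5.23), whose inputs are precisely the $L^2$-Lipschitz moment bound of A3, pointwise differentiability at $\btheta_*$, and the positive-definite limiting Hessian $\bH_S^\star := \nabla^2 R_S(\btheta_*) = \E\{S(\bx)\bH(\bx)\}$ from A4. This yields the uniform-on-compacts expansion
\begin{align*}
\hR_N\!\left(\btheta_*+\bh/\sqrt{N}\right) - \hR_N(\btheta_*) = \frac{\bh^\sT \bZ_N}{\sqrt{N}} + \tfrac12 \bh^\sT \bH_S^\star \bh + o_P(1), \qquad \bZ_N := \frac{1}{\sqrt{N}}\sum_{i=1}^N S_i(\bx_i)\, \nabla_{\btheta} L(\btheta_*; y_i,\bx_i),
\end{align*}
together with $\sqrt{N}(\hbtheta - \btheta_*) = -(\bH_S^\star)^{-1}\bZ_N + o_P(1)$. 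The multivariate CLT then gives $\bZ_N \Rightarrow \cN(\bzero,\, \E\{S(\bx)^2 \bG(\bx)\})$, so that $\sqrt{N}(\hbtheta - \btheta_*) \Rightarrow \bg \sim \cN(\bzero,\, (\bH_S^\star)^{-1}\E\{S^2\bG\}(\bH_S^\star)^{-1})$.

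Finally I convert this into the claimed formulas. The continuous mapping theorem plus tightness give both $\plim_{M\to\infty}\lim_N \prob(N\|\hbtheta-\btheta_*\|^2_\bQ > M) = 0$ and convergence of truncated expectations $\E\{(N\|\hbtheta-\btheta_*\|^2_\bQ)\wedge M\} \to \E\{\|\bg\|^2_\bQ\wedge M\} \uparrow \E\|\bg\|^2_\bQ$ as $M\to\infty$. Substituting the normalizations $\bH_S^\star = \E\{S\}\bH_S$ and $\E\{S^2\bG\} = \E\{S^2\}\bG_S$ into $\Tr\big(\bQ (\bH_S^\star)^{-1}\E\{S^2\bG\}(\bH_S^\star)^{-1}\big)$ produces the stated expression for $\rho(S;\bQ)$. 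For the test-error expansion, a second-order Taylor expansion of $R_{\stest}$ at $\btheta_*$, using $\nabla R_{\stest}(\btheta_*)=\bzero$, gives
\begin{align*}
R_{\stest}(\hbtheta) - R_{\stest}(\btheta_*) = (\hbtheta-\btheta_*)^\sT \bQ_{\stest}(\hbtheta-\btheta_*) + o_P(1/N),
\end{align*}
so defining $Y_S := \|\bg\|^2_{\bQ_{\stest}}/\E\|\bg\|^2_{\bQ_{\stest}}$ and combining with the above yields the announced stochastic expansion with $\E Y_S = 1$. The main obstacle is the uniform quadratic expansion in Step 2: pointwise differentiability at $\btheta_*$ alone would not suffice to handle the argmin of a non-smooth criterion, and the purpose of the difference-quotient moment bound in A3 is exactly to furnish the stochastic equicontinuity on $\sqrt{N}$-shrinking neighborhoods of $\btheta_*$ that Theorem 5.23 requires.
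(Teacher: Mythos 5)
Your proposal is correct and follows essentially the same route as the paper's proof: consistency from {\sf A1}--{\sf A2} via a Wald-type argument exploiting the $L_{\infty}$ condition, then the van der Vaart Theorem~5.23-style asymptotic linearization $\sqrt{N}(\hbtheta-\btheta_*)=-(\E\{S\bH\})^{-1}N^{-1/2}\sum_i S_i\nabla_{\btheta}L(\btheta_*;y_i,\bx_i)+o_P(1)$ enabled by the Lipschitz moment bound in {\sf A3} and the nondegenerate Hessian in {\sf A4}, followed by the CLT, the truncated-expectation conversion, and the second-order Taylor expansion of $R_{\stest}$. The only (immaterial) difference is that the paper handles the non-compactness of the parameter space by compactifying $\reals^p$ to the unit ball (mapping directions at infinity to boundary points where the loss equals $L_\infty$) before invoking the Wald consistency theorem, whereas you argue coercivity directly.
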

\begin{remark}
Key  for Proposition \ref{propo:LowDimAsymptotics} to hold is condition
{\sf A1}, which requires the minimizer of the subsampled population risk
$R_S$ to coincide with the minimizer of the original risk $R$. This amounts to say that the data selection
scheme is not so biased as to make empirical risk minimization inconsistent. If it does not hold,
then the resulting error $\|\hbtheta-\btheta_*\|_{\bQ}^2$ will be, in general, of order one.
\end{remark}

\subsection{Unbiased data selection}
\label{sec:UnbiasedFirst}

In this section and the next one, we discuss optimal choices of the subsampling 
procedure under the low-dimensional asymptotics.

We begin by considering unbiased schemes, i.e. schemes satisfying Definition \ref{def:Unbias}.
 While this case is covered by earlier work \cite{ting2018optimal,wang2018optimal,ai2021optimal}, 
 it is somewhat simpler than the biased case and provides useful background for the development
in the next sections.

The key simplification is that, in the unbiased case, the matrix $\bH_S$ does not depend on $S$,
namely $\bH_S= \bH$, where
\begin{align}
\bH& = \nabla^2 R(\btheta_*)\, ,\;\;\;\;\;
R(\btheta) := \E L(\btheta;y,\bx)\, .
\end{align}
(In general $R$ can be different from the test error $R_{\stest}(\btheta)$ because of the different loss.)
We thus recover the standard subsampling scheme based on influence functions. (For the reader's convenience, we present a proof of this specific formulation in Appendix \ref{sec:ProofBasicUnbiased}.)
\begin{proposition}\label{propo:Unbiased}
Under the assumptions of Proposition \ref{propo:LowDimAsymptotics}, further assume
$\bQ\succeq\bzero$, $\bH\succ\bzero$. Then $\rcoeff(S;\bQ)$ is minimized among unbiased data selection schemes 
by a scheme of the form
\begin{align}
S_{\sunb}(\bx) =
\begin{cases}
\pi_{\sunb}(\bx)^{-1} &\mbox{with probability  }\pi_{\sunb}(\bx),\\
0 &\mbox{otherwise}\, ,
\end{cases}
\end{align}
Further, an optimal choice of $\pi(\bx)$ is given by 
\begin{align}
\pi_{\sunb}(\bx) &= 
\min\Big(1; c(\gamma)\, Z(\bx)^{1/2}\Big)\, ,\label{eq:UnbFirst}\\
Z(\bx)&:=\Tr\big(\bG(\bx) \bH^{-1}\bQ\bH^{-1}\big)=\E\big\{\big\|\nabla_{\btheta} L(\btheta;y,\bx)\big\|^2_{\bH^{-1}\bQ\bH^{-1}}\big|\bx\big\} \, .\nonumber
\end{align}
Here the constant $c(\gamma)$ is defined so that $\E \pi_{\sunb}(\bx) =\gamma$.

Finally the the optimal coefficient $\rcoeff_{\sunb}(\bQ):=\inf_{S\in\cuU}\rcoeff(S;\bQ)
= \rcoeff(S_{\sunb};\bQ)$ is given by
\begin{align}
\rcoeff_{\sunb}(\bQ) &  =\inf_{\E\pi(\bx)= \gamma}\E\Big\{\frac{Z(\bx)}{\pi(\bx)}\Big\}\label{eq:OptimalUnbiased}\\
&=\E
\max\Big(\frac{1}{c(\gamma)}Z(\bx)^{1/2};Z(\bx)\Big) \, .\nonumber
\end{align}
\end{proposition}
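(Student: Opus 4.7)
The plan is to specialize Proposition~\ref{propo:LowDimAsymptotics} to unbiased schemes, simplify the resulting functional, and then decouple the optimization into (a) choosing the conditional law of $S$ given $\bx$ for a fixed selection probability $\pi(\bx)$, and (b) optimizing over $\pi$.

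\emph{Step 1: simplification of $\rcoeff$.} Unbiasedness means $\E\{S(\bx)\mid\bx\}=1$, so $\E S=1$ and, crucially, $\bH_S = \E\{\E[S(\bx)\mid\bx]\,\bH(\bx)\} = \E\bH(\bx) = \bH$ is independent of $S$. Inserting this into \eqref{eq:GeneralAsymp} and using linearity of trace:
\[
\rcoeff(S;\bQ) \;=\; \Tr\bigl(\E\{S(\bx)^2\bG(\bx)\}\,\bH^{-1}\bQ\bH^{-1}\bigr) \;=\; \E\{S(\bx)^2 Z(\bx)\} \;=\; \E\bigl\{Z(\bx)\,\E[S(\bx)^2\mid\bx]\bigr\},
\]
where $Z(\bx)=\Tr(\bG(\bx)\bH^{-1}\bQ\bH^{-1})\ge 0$ because $\bG(\bx)\succeq \bzero$ and $\bH^{-1}\bQ\bH^{-1}\succeq\bzero$.

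\emph{Step 2: lower bound and attaining scheme.} Set $\pi(\bx):=\prob(S(\bx)>0\mid\bx)$. For any nonnegative $S$ with $\E[S\mid\bx]=1$, conditional Cauchy--Schwarz applied to $S$ and $\mathbf{1}_{S>0}$ gives
\[
1 \;=\; \E[S\,\mathbf{1}_{S>0}\mid\bx]^2 \;\le\; \E[S^2\mid\bx]\cdot \pi(\bx),
\]
so $\E[S^2\mid\bx]\ge 1/\pi(\bx)$, with equality iff $S$ is $\prob(\cdot\mid\bx)$-a.s.\ equal to $1/\pi(\bx)$ on $\{S>0\}$. Consequently $\rcoeff(S;\bQ)\ge \E\{Z(\bx)/\pi(\bx)\}$, and the lower bound is attained by the Bernoulli-type scheme in the statement. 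This establishes the first line of \eqref{eq:OptimalUnbiased} and reduces the problem to minimizing $\E\{Z(\bx)/\pi(\bx)\}$ over measurable $\pi:\reals^d\to[0,1]$ with $\E\pi(\bx)=\gamma$.

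\emph{Step 3: pointwise optimization in $\pi$.} The reduced functional is convex in $\pi$. Introducing a Lagrange multiplier $\mu = 1/c^2 > 0$ for the mass constraint, the pointwise subproblem $\min_{\pi\in[0,1]}\{Z(\bx)/\pi + \mu\pi\}$ has unconstrained minimizer $\sqrt{Z(\bx)/\mu}$; imposing the box constraint yields $\pi_{\sunb}(\bx)=\min(1,\,c\,Z(\bx)^{1/2})$. Since $c\mapsto\E\{\min(1,cZ^{1/2})\}$ is continuous and nondecreasing (strictly so on the range where the box is not saturated almost surely), a scalar $c(\gamma)$ realizing $\E\pi_{\sunb}=\gamma$ exists whenever $\gamma<1$. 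Substituting back, $Z/\pi_{\sunb}=Z^{1/2}/c$ on $\{cZ^{1/2}\le 1\}$ and $Z/\pi_{\sunb}=Z$ on the complement, giving $Z/\pi_{\sunb}=\max(Z,\,Z^{1/2}/c(\gamma))$, which is the second line of \eqref{eq:OptimalUnbiased}.

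The only nontrivial step is the conditional Cauchy--Schwarz bound of Step~2 together with its equality case, since this is what forces the $1/\pi$ weight to be pointwise optimal and rules out randomizing the weight within $\{S>0\}$. Everything else is routine: the simplification $\bH_S=\bH$ under unbiasedness, linearity of trace, and one-dimensional convex minimization with a box constraint (where the KKT conditions reduce to the clipped threshold rule and uniqueness of $c(\gamma)$ follows from monotonicity of the mass functional).
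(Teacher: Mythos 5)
Your proof is correct and follows essentially the same route as the paper: reduce to $\rho(S;\bQ)=\E\{S^2 Z\}$ using $\bH_S=\bH$, bound the conditional second moment below by $1/\pi(\bx)$ with equality exactly for the $1/\pi$-weighted Bernoulli scheme, then solve the convex problem in $\pi$ by Lagrangian duality with the clipped square-root rule. The only cosmetic difference is that you obtain $\E[S^2\mid\bx]\ge 1/\pi(\bx)$ via conditional Cauchy--Schwarz applied to $S$ and $\bfone_{S>0}$, whereas the paper applies Jensen after decomposing $S=S_+(\bx)I(\bx)$; both yield the same equality case and the rest of the argument is identical.
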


Denote by $S_{\srand}$ random subsampling,
namely $S_{\srand}(\bx) = 1/\gamma$ with probability $\gamma$, and $S_{\srand}(\bx) =0$ otherwise
(which is of course is unbiased). The next proposition establishes some basic properties of 
$\rcoeff_{\sunb}(\bQ;\gamma)$.
\begin{proposition}\label{propo:UnbiasedMono}
 Write $\rcoeff_{\sunb}(\bQ;\gamma)$ for the optimal unbiased coefficient
when the subsampling fraction equals $\gamma$. Then:
$(1)$~$\rcoeff_{\sunb}(\bQ)\le \rcoeff_{\srand}(\bQ)$ with the inequality being strict unless 
$Z(\bx)$ is almost surely constant or $\gamma=1$; $(2)$~$\gamma\mapsto\rcoeff_{\sunb}(\bQ;\gamma)$
is monotone non-increasing (for $\bQ\succeq \bfzero$).
\end{proposition}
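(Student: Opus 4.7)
My plan is to exploit the variational characterization \eqref{eq:OptimalUnbiased} of Proposition~\ref{propo:Unbiased},
\[
\rcoeff_{\sunb}(\bQ;\gamma) \;=\; \inf_{\pi:\; \E\pi(\bx)=\gamma,\; \pi\le 1}\;\E\!\left\{\frac{Z(\bx)}{\pi(\bx)}\right\},
\]
so that both claims reduce to elementary properties of this $\gamma$-parametrized convex program on the set of measurable functions $\pi:\reals^p\to(0,1]$. Note that $\bQ\succeq\bzero$ (implicit throughout by the setting of Proposition~\ref{propo:Unbiased}) makes $\bH^{-1}\bQ\bH^{-1}$ positive semidefinite, so $Z(\bx)\ge 0$, and the integrand is well defined.

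For part~(1), random subsampling corresponds to the constant choice $\pi(\bx)\equiv\gamma$, which is feasible and attains $\E Z(\bx)/\gamma=\rcoeff_{\srand}(\bQ)$; this already gives $\rcoeff_{\sunb}(\bQ)\le \rcoeff_{\srand}(\bQ)$ with no work. For strict inequality I would invoke strict convexity: since $t\mapsto 1/t$ is strictly convex on $(0,\infty)$, whenever $Z>0$ on a positive-probability event the objective is strictly convex on the (convex) feasible set, hence its minimizer is unique. Proposition~\ref{propo:Unbiased} identifies this unique minimizer as $\pi_{\sunb}(\bx)=\min\{1,c(\gamma)\sqrt{Z(\bx)}\}$. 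Requiring this to coincide almost surely with the constant $\gamma$ forces either $\pi_{\sunb}\equiv 1$ (so $\gamma=1$) or $c(\gamma)\sqrt{Z(\bx)}=\gamma$ a.s.\ (so $Z$ is a.s.\ constant). Outside these two cases strict inequality follows; the degenerate case $Z\equiv 0$ gives $\rcoeff_{\sunb}=\rcoeff_{\srand}=0$, consistent with the statement.

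For part~(2), I would use a monotone coupling. Fix $0<\gamma_1<\gamma_2\le 1$ and $\eps>0$, and pick $\pi_1$ feasible for $\gamma_1$ with $\E\{Z/\pi_1\}\le \rcoeff_{\sunb}(\bQ;\gamma_1)+\eps$. Define
\[
\pi_2(\bx):=\pi_1(\bx)+\alpha\bigl(1-\pi_1(\bx)\bigr),\qquad \alpha:=\frac{\gamma_2-\gamma_1}{1-\gamma_1}\in(0,1].
\]
Then $\pi_1\le \pi_2\le 1$ and $\E\pi_2=\gamma_2$, so $\pi_2$ is feasible for $\gamma_2$; since $Z\ge 0$ we have $Z/\pi_2\le Z/\pi_1$ pointwise, hence $\rcoeff_{\sunb}(\bQ;\gamma_2)\le \E\{Z/\pi_2\}\le \rcoeff_{\sunb}(\bQ;\gamma_1)+\eps$, and letting $\eps\downarrow 0$ gives the claim. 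The $\gamma_1=1$ case is vacuous.

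The only real subtlety is in part~(1): one must argue uniqueness of the minimizer carefully in the presence of the active boundary constraint $\pi\le 1$ (the objective is strictly convex in $\pi$ on the open set where $\pi>0$, which together with convexity of the feasible set is enough, but this deserves a line), and the two degenerate cases ($Z$ a.s.\ constant, and $\gamma=1$) must be treated separately rather than folded into the strict-convexity argument.
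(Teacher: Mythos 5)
Your proof is correct in substance. Part (2) is essentially the paper's own argument: the paper notes that $\pi\mapsto\E\{Z(\bx)/\pi(\bx)\}$ is monotone under pointwise domination and therefore relaxes the constraint $\E\pi=\gamma$ to $\E\pi\le\gamma$; your coupling $\pi_2=\pi_1+\alpha(1-\pi_1)$ is simply a constructive version of the same observation, and it is fine.

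Part (1) takes a genuinely different route. The paper writes $\rcoeff_{\sunb}(\bQ)/\rcoeff_{\srand}(\bQ)=\E(Z/\pi_{\sunb})\,\E(\pi_{\sunb})/\E\big((Z/\pi_{\sunb})\cdot\pi_{\sunb}\big)$ and applies a Chebyshev-type association inequality to the two non-decreasing functions $\pi_{\sunb}$ and $Z/\pi_{\sunb}$ of $Z$, reading the equality cases off that inequality; you instead get the non-strict inequality for free from feasibility of $\pi\equiv\gamma$ and settle strictness by strict convexity and uniqueness of the minimizer. This is a legitimate alternative, but there is a caveat beyond the $\pi\le 1$ boundary issue you flag: the integrand $Z(\bx)/\pi(\bx)$ is strictly convex in $\pi(\bx)$ only where $Z(\bx)>0$, so the minimizer is unique only up to modification on $\{Z(\bx)=0\}$ (indeed $\pi_{\sunb}=\min(1,c\sqrt{Z})$ vanishes there, which also puts it outside your feasible class of $(0,1]$-valued $\pi$). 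Hence, when $\prob(Z=0)\in(0,1)$ and $\gamma<1$, your argument only yields $\pi_{\sunb}=\gamma$ a.e.\ on $\{Z>0\}$, which is not yet ``$Z$ a.s.\ constant.'' One extra line closes this: the normalization then gives $\gamma=\E\pi_{\sunb}=\gamma\,\prob(Z>0)<\gamma$, a contradiction, so equality cannot occur in this mixed case, and the only remaining equality cases are $Z$ a.s.\ constant or $\gamma=1$, as claimed. With that line added, your argument is complete.
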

\begin{proof}
For claim $(1)$, it is easy to compute $\rcoeff_{\srand}(\bQ) = \gamma^{-1}\E(Z)$.
Therefore, the gain derived from optimal subsampling can be written as
\begin{align}
\frac{\rcoeff_{\sunb}(\bQ)}{\rcoeff_{\srand}(\bQ)}  = \frac{\E(Z/\pi_{\sunb})\E(\pi_{\sunb})}{\E((Z/\pi_{\sunb})\cdot \pi_{\sunb})}\le 1\, . 
\end{align}
The inequality above is due to the fact that $\pi_{\sunb}$ and $Z/\pi_{\sunb}$ are both non-decreasing 
functions of $Z$.  It is strict unless the following happens with probability one, for i.i.d. 
copies $Z_1,Z_2$ of $Z$:
$\min(1;cZ_1^{1/2})= \min(1;cZ_2^{1/2})$ or $\max(Z_1;Z_1^{1/2}/c)= \max(Z_2;Z_2^{1/2}/c)$
However, this happens only if either $Z$ is almost surely a constant or if $cZ\le 1$ almost surely.
The latter implies $\gamma=1$.

To prove claim $(2)$, 
note that the mapping $\pi\mapsto \E\{Z(\bx)/\pi(\bx)\}$ appearing in Eq.~\eqref{eq:OptimalUnbiased}
is monotone with respect to the partial ordering of pointwise 
domination. Therefore, we can replace the constraint $\E\pi(\bx)= \gamma$ by $\E\pi(\bx)\le  \gamma$,
whence monotonicity trivially follows.
\end{proof}
While monotonicity may appear intuitively obvious, we will see in the Section \ref{sec:NonMono}
that it does not hold for biased subsampling.
\begin{remark}
Besides computing expectations with respect to the conditional distribution of $y$
given $\bx$ (a problem which will be addressed in the next section), evaluating
the score requires to be able to approximate $\bH= \E\{\bH(\bx)\}$. However, this
can be consistently estimated e.g. using the empirical mean $\hbH:= N^{-1}\sum_{i=1}^N\bH(\bx_i)$.
We will neglect errors involved in such approximations.
\end{remark}

\begin{remark}[Connection with influence functions]
Under the assumptions of Proposition \ref{propo:Unbiased} the influence function is given by
\begin{align}
\psi(\bx,y) = -\bH^{-1}\nabla_{\btheta} L(\btheta_*;y,\bx)\, ,
\end{align}
and therefore we can rewrite (omitting for simplicity the capping at $\pi=1$,
which is only needed at large $\gamma$),
\begin{align}\label{eq:IF}
\pi(\bx_i)\propto \E\big\{\big\|\psi(\bx_i,y_i)\big\|^2_{\bQ}\big|\bx_i\big\}^{1/2}\, .
\end{align}
We recovered influence function-based subsampling 
\cite{ting2018optimal,wang2018optimal,ai2021optimal} (with the choice of the norm
depending on the estimation loss).  Note that the fact that we do not 
use the label $y_i$ results in the appearance of a conditional expectation of $y_i$ given $\bx_i$.
Interestingly, the optimal way to deal with 
unknown $y_i$ is to take
conditional expectation of the \emph{square} of the sampling probability. Namely, we compute \eqref{eq:IF} 
instead of $\E\big\{\big\|\psi(\bx_i,y_i)\big\|_{\bQ}\big|\bx_i\big\}$.
\end{remark}

\begin{example}[Linear regression]\label{example:FirstLinear}
In this case we take $L(\btheta;y,\bx)=L_{\stest}(\btheta;y,\bx) = (y-\<\btheta,\bx\>)^2/2$.
With little loss of generality, we assume $\bx$ to be centered i.e. $\E(\bx) = \bfzero$, with finite 
covariance $\E\{\bx\bx^{\sT}\}=\bSigma$. We then have $\nabla^2R(\btheta)=\bSigma$ and
$\nabla_{\btheta}L(\btheta_*;y_i,\bx_i) =-\eps_i\,  \bx_i$ where $\eps_i = y_i-\<\btheta_*,\bx_i\>$
is the `noise' for observation $i$.

If we further assume $\E(\eps_i^2|\bx_i)=\tau^2$ independent of $\bx_i$, we finally get (dropping a constant multiplicative factor)
\begin{align}
    Z(\bx) = \<\bx,\bSigma^{-1}\bQ\bSigma^{-1}\bx\>\, .\label{eq:GeneralizedLeverage}
\end{align}
In the special case of prediction error, $\bQ=\bSigma$ and therefore 
$Z(\bx) = \<\bx,\bSigma^{-1}\bx\>$
and we recover a population version of the leverage score. 
Note that the optimal sampling probabilities
are proportional to the \emph{square root} of the score (capped at $1$).
\end{example}

\begin{example}[Generalized linear models]\label{ex:FirstGLM}
Again, we assume $\bx_i$ to be a centered vector and
\begin{align}
\prob(\de y_i|\bx_i)  = \exp\big\{y_i\<\btheta_*,\bx_i\>-\phi(\<\btheta_*,\bx_i\>)\big\}\, \nu_0(\de y_i)\, .
\end{align}
Here $\nu_0$ is the reference measure, e.g. $\nu_0=\delta_{+1}+\delta_{-1}$ for logistic regression.
We fit a model of the same type and train via the loss
$L(\btheta;y,\bx)= -y\<\btheta,\bx\>+\phi(\<\btheta,\bx\>)$. Hence 
$\nabla_{\btheta} L(\btheta;y,\bx)=-(y-\phi'(\<\btheta,\bx\>))\bx^{\sT}$ and 
$\nabla^2 R(\btheta_*) = \E\{\phi''(\<\btheta,\bx\>)\bx\bx^{\sT}\}:=\bH$.

A simple calculation yields:
\begin{align}
Z(\bx) = \phi''(\<\btheta_*,\bx\>)\, \cdot 
\<\bx,\bH^{-1}\bQ\bH^{-1}\bx\>\, .\label{eq:GLM_Score_Unbiased}
\end{align}
\end{example}

\subsection{Biased data selection}
\label{sec:BiasedFirst}

We now consider  biased subsampling schemes $S$. Among these, a special role is
played by those schemes in which the only randomness is the choice of
whether or not to select a certain sample $i$.
We refer to these schemes as `simple.'
\begin{definition}[Simple data selection schemes] 
A scheme is \emph{simple} if there exist function $w, \pi$ such that
\begin{align}
S(\bx) =
\begin{cases}
w(\bx) &\mbox{with probability  }\pi(\bx),\\
0 &\mbox{otherwise}\, ,
\end{cases}
\end{align}
We denote the set of simple sampling schemes by $\cuS\subseteq \cuA$
(recall that $\cuA$ is the set of all data selection schemes).
\end{definition}

We can identify any simple data selection scheme with the 
pair $(\pi,w)$, hence we will occasionally abuse notation and write $\rho(\pi,w;\bQ)$ for $\rho(S;\bQ)$.
\begin{lemma}\label{lemma:Simple}
For $\bQ\succeq \bfzero$, we have
\begin{align}
\inf_{S\in \cuA}\rho(S;\bQ) = \inf_{S\in\cuS} \rho(S;\bQ)\, .
\end{align}
\end{lemma}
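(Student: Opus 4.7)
The plan is to exploit the fact that $\rho(S;\bQ)$ depends on the scheme $S$ only through its conditional first and second moments. Writing $m_1(\bx):=\E[S(\bx)\mid\bx]$ and $m_2(\bx):=\E[S(\bx)^2\mid\bx]$, a direct substitution into the formula \eqref{eq:GeneralAsymp} yields
\begin{align*}
\rho(S;\bQ) \;=\; \frac{\E\bigl[m_2(\bx)\,\Tr\!\bigl(\bG(\bx)\,\bH_S^{-1}\bQ\bH_S^{-1}\bigr)\bigr]}{\E[m_1(\bx)]^{2}}\,, \qquad \bH_S=\frac{\E[m_1(\bx)\bH(\bx)]}{\E[m_1(\bx)]}\,.
\end{align*}
Since $\bG(\bx)\succeq\bzero$, $\bQ\succeq\bzero$ and $\bH_S\succ\bzero$, each trace in the numerator is nonnegative, so $\rho$ is monotone nondecreasing in $m_2$ at fixed $m_1$. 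The strategy is then: given any $S\in\cuA$ satisfying the budget constraint $\E[\pi(\bx)]=\gamma$, construct a simple scheme $S''\in\cuS$ also satisfying $\E[\pi''(\bx)]=\gamma$ with $\rho(S'';\bQ)\le \rho(S;\bQ)$. Combined with $\cuS\subseteq\cuA$, this gives the asserted equality of infima.

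The first step is to observe that simple schemes can realize any admissible pair $(m_1,m_2)$. Indeed, when $m_1(\bx)>0$, the simple scheme with $\pi'(\bx):=m_1(\bx)^2/m_2(\bx)$ and $w'(\bx):=m_2(\bx)/m_1(\bx)$ has conditional moments exactly $(m_1,m_2)$; on $\{m_1=0\}$ one sets $\pi'=0$. Hence $\rho(S';\bQ)=\rho(S;\bQ)$. Moreover, because $S\ge 0$, Cauchy--Schwarz conditional on $\bx$ gives
\begin{align*}
m_1(\bx)^{2}\;=\;\bigl(\E[S(\bx)\ind_{S(\bx)>0}\mid\bx]\bigr)^{2}\;\le\; m_2(\bx)\,\pi_S(\bx)\,,
\end{align*}
so $\pi'(\bx)\le \pi_S(\bx)$ pointwise and the new scheme is at worst as expensive as $S$: $\E[\pi'(\bx)]\le \gamma$.

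The second step handles the residual budget. If $\E[\pi'(\bx)]<\gamma$, define a convex combination $\pi''(\bx):=\alpha\pi'(\bx)+(1-\alpha)$ with $\alpha=(1-\gamma)/(1-\E[\pi'])\in[0,1]$, so that $\pi''(\bx)\in[\pi'(\bx),1]$ and $\E[\pi''(\bx)]=\gamma$. Keep $m_1$ unchanged by setting $w''(\bx):=m_1(\bx)/\pi''(\bx)$. The new scheme $S''$ is simple, and its conditional second moment is $m_2''(\bx)=m_1(\bx)^2/\pi''(\bx)\le m_1(\bx)^2/\pi'(\bx)=m_2(\bx)$. Since $m_1$ (and therefore $\bH_{S''}=\bH_{S'}$) is unchanged while $m_2$ decreased pointwise, the monotonicity in $m_2$ noted above yields $\rho(S'';\bQ)\le\rho(S';\bQ)=\rho(S;\bQ)$, completing the argument.

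The main obstacle is essentially bookkeeping: verifying that the $(m_1,m_2)$ representation really captures everything that matters in $\rho$ and handling the degenerate set $\{m_1=0\}$ (trivially, since $S=0$ almost surely there) and the case $\E[\pi']=1$ (where no adjustment is needed because $\gamma\le 1$). Once these technicalities are dispatched, the two-step reduction is essentially transparent, and no additional assumption beyond $\bQ\succeq\bzero$ is required.
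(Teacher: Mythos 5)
Your argument is correct and rests on essentially the same mechanism as the paper's proof: by Eq.~\eqref{eq:GeneralAsymp}, $\rho(S;\bQ)$ depends on $S$ only through the conditional moments $m_1(\bx)=\E[S(\bx)\,|\,\bx]$ and $m_2(\bx)=\E[S(\bx)^2\,|\,\bx]$, and a conditional Jensen/Cauchy--Schwarz step combined with $\bG(\bx),\bQ\succeq\bzero$ does the rest. The paper does it in one move: writing $S=S_+\,I$, it keeps the original selection probability $\pi_S(\bx)=\prob(S(\bx)>0\,|\,\bx)$ and replaces the random weight by its conditional mean $w(\bx)=m_1(\bx)/\pi_S(\bx)$; Jensen shows the second moment can only decrease while $m_1$ (hence $\bH_S$) and the budget are untouched. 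Your two-step route (first match $(m_1,m_2)$ exactly with the simple scheme $\pi'=m_1^2/m_2$, $w'=m_2/m_1$, then re-inflate to restore $\E\pi=\gamma$) reaches the same conclusion, with one small wrinkle in the second step: mixing $\pi'$ with the constant $1$ places selection probability on the set $\{m_1=0\}$, where your weight $w''=m_1/\pi''$ vanishes; since the paper defines the budget through $\pi(\bx)=\prob(S(\bx)>0\,|\,\bx)$, such zero-weight selections do not count, so when $\prob(m_1(\bx)=0)>0$ the realized budget of $S''$ can still fall strictly below $\gamma$. This is harmless and immediately repaired: pad only on $\{m_1>0\}=\{\pi_S>0\}$, which has probability at least $\E[\pi_S]=\gamma$, or simply take $\pi''=\pi_S$ with $w''=m_1/\pi_S$, which is exactly the paper's construction.
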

We can therefore restrict ourselves to simple schemes without loss (See Appendix \ref{sec:ProofSimple} for the proof).
Among these, we focus on `non-reweighting schemes' that are defined by the additional condition $w=1$
and denote them by $\cuN\subseteq \cuS$. Non-reweighting schemes are practically convenient since they do not require to modify 
the training procedure.

It turns out that optimal non-reweighting schemes have a simple structure, as stated below. 
(See Appendix \ref{sec:ProofBiased} for a proof.)
\begin{proposition}\label{propo:Biased}
Under the assumptions of Proposition \ref{propo:LowDimAsymptotics},
further assume $\bQ\succeq\bfzero$, $\bH\succ\bzero$, $\E\{\nabla_{\btheta}L(\btheta_*;y,\bx)|\bx\}=\bzero$,
and $\bx\mapsto \bG(\bx)$, $\bH(\bx)$ to be continuous.
Further define
\begin{align}
\bG_\pi :=\E_{\pi}\bG(\bx)\, ,\;\;\;\;\;\;
\bH_\pi := \E_{\pi}\bH(\bx)\, ,\;\;\;\mbox{where}\;\;\;\; \E_{\pi}f(\bx) := \frac{\E\big\{f(\bx)\,\pi(\bx)\big\}}{\E\big\{\pi(\bx)\big\}}
\end{align}
Finally define the function
\begin{align}
Z(\bx;\pi) := -\Tr\big\{\bG(\bx)\bH_{\pi}^{-1}\bQ\bH_{\pi}^{-1}\big\}+
2 \Tr\big\{\bH(\bx)\bH_{\pi}^{-1}\bQ\bH_{\pi}^{-1}\bG_\pi\bH_{\pi}^{-1}\big\}\, .
\end{align} 
Then there exists $\pi_{\snr}$ achieving the minimum of the asymptotic error over non-reweighting schemes
$\rho(\pi_{\snr},1;\bQ)=\inf_{S\in\cuN}\rho(S;\bQ)$.
Further, if $\bH_{\pi_{\snr}}\succ \bfzero$ strictly, then 
$\pi_{\snr}$ takes the form
\begin{align}
    \pi_{\snr}(\bx) = \begin{cases}
    1 & \mbox{ \textup{if} }Z(\bx;\pi_{\snr}) > \lambda\, ,\\
    0 & \mbox{ \textup{if} }Z(\bx;\pi_{\snr}) < \lambda\, ,\\
    b(\bx)\in [0,1] & \mbox{ \textup{if} }Z(\bx;\pi_{\snr}) = \lambda\, .
    \end{cases}\label{eq:piNR}
\end{align}
where $\lambda$ and $b(\bx)$  are chosen  so that $\E\pi_{\snr}(\bx) = \gamma$.
The resulting  optimal asymptotic  error $\rho_{\snr}(\bQ)  = \rho(\pi_{\snr},1;\bQ)$  is
\begin{align}
\rho_{\snr}(\bQ) &=\frac{1}{\gamma} \Tr(\bG_{\pi_{\snr}}\bH_{\pi_{\snr}}^{-1}\bQ\bH_{\pi_{\snr}}^{-1}) 
\label{eq:VariationalOptBiased}\\
&= 
\frac{1}{\gamma}\inf_{\pi:\E\pi=\gamma} \Tr(\bG_{\pi}\bH_{\pi}^{-1}\bQ\bH_{\pi}^{-1}) \, .
\end{align}
\end{proposition}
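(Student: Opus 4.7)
The hypothesis $\E\{\nabla_{\btheta}L(\btheta_*;y,\bx)|\bx\}=\bzero$ forces $\nabla R_S(\btheta_*)=\bzero$ for \emph{every} non-reweighting scheme $S$, so Assumption~{\sf A1} of Proposition~\ref{propo:LowDimAsymptotics} is automatic uniformly across $\cuN$ and the asymptotic formula \eqref{eq:GeneralAsymp} applies. For a non-reweighting $S(\bx)\in\{0,1\}$ with $\prob(S(\bx)=1|\bx)=\pi(\bx)$ one has $\E S=\E S^2=\E\pi=\gamma$, $\bG_S=\bG_\pi$ and $\bH_S=\bH_\pi$, hence
\begin{align*}
\rho(\pi,1;\bQ) \;=\; \frac{1}{\gamma}\,F(\pi),\qquad F(\pi):=\Tr\!\big(\bG_\pi \bH_\pi^{-1}\bQ \bH_\pi^{-1}\big).
\end{align*}
Minimizing $\rho$ over $\cuN$ thus reduces to minimizing $F$ on $\cK:=\{\pi:\reals^d\to[0,1]\ \text{measurable},\ \E\pi=\gamma\}$, from which $\rho_{\snr}(\bQ)=F(\pi_{\snr})/\gamma$ and the infimum form of the last display are immediate consequences of the definition of $\rho_{\snr}$.

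\textbf{First variation and KKT.} Fix a candidate optimum $\pi$ with $\bH_\pi\succ\bzero$ and an admissible direction $\delta$ (so $\pi+\eps\delta\in\cK$ for small $\eps>0$). Using
\begin{align*}
\partial\bG_\pi = \frac{\E[\delta\bG]-\bG_\pi\,\E\delta}{\E\pi},\quad \partial\bH_\pi = \frac{\E[\delta\bH]-\bH_\pi\,\E\delta}{\E\pi},\quad \partial\bH_\pi^{-1}=-\bH_\pi^{-1}(\partial\bH_\pi)\bH_\pi^{-1},
\end{align*}
I expand $\partial F$ by the product rule on $F(\pi)=\Tr(\bG_\pi\bH_\pi^{-1}\bQ\bH_\pi^{-1})$ and use cyclicity of the trace together with the symmetry of $\bG_\pi,\bH_\pi,\bQ$ to collect the two $\bH$-derivative terms. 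The $\E\delta$ contributions then collapse via the identity $\Tr(\bH_\pi\cdot\bH_\pi^{-1}\bG_\pi\bH_\pi^{-1}\bQ\bH_\pi^{-1})=F(\pi)$ to leave the compact form
\begin{align*}
\partial F = \frac{1}{\E\pi}\,\E\big[\delta\cdot\big(F(\pi)-Z(\bx;\pi)\big)\big].
\end{align*}
Introducing a Lagrange multiplier $\mu$ for the equality $\E\pi=\gamma$ and KKT multipliers for the bilateral bounds $0\le\pi\le 1$, stationarity at $\pi_{\snr}$ forces $Z(\bx;\pi_{\snr})=\lambda$ wherever $0<\pi_{\snr}(\bx)<1$, $Z(\bx;\pi_{\snr})\ge \lambda$ on $\{\pi_{\snr}=1\}$, and $Z(\bx;\pi_{\snr})\le\lambda$ on $\{\pi_{\snr}=0\}$, where $\lambda:=F(\pi_{\snr})-\mu\gamma$. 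This is exactly \eqref{eq:piNR}; $\lambda$ and the fractional values $b(\cdot)$ on the level set $\{Z=\lambda\}$ are then fixed by the normalization $\E\pi_{\snr}=\gamma$.

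\textbf{Existence and main obstacle.} Existence of a minimizer I would obtain by weak-$\star$ compactness of $\cK$ inside the unit ball of $L^\infty(\prob)$: under the continuity and integrability hypotheses on $\bG(\cdot),\bH(\cdot)$, the linear maps $\pi\mapsto \E[\pi\bG],\E[\pi\bH]$ are weak-$\star$ continuous, inversion is smooth on the open cone $\{\bH_\pi\succ\bzero\}$, and therefore $F$ is continuous on that cone. The main obstacle is that $F$ is \emph{not} convex in $\pi$ because $\pi\mapsto\bH_\pi^{-1}$ is nonlinear, so a minimizing sequence could \emph{a priori} drift toward a limit with $\bH_{\pi_\infty}$ losing positive definiteness, where the formula for $F$ is not continuous. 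The proposition handles this by asserting existence and then deriving the bang-bang structure only under the explicit non-degeneracy hypothesis $\bH_{\pi_{\snr}}\succ\bzero$; removing that hypothesis would require a coercivity argument showing that $F(\pi_n)\to\infty$ whenever $\bH_{\pi_n}$ approaches the singular boundary (e.g.\ through spectral bounds on $\Tr(\bG_{\pi_n}\bH_{\pi_n}^{-1}\bQ\bH_{\pi_n}^{-1})$ when $\bQ$ charges the directions along which $\bH_{\pi_n}$ degenerates). The variational computation itself, once $\bH_{\pi_{\snr}}\succ\bzero$ is in hand, is a routine trace-derivative calculation.
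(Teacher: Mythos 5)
Your reduction and first-variation computation match the paper's proof (Appendix \ref{sec:ProofBiased}) in substance: the paper likewise specializes Eq.~\eqref{eq:GeneralAsymp} to non-reweighting schemes, perturbs along $\pi_t=(1-t)\pi_{\snr}+t\pi$, and reads off the linear term in $Z(\cdot;\pi_{\snr})$. Where you invoke KKT multipliers for the mass and box constraints, the paper instead runs an explicit exchange argument (trading mass between $\{Z\ge z_0\}$ and $\{Z<z_0\}$ inside $\{\eps<\pi_{\snr}<1-\eps\}$) to force $Z$ to be almost surely constant on the fractional set; the two routes are equivalent in content, and your derivative formula $\frac{1}{\E\pi}\E[\delta\,(F(\pi)-Z(\bx;\pi))]$ is correct and consistent with the orientation of the thresholds in \eqref{eq:piNR}. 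Your remark that the hypothesis $\E\{\nabla_{\btheta}L(\btheta_*;y,\bx)|\bx\}=\bzero$ makes {\sf A1} ``automatic'' is an overstatement (it only yields $\nabla R_S(\btheta_*)=\bzero$; uniqueness of the minimizer remains an assumption), but this does not affect the argument.

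The genuine gap is the existence claim. The proposition asserts \emph{unconditionally} that a minimizer $\pi_{\snr}$ exists; only the threshold characterization is conditioned on $\bH_{\pi_{\snr}}\succ\bzero$. Your weak-$\star$ compactness argument, as you yourself acknowledge, does not produce a minimizer, because the objective as you define it is only continuous on $\{\bH_\pi\succ\bzero\}$ and a minimizing sequence could degenerate there; writing that ``the proposition handles this by asserting existence'' misreads the statement --- existence is a conclusion to be proved, not a hypothesis. The paper closes exactly this hole: it extends $\rho(\pi;\bQ)$ to $\E\{\pi(\bx)\bH(\bx)\}\succeq\bzero$ as the $\lambda\to 0^+$ limit of the ridge-regularized expression (Eq.~\eqref{eq:NRrhoinf}), regards the objective as a functional of the measure $\nu(\de\bx)=\pi(\bx)\prob(\de\bx)/\gamma$, whose Radon--Nikodym derivative with respect to $\prob$ is bounded by $1/\gamma$ (a tight set, compact by Prokhorov, and closed under weak limits by Portmanteau), and argues lower semicontinuity of the extended functional, so the infimum is attained; no coercivity estimate is needed because the extension is simply allowed to be large or infinite on the degenerate boundary, and the variational characterization is then derived only at a nondegenerate minimizer. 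To complete your proof you should either adopt this lsc-extension argument (your weak-$\star$ $L^\infty$ compactness can replace Prokhorov) or otherwise justify attainment of the infimum; the rest of your argument stands.
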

In many cases of interest, the set $\{\bx : Z(\bx;\pi_*) = \lambda\}$ has zero measure
(for instance, this is typically the case if the distribution of $\bx$ is absolutely continuous with respect to Lebesgue).
In this case, the optimal $\pi$ selects the samples deterministically to be those
satisfying $Z(\bx_i;\pi_*) > \lambda$.  

In some cases, we can interpret $Z(\bx_i;\pi_*)$ as a score measuring the uncertainty  in predicting the label of sample $\bx_i$
(see examples below). Then this proposition
establishes that, under the non-reweighing scheme (and in the low-dimensional asymptotics) we should select the `hardest' $n$
examples.
\begin{remark}
    The condition $\E\{\nabla_{\btheta}L(\btheta_*;y,\bx)|\bx\}=\bzero$
    (almost surely) could be eliminated at the cost of  enforcing the constraint $\E\{\nabla_{\btheta}L(\btheta_*;y,\bx)\pi(\bx)\}=\bzero$. 
    This would result in a more complicated expression for the data selection rule: we will not pursue this generalization. 
\end{remark}

In the next two subsections, we will specialize non-reweighting subsampling
to linear models and generalized linear models. 
In particular, the examples presented in Section \ref{sec:BiasedLinear}
prove that biased data selection can beat unbiased selection by an arbitrarily large factor, as stated formally below.
\begin{theorem}\label{thm:BiasedBeats}
Consider least-square regression under the model $y_i=\<\btheta_*,\bx_i\>+\eps_i$,
with  $\E(\eps_i|\bx_i)=0$, $\E(\eps_i^2|\bx_i)=\tau^2$. For any $C$, there is
a distribution of the $\bx_i$'s and a $\gamma\in (0,1)$ such that 
$\rho_{\sunb}(\bSigma;\gamma)/\rho_{\snr}(\bSigma;\gamma)>C$.
\end{theorem}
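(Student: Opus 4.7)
The plan is to build a simple one-dimensional example parametrized by a small parameter $\eps>0$, in which the non-reweighting scheme can essentially ``discard a heavy-tailed but statistically unhelpful cluster of samples,'' while the unbiased scheme is forced to spend its budget on that cluster because its leverage score is enormous. I take $d=1$, fix $\tau^2>0$, and let $x$ be supported on two points:
\begin{align}
    x = 1 \text{ with probability } 1-\eps,\qquad x=\eps^{-1/2}\text{ with probability }\eps.
\end{align}
Then $\Sigma=\E[x^2]=2-\eps$, and by Example \ref{example:FirstLinear} the unbiased score is $Z(x)=\tau^2 x^2/\Sigma$, taking the two values $Z(1)=\tau^2/(2-\eps)$ and $Z(\eps^{-1/2})=\tau^2/(\eps(2-\eps))$. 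The biased/non-reweighted formulas of Proposition \ref{propo:Biased} specialize, since $\bG(x)=\tau^2\bH(x)=\tau^2x^2$, to
\begin{align}
\rho_{\snr}(\Sigma;\gamma)=\frac{\tau^2\Sigma}{\E[x^2\pi(x)]}\;,\qquad \rho_{\sunb}(\Sigma;\gamma)=\E\!\left[\max\!\big(Z^{1/2}/c(\gamma),\,Z\big)\right].
\end{align}

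Next I would evaluate both sides in the regime $\eps\ll\gamma$. For $\rho_{\snr}$: writing $\pi(1)=p_1$, $\pi(\eps^{-1/2})=p_2$, the constraint $\E\pi=\gamma$ reads $(1-\eps)p_1+\eps p_2=\gamma$, and the denominator $\E[x^2\pi]=(1-\eps)p_1+p_2=\gamma+(1-\eps)p_2$ is maximized at $p_2=1$, $p_1=(\gamma-\eps)/(1-\eps)$ (legitimate for $\gamma>\eps$). This yields $\rho_{\snr}=\tau^2(2-\eps)/(1+\gamma-\eps)\to 2\tau^2/(1+\gamma)$ as $\eps\downarrow 0$. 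For $\rho_{\sunb}$: checking when capping occurs, one finds that as soon as $\gamma\gtrsim\sqrt\eps$ the outlier saturates at $\pi(\eps^{-1/2})=1$ while $\pi(1)=(\gamma-\eps)/(1-\eps)$. A direct computation of $\E[Z/\pi]$ then gives
\begin{align}
\rho_{\sunb}(\Sigma;\gamma)=\frac{\tau^2}{2-\eps}\left[\frac{(1-\eps)^2}{\gamma-\eps}+1\right]\ \xrightarrow[\eps\downarrow 0]{}\ \frac{\tau^2(1+\gamma)}{2\gamma}.
\end{align}

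Taking the ratio gives $\rho_{\sunb}/\rho_{\snr}\to (1+\gamma)^2/(4\gamma)$ as $\eps\downarrow 0$, which diverges as $\gamma\downarrow 0$. To finish: given $C>0$, choose $\gamma=\gamma(C)\in(0,1)$ small enough that $(1+\gamma)^2/(4\gamma)>2C$, and then $\eps=\eps(C)\in(0,\gamma/2)$ small enough (using continuity of the two finite-$\eps$ expressions above in $\eps$) that the actual ratio still exceeds $C$. This exhibits the distribution and subsampling fraction required by the theorem.

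The main obstacle, which I would address up front, is simply confirming that the hypotheses of Propositions \ref{propo:LowDimAsymptotics}, \ref{propo:Unbiased} and \ref{propo:Biased} are met for this discrete two-point distribution: positivity of $\bH$ and of $\bH_{\pi_{\snr}}$ (immediate since both atoms receive positive mass under the selected $\pi$), continuity of $\bx\mapsto\bG(\bx),\bH(\bx)$ as functions on $\reals$ (automatic, as both are $x\mapsto\tau^2x^2$ and $x\mapsto x^2$), and the conditional gradient identity $\E[\nabla_\btheta L(\btheta_*;y,x)\mid x]=0$ (which holds by $\E[\eps\mid x]=0$). A minor subsidiary issue is ensuring one is in the ``capped'' regime for the unbiased scheme; taking $\eps$ much smaller than $\gamma$ puts us there. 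If one wished to avoid discrete atoms entirely, the same construction can be smoothed by replacing each atom with a narrow Gaussian of width $\delta$ and passing $\delta\downarrow 0$, which does not affect the limiting computation.
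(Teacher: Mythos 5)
Your computation is correct and establishes the theorem, but by a genuinely different construction than the paper's. The paper also reduces to the one--dimensional formulas $\rcoeff_{\sunb}(\Sigma;\gamma)=\tau^2(\E|X|)^2/(\gamma\,\E X^2)$ and $\rcoeff_{\snr}(\Sigma;\gamma)=\tau^2\E(X^2)/\E(X^2\bfone_{|X|\ge r(\gamma)})$, but it works with a continuous bounded-support density $\sP_X(\de x)\propto |x|^{-\alpha}\bfone_{1\le|x|\le X_M}$ in the regime $\gamma\le \E|X|/X_M$ where the unbiased rule is \emph{uncapped}; the ratio then tends, as $\gamma\to0$, to $\E(X^2)^2/((\E|X|)^2X_M^2)$, and the divergence is produced by sending $X_M\to\infty$. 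You instead use a two-atom design and work in the \emph{capped} regime (the outlier saturates at $\pi=1$), getting $\rho_{\sunb}/\rho_{\snr}\to(1+\gamma)^2/(4\gamma)$ as $\eps\downarrow0$ and letting $\gamma\downarrow0$ do the work; your example has the additional appeal that $\rho_{\snr}$ stays bounded ($\le 2\tau^2$) while every unbiased scheme is forced to scale like $1/\gamma$. I verified your formulas: $\E[x^2\pi]$ is maximized at $p_2=1$, giving $\rho_{\snr}=\tau^2(2-\eps)/(1+\gamma-\eps)$, and the capped unbiased optimum $\pi(1)=(\gamma-\eps)/(1-\eps)$, $\pi(\eps^{-1/2})=1$ is consistent precisely when $\gamma\ge\eps+(1-\eps)\sqrt{\eps}$, so your ``$\eps$ much smaller than $\gamma$'' should really be read as $\eps\lesssim\gamma^2$ --- harmless, since you fix $\gamma$ first and then shrink $\eps$. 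Two small cosmetic points: (i) your heuristic framing is backwards --- both schemes select the heavy atom with probability one; the gap actually comes from the $1/\pi\approx1/\gamma$ reweighting of the bulk (equivalently, the non-reweighting scheme increases the curvature $\bH_\pi$ while unbiasedness pins $\bH_S=\bH$), which is exactly the mechanism the paper emphasizes; (ii) your design is not centered, whereas the paper's Example defines $\bSigma$ under a centering convention --- either read $\Sigma=\E[x^2]$ (the correct matrix for prediction error) or symmetrize the atoms to $\pm1$, $\pm\eps^{-1/2}$, which changes nothing since all quantities depend on $x^2$ only.
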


\subsubsection{Linear regression (Proof of Theorem \ref{thm:BiasedBeats})}
\label{sec:BiasedLinear}

Continuing from Example~\ref{example:FirstLinear}, note that the condition 
$\E\{\nabla_{\btheta}L(\btheta_*;y,\bx)|\bx\}=\bzero$ holds.
We now have
$\bG(\bx)=\tau^2\bx\bx^{\sT}$, $\bH(\bx)=\bx\bx^{\sT}$. Since rescaling $\bG$ does not change
the selection rule, we can redefine $\bG(\bx)=\bx\bx^{\sT}$, whence $\bH_{\pi}=\bG_{\pi}=\bSigma_{\pi}$,
where $\bSigma_{\pi}$ 
is the population covariance of the subsampled feature vectors.

A simple calculation shows 
\begin{align}
Z(\bx;\pi)& = \<\bx,\bSigma_{\pi}^{-1}\bQ\bSigma_{\pi}^{-1}\bx\>\, ,\label{eq:ModifiedLeverage}\\
    \pi_{\snr}(\bx) &= \begin{cases}
    1 & \mbox{ if } \<\bx,\bSigma_{\pi_{\snr}}^{-1}\bQ\bSigma_{\pi_{\snr}}^{-1}\bx\> > \lambda\, ,\\
    0 & \mbox{ if } \<\bx,\bSigma_{\pi_{\snr}}^{-1}\bQ\bSigma_{\pi_{\snr}}^{-1}\bx\>< \lambda\,  .
    \end{cases}\label{eq:piLR_NR}
\end{align}
In other words, this scheme selects all data that lay outside a certain ellipsoid.
The shape of the ellipsoid is determined self-consistently by the covariance 
$\bSigma_{\pi_{\snr}}$ of points outside the ellipsoid.

Let emphasize two differences with respect to the standard leverage-score approach of Eq.~\eqref{eq:GeneralizedLeverage}:
\begin{enumerate}
\item[$(i)$]~As for general non-reweighing schemes, selection is essentially deterministic, $\pi_{\snr}(\bx)\in\{0,1\}$ (in particular, if the distribution of $\bx$ has a density, 
then $\pi_{\snr}(\bx)\in\{0,1\}$ with probability one).
\item[$(ii)$]~The original covariance $\bSigma$ is replaced by the covariance of selected data $\bSigma_{\pi}$. As anticipated in Section \ref{sec:Summary}, the selected set depends on $\gamma$ 
in a nontrivial way.
\end{enumerate}

Given that the leverage score of datapoint $\bx_i$ measures how different is $\bx_i$ from the other data,
the modified score of Eq.~\eqref{eq:ModifiedLeverage} can be interpreted as measuring how different is 
$\bx_i$ from selected data.

\begin{example}[One-dimensional covariates]
In order to get a more concrete understanding of the difference
with respect to unbiased data selection, consider 
one-dimensional  covariates $x_i \sim\sP_{X}$ with $\sP_{X}$ of mean zero.
We study prediction error, i.e. $\bQ=\bSigma$.
Let  $X_M$ be the supremum of the support of $\sP_{|X|}$, 
which we assume finite, and $X$ a sample from $\sP_X$.
In this case $Z(x) =\tau^2 x^2/\Sigma$. Provided $\gamma\le \E|X|/X_M$, we have $c(\gamma) = \gamma\E(X^2)^{1/2}/(\tau \E|X|)$
and the  optimal 
asymptotic error for unbiased subsampling is 
\begin{align}
\rcoeff_{\sunb}(\Sigma) = \frac{\tau^2}{\gamma}\cdot \frac{(\E|X|)^2}{\E(X^2)}\, .
\end{align}

On the other hand, assuming $\sP_X$ has a density, the optimal non-reweighting rule takes the form $\pi_*(x) = \bfone(|x|\ge r(\gamma))$.
The coefficient $r=r(\gamma)$ is fixed by $\gamma = \prob(|X|\ge r)$.
The optimal 
asymptotic error for non-reweighting subsampling is
\begin{align}
 \rcoeff_{\snr}(\Sigma)= 
\tau^2\cdot \frac{\E(X^2)} {\E(X^2\bfone_{|X|\ge r})}\, .
\end{align}

The ratio of biased to unbiased error then takes the form
\begin{align}
\frac{\rcoeff_{\snr}(\Sigma)}{\rcoeff_{\sunb}(\Sigma)}= \frac{\E(X^2)^2\prob(|X|\ge r)} 
{(\E|X|)^2\E(X^2\bfone_{|X|\ge r})}\, ,\;\;\;\;\;\; \gamma\le \frac{\E|X|}{X_M}\, .
\end{align}

 It is easy to come up with examples in which this ratio is smaller than one. For instance if 
 $\sP_X = \Unif([-X_M,X_M])$, then
\begin{align}
\frac{\rcoeff_{\snr}(\Sigma)}{\rcoeff_{\sunb}(\Sigma)} = \frac{4\gamma} 
{3(1-(1-\gamma)^3)}\, ,\;\;\;\;\;\;\; \gamma\le \frac{1}{2}\, .
\end{align}
More generally, as $\gamma\to 0$, we get
\begin{align}
\frac{\rcoeff_{\snr}(\Sigma)}{\rcoeff_{\sunb}(\Sigma)}= \frac{\E(X^2)^2} 
{(\E|X|)^2X_M^2}+o(1)\,  ,\;\;\; \gamma\to 0,
\end{align}
Notice that this ratio is always smaller than one (by H\"older's inequality) and can be arbitrarily small.
For instance $\sP_X(\de x)= C_{M,\alpha}|x|^{-\alpha}\bfone_{1\le |x|\le X_M}$, $\alpha>3$
then the above ratio is 
\begin{align}
\frac{\rcoeff_{\snr}(\Sigma)}{\rcoeff_{\sunb}(\Sigma)}= 
\Big(\frac{\alpha-2}{\alpha-3} \cdot \frac{1-X_M^{-\alpha+3}}{1-X_M^{-\alpha+2}}\Big)^2 \frac{1}{X_M^2} +o(1)\, .
\end{align}
To be concrete, for $\alpha=4$, $X_M=10$, unbiased  is suboptimal by a factor larger than $30$.
My choosing $X_M$ large, we can make this ratio arbitrarily small,
hence proving Theorem \ref{thm:BiasedBeats}.
\end{example}

\begin{example}[Elliptical covariates]
The one-dimensional example above is easily generalized to higher dimensions. 
Consider $\bx_i = \bSigma^{1/2}\bz_i$ where $\bz_i$ are spherically symmetric, namely 
$\bz_i= r_i\bu_i$ with $(r_i,\bu_i)\sim\sP_R \otimes \Unif(\S^{d-1}(\sqrt{d}))$.
If we consider test error (and therefore $\bQ=\bSigma$), then it is a symmetry argument 
shows that, for optimal $\pi$ the modified leverage score \eqref{eq:ModifiedLeverage}
coincides with the original one
\begin{align}
Z(\bx;\pi_*) = \<\bx,\bSigma^{-1}\bx\> = \|\bz\|^2_2 = r^2\, .
\end{align}
We therefore recover the one-dimensional case with $\sP_{|X|}$ replaced by $\sP_R$.
\end{example}

\subsubsection{Generalized linear models}
\label{sec:GLM2}

\begin{figure}
\begin{center}
\includegraphics[width=\linewidth]{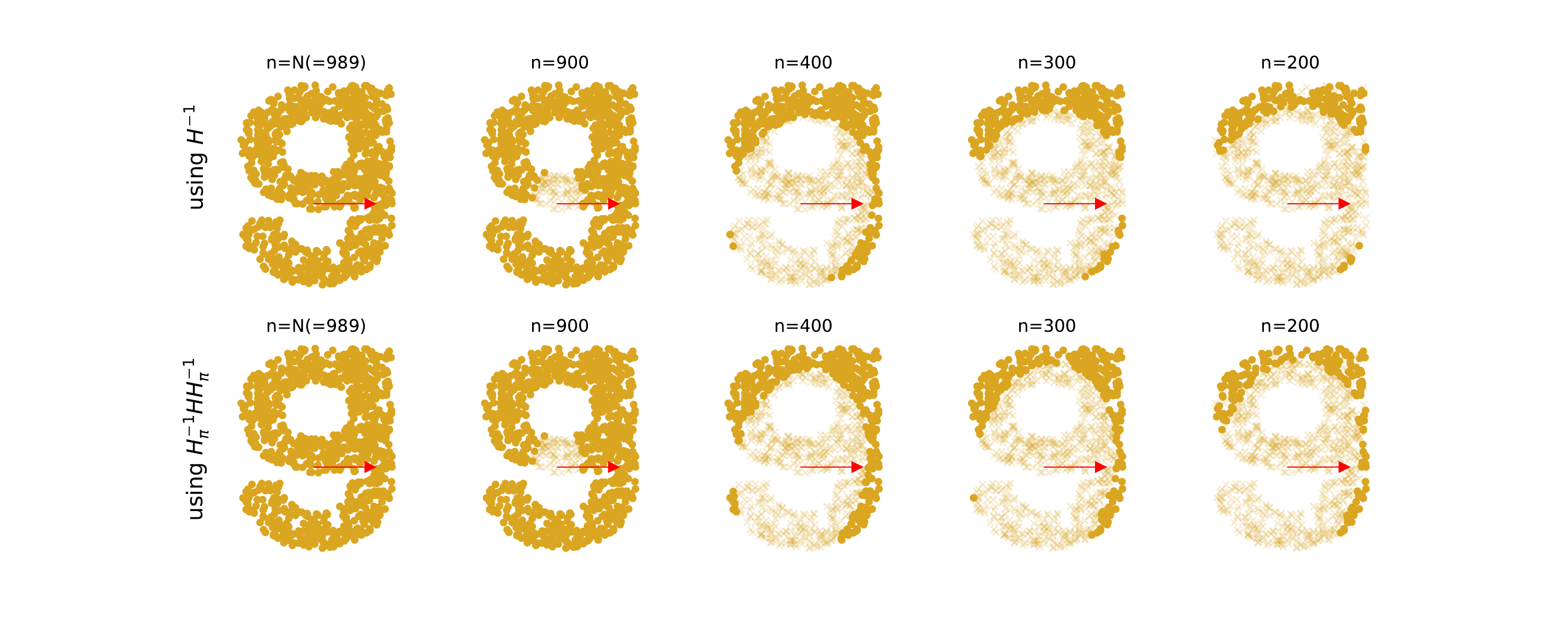}
\end{center}
\vspace{-1cm}
\caption{Data selection in a logistic model
(here we optimize test error with respect to log-loss). Covariates are bi-dimensional and uniformly distributed  
over the letter {\bf g}. The red arrow corresponds to the true parameter vector $\btheta^*$.
Selected points are dark yellow (circles) and non selected ones are light yellow (crosses). Top row: selecting
data with largest value of the influence function. Bottom: optimal 
 non-reweighting selection scheme.}
\label{fig:Opt-Biased}
\end{figure}
Continuing from Example \ref{ex:FirstGLM}, we note that $\E\{y|\bx\}=\phi'(\<\btheta_*,\bx\>)$. Therefore $\E\{\nabla_{\btheta}L(\btheta_*;y,\bx)|\bx\}=\bzero$.
Further 
\begin{align}
\bG(\bx) = \E\{(y-\phi'(\<\btheta_*,\bx\>))^2|\bx\}\cdot \bx\bx^{\sT}= \phi''(\<\btheta_*,\bx\>)
\cdot \bx\bx^{\sT} =\bH(\bx) \, .
\end{align}
whence we have the following generalization of the score \eqref{eq:ModifiedLeverage}:
\begin{align}
Z(\bx;\pi)= \phi''(\<\btheta_*,\bx\>) \cdot \<\bx,\bH_{\pi}^{-1}\bQ\bH_{\pi}^{-1}\bx\>\, .\label{eq:ModifiedLeverage_GLM}
\end{align}
Note that this is similar to the score derived in the unbiased case, cf. Eq.~\eqref{eq:GLM_Score_Unbiased},
with two important differences that we already encountered for linear regression: 
$(i)$~The selection process is essentially deterministic: a datapoint is selected if $Z(\bx_i;\pi)>\lambda$ and not selected if $Z(\bx_i;\pi)<\lambda$;
$(ii)$~The score is computed with respect to the selected data, namely $\bH$ is replaced by $\bH_{\pi}$.
The effect of replacing $\bH$ by $\bH_{\pi}$ is illustrated
on a toy data distribution in  Figure \ref{fig:Opt-Biased}. 

We observe that at high $\gamma$, the two selection procedures are very similar. 
In contrast, at low $\gamma$, selection based on $\bH$ is always biased towards selecting ``hard samples" (in directions roughly orthogonal to $\btheta_*$), whereas selection based on $\bH_{\pi}$ takes into account geometry of selected subset 
and keeps samples in a more diverse set of directions.

We also note that, for GLMs, the optimal asymptotic error coefficient 
\eqref{eq:VariationalOptBiased} takes the particularly simple form
\begin{align}\label{eq:RhoGLM_Biased}
\rho_{\snr}(\bQ) &=
\frac{1}{\gamma}\inf_{\pi:\, \E\pi=\gamma} \Tr(\bH_{\pi}^{-1}\bQ) \, .
\end{align}
 In the case of well-specified GLMs, biased data selection cannot 
improve over full sample estimation. Indeed, $\rho_{\snr}(\bQ) =
\Tr(\E\{\bH(\bx)\pi_{\snr}(\bx)\}^{-1}\bQ)$, and 
$\E\{\bH(\bx)\pi_{\snr}(\bx)\}\preceq \E\{\bH(\bx)\}$.
On the other hand, it is clear that $\rho_{\snr}(\bQ)\le \rho_{\srand}(\bQ)$
(because  $\rho_{\srand}(\bQ)$ corresponds to a special choice of $\pi$ on the
right-hand side of Eq.~\eqref{eq:RhoGLM_Biased}).
Further, the inequality is strict 
(namely, $\rho_{\snr}(\bQ)< \rho_{\srand}(\bQ)$) except (possibly) on a the degenerate
case in which $Z(\bx;\pi)$  is constant on a set of positive measure.

%
%
\subsection{Data selection can beat full-sample estimation}
\label{sec:NonMono}

\begin{figure}
\label{fig:full_sample}
\begin{center}
\includegraphics[width=0.5\linewidth]{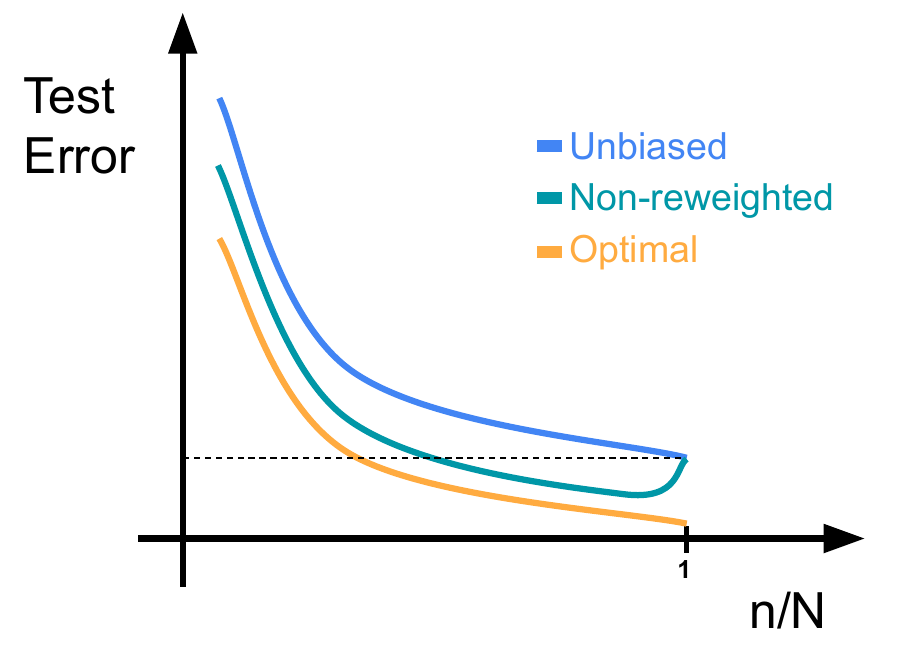}
\end{center}
\vspace{-0.5cm}
\caption{Cartoon of the monotonicity properties of various data-selection schemes, with the selected
sample size.}
\label{fig:Opt-Biased-Cartoon}
\end{figure}
One striking phenomenon illustrated in Figure \ref{fig:Summary} is that
data selection can reduce generalization error. Namely, running empirical risk minimization
(ERM)
with respect to a selected subset of $n<N$  samples out of $N$ total produces
a better model than running ERM on the full dataset. This is the case even when data selection
was based on a relatively weak surrogate, e.g. one trained on $n_{\surr}$ random samples,
so that $n_{\surr}+n$ is significantly below $N$.

Is this non-monotonic behavior compatible with theoretical expectations?
The answer depends on the data selection scheme (see Figure \ref{fig:Opt-Biased-Cartoon} for a cartoon illustration):
\begin{itemize}
\item For unbiased (reweighted) data selection, we saw in Section \ref{sec:UnbiasedFirst} 
that the asymptotic test error is always monotone in the size of the subsample $n$
(at least within the low-dimensional setting studied here). In particular,
full sample data ERM cannot have worse test error than ERM on selected data.
\item Consider then the optimal data selection scheme with reweighting.
We claim that this is also monotone.
Indeed given target sample sizes $n_1<n_2$, one
can simulate data selection at sample size $n_1$ by first selecting $n_2$ samples and then setting to $0$
the weights of $n_2-n_1$ samples. 

However, for $n=N$, this scheme does not reduce to unweighted ERM, but to optimally weighted ERM.
As a consequence, this monotonicity property does not imply that original unweighted
ERM on the full sample has better test error than weighted ERM on a data-selected subsample.
\item The main result of this section will be a proof that non-monotonicity is possible  in a neighborhood of $\gamma=1$ for non-reweighting schemes.
\end{itemize}
\begin{theorem}\label{thm:GeneralDerivative}
Under the   setting of Proposition \ref{propo:Biased}, further assume 
$\bH \succ \bfzero$, $\E\{\|\bG(\bx)\|_{\op}^4\}<\infty$,
$\E\{\|\bH(\bx)\|_{\op}^4\}<\infty$. Then, there exists a constant $C$ such that, for any $\lambda\in \reals$ such that $\prob(Z_{\bQ}(\bx;1)<\lambda)>0$, we have
\begin{align}
\rho_{\snr}(\bQ;\gamma) &\le \rho_{\snr}(\bQ;1)-(1-\gamma)
\E\big\{Z_{\bQ}(\bx;1)\big |Z_{\bQ}(\bx;1)<\lambda\big\} +C(1-\gamma)^{3/2}\, ,
\label{eq:MainGenDeriv}\\
Z_{\bQ}(\bx;1) &:= -\Tr\big\{\bG(\bx)\bH^{-1}\bQ\bH^{-1}\big\}+
2 \Tr\big\{\bH(\bx)\bH^{-1}\bQ\bH^{-1}\bG\bH^{-1}\big\}\, .
\end{align}
Further
\begin{itemize}
\item[$(a)$] If $\partial_{\gamma}\rho_{\snr}(\bQ;1)=-\ess\inf Z_{\bQ}(\bx;1)$. (Note that this is potentially equal
to $+\infty$.)
\item[$(b)$] If $\prob(Z_{\bQ}(\bx;1)<0)>0$, then there exists $\gamma_0=\gamma_0(d)<1$  such that
\begin{align}
\gamma\in (\gamma_0(d),1)\;\; \Rightarrow \;\;\; \rho_{\snr}(\bQ;\gamma) <\rho_{\snr}(\bQ;1)\, .
\label{eq:NonMonoCLaim}
\end{align}
\end{itemize}
\end{theorem}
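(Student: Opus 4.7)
My strategy is to prove the upper bound \eqref{eq:MainGenDeriv} by exhibiting an explicit feasible non-reweighting scheme $\pi_\gamma$ and evaluating $\rho(\pi_\gamma,1;\bQ)$ via a first-order perturbation around $\gamma=1$. Parts $(a)$ and $(b)$ will then follow by specializing the choice of the threshold $\lambda$.

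First I would construct the candidate. Fix $\lambda$ with $p_\lambda := \prob(Z_{\bQ}(\bx;1)<\lambda)>0$ and, for $\gamma\in[1-p_\lambda,1]$, set
\begin{align}
\pi_\gamma(\bx) = \bfone\{Z_{\bQ}(\bx;1)\ge \lambda\} + \Bigl(1 - \tfrac{1-\gamma}{p_\lambda}\Bigr)\bfone\{Z_{\bQ}(\bx;1)<\lambda\},
\end{align}
so that $\pi_\gamma\in[0,1]$ and $\E\pi_\gamma=\gamma$. Writing $\pi_\gamma = 1-\eps\,\psi$ with $\eps=1-\gamma$ and $\psi(\bx)=p_\lambda^{-1}\bfone\{Z_{\bQ}(\bx;1)<\lambda\}$ (note $\E\psi=1$), Proposition \ref{propo:Biased} gives $\rho_{\snr}(\bQ;\gamma)\le \rho(\pi_\gamma,1;\bQ)=:f(\eps)$. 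Expanding $\bH_{\pi_\gamma}=\bH-\eps\,\E[\bH(\bx)\psi(\bx)]$ via the Neumann series and similarly for $\bG_{\pi_\gamma}$, a direct calculation (the same one that produces the score $Z_{\bQ}(\bx;\pi)$ in Proposition \ref{propo:Biased}) gives
\begin{align}
f(\eps) = \rho_{\snr}(\bQ;1) + \eps\,\E\bigl[\psi(\bx)\, Z_{\bQ}(\bx;1)\bigr] + R(\eps),
\end{align}
where the linear coefficient simplifies to $\E[Z_{\bQ}(\bx;1)\mid Z_{\bQ}(\bx;1)<\lambda]$ by the definition of $\psi$. The remainder $R(\eps)$ collects the $O(\eps^2)$ and higher terms; I would bound $|R(\eps)|\le C\eps^{3/2}$ using Cauchy--Schwarz together with the $L^4$ moment hypotheses $\E\|\bG(\bx)\|_{\op}^4,\E\|\bH(\bx)\|_{\op}^4<\infty$, which control $\|\E[\bG(\bx)\psi(\bx)]\|_{\op}$ and $\|\E[\bH(\bx)\psi(\bx)]\|_{\op}$ (and their products appearing in the higher-order terms) in terms of $p_\lambda$; since $\rho_{\snr}(\bQ;\gamma)\ge 0$, the bound is trivial for the regime in which the first-order term dominates $\rho_{\snr}(\bQ;1)$, and is a genuine Taylor estimate for $\eps$ small enough that $\eps\|\bH^{-1}\E[\bH(\bx)\psi(\bx)]\|_{\op}\ll 1$. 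This proves \eqref{eq:MainGenDeriv}.

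For part $(a)$, the upper bound immediately yields
\begin{align}
\limsup_{\gamma\uparrow 1}\frac{\rho_{\snr}(\bQ;\gamma)-\rho_{\snr}(\bQ;1)}{\gamma-1}\ \ge\ -\,\E\bigl[Z_{\bQ}(\bx;1)\,\big|\,Z_{\bQ}(\bx;1)<\lambda\bigr]
\end{align}
for every admissible $\lambda$, and sending $\lambda\downarrow\ess\inf Z_{\bQ}(\cdot;1)$ gives $\partial_\gamma^-\rho_{\snr}(\bQ;1)\ge -\ess\inf Z_{\bQ}(\bx;1)$. For the matching upper bound I would invoke the KKT structure of Proposition \ref{propo:Biased}: for any feasible $\pi$ with $\E\pi=\gamma$, the same first-order expansion of $f$ at $\pi=1$ (which here is exact at leading order for any $\psi$ with $\E\psi=1$, $\psi\ge 0$, $\psi\le(1-\gamma)^{-1}$) gives $f(\pi)-f(1)\ge (1-\gamma)\inf_\psi \E[\psi(\bx) Z_{\bQ}(\bx;1)]+o(1-\gamma)$, and the infimum over admissible densities $\psi$ equals $\ess\inf Z_{\bQ}(\bx;1)$. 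Part $(b)$ is then the simplest consequence: if $\prob(Z_{\bQ}(\bx;1)<0)>0$, pick any $\lambda$ for which $\E[Z_{\bQ}\mid Z_{\bQ}<\lambda]<0$ and read off from \eqref{eq:MainGenDeriv} that the right-hand side is strictly negative whenever $(1-\gamma)^{1/2}<|\E[Z_{\bQ}\mid Z_{\bQ}<\lambda]|/C$, giving an explicit $\gamma_0(d)$.

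The main obstacle is the uniform control of the remainder $R(\eps)$: one needs to expand products of matrix inverses whose norms depend on $p_\lambda$, and to trade off the dependence of the implicit constants on $\lambda$ with the strength of the $L^4$ moment assumptions — this is where the weakening from the expected $O(\eps^2)$ to the stated $O(\eps^{3/2})$ most likely originates. Secondarily, the matching lower bound needed in part $(a)$ requires a careful duality/KKT argument handling the box constraint $\pi\in[0,1]$ near the boundary $\gamma=1$.
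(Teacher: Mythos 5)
Your proposal is correct and follows essentially the same route as the paper: the paper also takes the perturbation $\opi(\bx)=\frac{1-\gamma}{\prob(Z_{\bQ}(\bx;1)<\lambda)}\bfone\{Z_{\bQ}(\bx;1)<\lambda\}$, Taylor-expands the variational formula \eqref{eq:VariationalOptBiased} with first-order term $\E\{Z_{\bQ}(\bx;1)\opi(\bx)\}$, and bounds the remainder by $C(1-\gamma)^{3/2}$ via H\"older and the fourth-moment hypotheses (your iterated Cauchy--Schwarz gives the same $(1-\gamma)^{3/4}$ control of $\|\E\{\bG(\bx)\opi(\bx)\}\|_{\op}$), then deduces $(b)$ directly and $(a)$ by letting $\lambda\downarrow\ess\inf Z_{\bQ}$ for one direction and applying the same uniform expansion to the optimal $\pi_{\snr,\gamma}$ for the other. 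The only difference is presentational: for the matching bound in $(a)$ you invoke a KKT/duality argument, whereas the pointwise inequality $Z_{\bQ}(\bx;1)\ge \ess\inf Z_{\bQ}$ applied to $\opi_{\snr,\gamma}\ge 0$ with $\E\opi_{\snr,\gamma}=1-\gamma$ already suffices, exactly as in the paper.
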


We next construct specific cases in which $\prob(Z_{\bQ}(\bx;1)<0)>0$.
We consider misspecified linear model. Namely $(y_i,\bx_i)\in\reals\times \reals^d$
with
\begin{align}
\prob(y_i\in A|\bx_i) = \sP(A|\<\btheta_0,\bx_i\>)\, ,\label{eq:Misspecified}
\end{align}
where $\btheta_0\in\reals^d$ is a fixed  vector. 
We will show that both in the case of linear regression and logistic regression, 
there are choices of the conditional distribution $\sP$, for which
$\gamma\mapsto \rho(\bQ;\gamma)$ is strictly increasing near $\gamma =1$.
In this cases, training on a selected subsample provably helps.
\begin{theorem} \label{thm:NonMono}
Assume $\bx_i\sim\normal(0,\id_d)$, and $y_i$ distributed according to
Eq.~\eqref{eq:Misspecified}. Let $\bQ=\bH:=\nabla^2 R(\btheta_*)$,
(as before, $\btheta_*:=\argmin_{\btheta}R(\btheta)$).
Then in each of the following cases, there exists $ \sP(A|t)$ such that
Eq.~\eqref{eq:NonMonoCLaim} holds:
\begin{enumerate}
\item[$(a)$] Least squares, i.e. $L(\btheta;y,\bx) = (y-\<\btheta,\bx\>)^2/2$, $L_{\stest}=L$.
\item[$(b)$] Logistic regression, whereby $y_i\in\{+1,-1\}$, 
$L(\btheta;y_i,\bx_i) = -y_i\<\btheta,\bx_i\>+\phi(\<\btheta,\bx_i\>)$, 
$\phi(t) = \log(e^t+e^{-t})$,  $L_{\stest}=L$.
\end{enumerate}
\end{theorem}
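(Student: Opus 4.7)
The plan is to apply Theorem~\ref{thm:GeneralDerivative}(b): in each case it suffices to exhibit a single choice of the kernel $\sP(A|t)$ under which $\prob(Z_\bH(\bx;1)<0)>0$. By rotation invariance of the standard Gaussian we may assume without loss of generality that $\btheta_0=\bfe_1$, so that the law of $y$ depends on $\bx$ only through $x_1$. If the construction is symmetric in $x_1\mapsto -x_1$, a symmetry argument forces $\btheta_*=t_*\bfe_1$, and then both $\bH(\bx)$ and $\bG(\bx)$ (as well as the population matrices $\bH=\E\bH(\bx)$ and $\bG=\E\bG(\bx)$) are diagonal with a distinguished $(1,1)$-entry and an isotropic block of size $d-1$ on $\bfe_1^\perp$. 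This reduces everything to one-dimensional integrals in $x_1$.

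For part~(a), I would take $y\mid\bx\sim\normal(\<\btheta_0,\bx\>,\, v(x_1))$ for some smooth variance profile $v\colon\reals\to(0,\infty)$. Then $\btheta_*=\btheta_0$ automatically, $\bH=\id_d$, and $\bG(\bx)=v(x_1)\,\bx\bx^{\sT}$. Substituting into the definition of $Z_\bH(\bx;1)$, and using that $\E[v(X_1)\bx\bx^{\sT}]$ is diagonal with $(1,1)$-entry $\E[v(X_1)X_1^2]$ and other diagonal entries $\E[v(X_1)]$, a short computation gives
\[
Z_\bH(\bx;1)=-v(x_1)\|\bx\|^2+ 2\E[v(X_1)X_1^2]\,x_1^2 + 2\E[v(X_1)]\,(\|\bx\|^2-x_1^2).
\]
Evaluating at $x_1=0$ yields $\|\bx\|^2\bigl(2\E[v(X_1)]-v(0)\bigr)$. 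Choosing $v$ with a narrow spike at the origin, e.g.\ $v(t)=1+M\exp(-t^2/(2\sigma^2))$ with $M$ large and $\sigma$ small, forces $v(0)=1+M$ while $\E[v(X_1)]=1+M\sigma/\sqrt{1+\sigma^2}$, so $v(0)>2\E[v(X_1)]$ strictly. By continuity of $Z_\bH(\,\cdot\,;1)$ the negativity persists on an open set of positive Lebesgue measure.

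For part~(b) the strategy is identical but the algebra is heavier. Writing $m(x_1):=\E[y\mid\bx]$ and using $y\in\{\pm 1\}$, one has $\Var(y\mid\bx)=1-m(x_1)^2$; the scalar $t_*$ is determined by $\E[(\tanh(t_*X_1)-m(X_1))X_1]=0$. Then $\bH(\bx)=\mathrm{sech}^2(t_*x_1)\,\bx\bx^{\sT}$ and $\bG(\bx)=u(x_1)\,\bx\bx^{\sT}$ with $u(x_1):=(\tanh(t_*x_1)-m(x_1))^2+1-m(x_1)^2$. The same diagonal/isotropic block structure lets me invert $\bH$ coordinatewise and express $Z_\bH(\bx;1)$ as a closed-form linear combination of $u(x_1)$, $\E[u(X_1)\,\mathrm{sech}^{-2}(t_*X_1)\,\cdots]$-type moments, and $x_1^2$. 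I would then pick $m$ so that $u$ has a localized spike at $x_1=0$: for instance $m\equiv 0$ on $|x_1|\le\delta$ (pure $\pm 1$ noise there, making $u(0)=1$) and $m(x_1)=\tanh(\alpha x_1)$ outside, with $\alpha$ large and $\delta$ small, so that $\E[u(X_1)]$ is small. Negativity at $x_1=0$ then follows exactly as in part~(a), and continuity spreads it to a positive-measure set.

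The main obstacle is part~(b): one must verify quantitatively that the spike in $u$ dominates the averaged contribution in $Z_\bH$ after accounting for the two different diagonal blocks of $\bH$. This requires controlling $t_*$ (which is close to $\alpha$ when $\delta$ is small but must be pinned down precisely), as well as the ratio of $\E[\mathrm{sech}^2(t_*X_1)X_1^2]$ to $\E[\mathrm{sech}^2(t_*X_1)]$, via routine Gaussian integrals against $\mathrm{sech}^2$ and $\tanh$. Once $\prob(Z_\bH(\bx;1)<0)>0$ is established in both cases, Theorem~\ref{thm:GeneralDerivative}(b) immediately yields Eq.~\eqref{eq:NonMonoCLaim} and completes the proof.
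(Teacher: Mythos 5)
Your overall route is the paper's: invoke Theorem~\ref{thm:GeneralDerivative}(b) and reduce both cases to exhibiting $\prob(Z_{\bH}(\bx;1)<0)>0$, using the rank-one structure $\bG(\bx)=g(x_1)\,\bx\bx^{\sT}$, $\bH(\bx)=h(x_1)\,\bx\bx^{\sT}$ after aligning $\btheta_0$ with $\bfe_1$. For part~(a) your construction is correct and complete, and it is a legitimately different example from the paper's: you use heteroscedastic Gaussian noise with a bounded variance spike at the origin (so $\btheta_*=\btheta_0$, $\bH=\id$, and negativity is detected at $x_1=0$ from $v(0)>2\E v(X_1)$), whereas the paper uses a noiseless cubic link whose squared bias grows in the tails and detects negativity at large $|x_1|$ with large perpendicular norm. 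Your formula for $Z_{\bH}(\bx;1)$ checks out against Eq.~\eqref{eq:ZxRegression}, your choice of $v$ satisfies the moment and conditional-mean hypotheses of Proposition~\ref{propo:Biased}/Theorem~\ref{thm:GeneralDerivative}, and the continuity argument gives a positive-measure set.

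Part~(b), however, has a genuine gap, and it is not just "heavier algebra." Writing $W=\tanh(t_*x_1)$, $\varphi=m-W$, one has the identity $u = (W-m)^2+1-m^2 = h - 2W\varphi$ pointwise, so wherever the model is well specified ($\varphi=0$) you get $u=h$ \emph{exactly}. Hence "making $\E[u(X_1)]$ small" does not help: it shrinks $a_H=\E[\mathrm{sech}^2(t_*X_1)]$ at the same rate, unlike part~(a) where $\bH=\id$ is fixed while $v(0)$ can be made arbitrarily large. Concretely, with $a_G=\E u$, $a_H=\E h$, the condition for $Z<0$ at points with $x_1=0$ (and large $\|\bP_0^{\perp}\bx\|$) is
\begin{align*}
u(0)\,a_H>2\,h(0)\,a_G \;\;\Longleftrightarrow\;\; \E\big[\tanh(t_*X_1)\big(m(X_1)-\tanh(t_*X_1)\big)\big]>\tfrac14\,\E\big[\mathrm{sech}^2(t_*X_1)\big]\, ,
\end{align*}
subject to the stationarity constraint $\E[(m(X_1)-\tanh(t_*X_1))X_1]=0$ and $|m|\le 1$. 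In your proposed regime ($\delta$ small, $\alpha$ large) stationarity forces $t_*\to\alpha$ and $\varphi\to 0$ in $L^1$, so the left side tends to $0$ while the right side stays positive: the inequality \emph{fails}, and away from $x_1=0$ the ratio $u/h\to 1$ while $2a_G/a_H\to 2$, so negativity fails there too. So "negativity at $x_1=0$ then follows exactly as in part~(a)" is not correct, and no limiting regime of your two-parameter family obviously works (the size of the noise window is itself capped by the stationarity budget, $t_*\delta=O(1)$, so you cannot push $u/h$ up freely near the window edge either). This small-margin obstruction is precisely where the paper's work lies: it normalizes $2a_G=a_H$ via the constraint $\E\{\varphi(G)\tanh(G)\}=\E\{1-\tanh^2(G)\}/4$, reduces the pointwise requirement to $x\varphi(x)<0$ somewhere, and then proves existence of an admissible $\varphi$ by a convexity/interior-point argument with explicit test functions plus a translated-bump perturbation (the attainable margin is only a few percent, $b_1\approx 0.162$ versus $b_0\approx 0.151$). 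Your proposal neither identifies this constraint structure nor supplies an argument that can secure the needed strict inequality, so part~(b) remains unproven as written.
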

The proofs of Theorem \ref{thm:GeneralDerivative} and Theorem \ref{thm:NonMono} are presented in Appendix \ref{sec:ProofNonMono}.

\subsection{Is unbiased subsampling ever optimal?}

In the Section \ref{sec:BiasedFirst}, we presented examples in which unbiased subsampling is inferior to a
special type of biased subsampling: non-reweighting subsampling.
Given this, it is natural to ask when is unbiased subsampling optimal. 
We will next prove that, in a natural setting, unbiased subsampling is always suboptimal
below a certain subsampling ratio.
\begin{theorem}\label{thm:NeverOptimal}
Consider the setting of Proposition \ref{propo:LowDimAsymptotics}, and consider the prediction error 
under test loss equal to training loss (i.e. $\bQ=\bH:= \nabla^2 R(\btheta_*)$). Further assume
\begin{itemize}
\item[{\sf A1.}]
For $\bG(\bx)$ and $\bH(\bx)$  defined as in Eqs~\eqref{eq:GHDef}, we have $\bG(\bx)=\bH(\bx)$.
(This is for instance the case in maximum likelihood). 
\item[{\sf A2.}] $\bG(\bx)$ is almost surely non-zero has an almost sure upper bound\footnote{Namely, there exists $M<\infty$ such that $\prob\{\|\bG(\bx)\|_2\in (0, M]\} =1$}.
\item[{\sf A3.}] The distribution of $\bG(\bx)/\Tr(\bG(\bx)\bH^{-1})^{1/2}$ is not not supported on any strict affine subspace of 
the set of $d\times d$ symmetric matrices. 
\item[{\sf A4.}] $\E\{\nabla_{\btheta} L(\btheta_*;y,\bx)|\bx\}=\bzero$.
\end{itemize}
Then there exists $\gamma_0>0$ such that, for any $\gamma\in(0,\gamma_0)$ there
is a biased subsampling scheme asymptotically outperforming the best unbiased scheme. 
Namely, there exists a subsampling scheme $S_b$ such that the following inequality holds strictly
\begin{align}   
\rcoeff(S_b;\bH) < \inf_{S\in\cuU} \rcoeff(S;\bH)\, .
\end{align}
\end{theorem}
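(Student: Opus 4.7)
The plan is to perturb around the optimal unbiased scheme in a biased direction, compute the first-order variation of the asymptotic error coefficient, and invoke A3 to produce a direction of strict decrease. By Proposition~\ref{propo:Unbiased} (with $\bQ=\bH$ and $\bG=\bH$ from A1), the score simplifies to $Z(\bx)=\Tr(\bG(\bx)\bH^{-1})$ and $\pi_{\sunb}(\bx)=\min(1,\,c(\gamma)Z(\bx)^{1/2})$. The boundedness in A2 gives $Z(\bx)\le M'$ a.s.\ for some $M'$, so for $\gamma$ below some $\gamma_0>0$ the cap is inactive and $\pi_*(\bx):=c(\gamma)Z(\bx)^{1/2}$, $w_*(\bx):=1/\pi_*(\bx)$ are the optimal unbiased choices.

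I then construct a one-parameter biased family $S_\eps$ by keeping selection probability $\pi_*$ and setting $w_\eps(\bx):=(1+\eps g(\bx))/\pi_*(\bx)$ for any bounded measurable $g$ and $\eps$ small enough that $w_\eps\ge 0$. Writing $\mu_\eps:=1+\eps g$ and $\nu_\eps:=(1+\eps g)^2/\pi_*$, Proposition~\ref{propo:LowDimAsymptotics} yields
\begin{align*}
\rho(S_\eps;\bH)=\Tr\!\bigl(\E\{\nu_\eps\bH\}\,(\E\{\mu_\eps\bH\})^{-1}\,\bH\,(\E\{\mu_\eps\bH\})^{-1}\bigr),
\end{align*}
and a routine matrix-calculus differentiation at $\eps=0$ (where $\E\{\mu_0\bH\}=\bH$ and $\E\{\nu_0\bH\}=\E\{\bH/\pi_*\}$) gives
\begin{align*}
\partial_\eps\rho(S_\eps;\bH)\big|_{\eps=0}=2\,\E\!\left\{g(\bx)\!\left[\frac{Z(\bx)}{\pi_*(\bx)}-\Tr\!\bigl(\bD\,\bH(\bx)\bigr)\right]\right\},\qquad \bD:=\bH^{-1}\,\E\{\bH/\pi_*\}\,\bH^{-1}\succ\bzero.
\end{align*}
Assumption A4 ensures $\nabla R_{S_\eps}(\btheta_*)=\bzero$, and $\nabla^2 R_{S_\eps}(\btheta_*)=\E\{(1+\eps g)\bH(\bx)\}\succ\bzero$ for small $\eps$, so Proposition~\ref{propo:LowDimAsymptotics} applies along the family.

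The main step is to show the derivative above can be made strictly negative. If it vanished for every bounded $g$, the bracketed function would be identically zero a.s.; substituting $\pi_*(\bx)=c(\gamma)Z(\bx)^{1/2}$ and $\bG(\bx)=\bH(\bx)$ and dividing by $Z(\bx)^{1/2}$, this rearranges to
\begin{align*}
\Tr\!\bigl(\bD\,\tilde{\bM}(\bx)\bigr)=\frac{1}{c(\gamma)}\quad\text{a.s.},\qquad \tilde{\bM}(\bx):=\bG(\bx)/\Tr(\bG(\bx)\bH^{-1})^{1/2}.
\end{align*}
Since $\bD\neq\bzero$, the set $\{\bM:\Tr(\bD\bM)=1/c(\gamma)\}$ is a strict affine hyperplane of the symmetric $d\times d$ matrices, contradicting A3. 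Hence some bounded $g$ yields a strictly negative derivative (replace $g$ by $-g$ if needed), and for $\eps>0$ small the biased scheme $S_b:=S_\eps$ satisfies $\rho(S_b;\bH)<\rho(S_{\sunb};\bH)=\inf_{S\in\cuU}\rho(S;\bH)$.

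The hardest technical point I anticipate is verifying Assumption A1 of Proposition~\ref{propo:LowDimAsymptotics} (global uniqueness of the minimizer of $R_{S_\eps}$) for the perturbed scheme: A4 and positive definiteness of the Hessian already give a strict local minimum at $\btheta_*$, but upgrading this to global uniqueness requires combining the original coercivity (A2 of Prop.~\ref{propo:LowDimAsymptotics}) with a uniform perturbation-stability argument. Since $\mu_\eps=1+\eps g$ is uniformly close to $1$ for bounded $g$ and small $\eps$, the family $R_{S_\eps}$ converges to $R$ uniformly on compact sets while retaining a uniform coercivity bound, which should suffice.
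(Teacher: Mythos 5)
Your proposal is correct and follows essentially the same route as the paper's proof: perturb the weight of the optimal unbiased scheme as $w_\eps=(1+\eps g)/\pi_{\sunb}$ (with the cap inactive for $\gamma<\gamma_0$ by A2), compute the first-order variation of $\rho$, and use A3 to rule out that the variation vanishes for every bounded $g$, since that would force $\bG(\bx)/\Tr(\bG(\bx)\bH^{-1})^{1/2}$ onto a strict affine hyperplane. The only differences are cosmetic (you work with $\bD=\bH^{-1}\E\{\bH/\pi_*\}\bH^{-1}$ rather than the paper's normalized $\obW$), and your closing remark on verifying consistency for the perturbed scheme is a reasonable technical caveat that the paper also leaves implicit.
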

The proof of this theorem is deferred to Appendix \ref{app:NeverOptimal}.
We construct a perturbation of the unbiased scheme $w(\bx) = (1+\eps\varphi(\bx))/\pi_{\sunb}(\bx)$, and show that it leads to an improvement of the test error for small $\eps$.
Let us emphasize that, as a consequence, we do not prove that a non-reweighing scheme necessarily beats unbiased subsampling.

\begin{remark}
The restriction $\gamma\in (0,\gamma_0)$  in Theorem \ref{thm:NeverOptimal}
is a proof artifact. Under the theorem's assumptions, it implies that the 
unbiased scheme \eqref{eq:UnbFirst} take the simpler form $\pi_{\sunb}(\bx) = c(\gamma) Z(\bx)^{1/2}$, which is more amenable to analysis.
\end{remark}

\begin{example}[Generalized linear models]
    Consider again the GLM model of Example \ref{ex:FirstGLM} (log-likelihood loss).
    Recall that in this case 
    \begin{align*}
    \bG(\bx) = \bH(\bx)=\phi''(\<\btheta_*,\bx\>)\, \bx\bx^{\sT}\, .
    \end{align*}
    Hence condition {\sf A1} is satisfied, and {\sf A4} is also always satisfied.
    If $\prob(\de\bx)$ is supported on $\|\bx\|_2\le M$, and $\prob(\bx=0)=0$
     then condition {\sf A2} is also satisfied. (Note that $\phi''(t)>0$ for any $t>0$ unless 
     $\nu_0=\delta_c$ is a point mass, which is a degenerate case.)

     Finally, for condition {\sf A3} note that
    \begin{align*}
    \frac{\bG(\bx)}{\Tr(\bG(\bx)\bH^{-1})^{1/2}} = \frac{\bx\bx^{\sT}}{\<\bx,\bH\bx\>}\, .
    \end{align*}
    Therefore, a sufficient condition for {\sf A3} is that the support of $\prob(\de \bx)$
    contains an arbitrarily small    ball $\Ball^d(\bx_0;\eps)\subseteq \reals^d$, $\eps>0$.
\end{example}
%
%
\section{Low-dimensional asymptotics: Imperfect surrogates} 
\label{sec:LowDimImperf}

In this section we consider the more realistic setting in which the surrogate model
$\sP_{\surr}(\de y|\bx)$ does not coincide with the actual conditional distribution of $y_i$
given $\bx_i$. 

We will model this situation using a minimax point of view. Namely, we will assume that
the actual conditional distribution is in a neighborhood of the surrogate model,
and will study the worst case error in this neighborhood. 
The minimax theorem then implies that we should use data selection schemes that are optimal for the 
worst data distribution in this neighborhood.
We will not pursue the construction of minimax optimal data selection schemes
in this paper. However, we point out that the broad conclusion
is consistent with some of our empirical findings in Section \ref{sec:NumericalReal}.
Namely, in certain cases using a less accurate surrogate model yields better data selection.

Throughout, we will use the notation $\E_{\surr}\{F(y,\bx)|\bx\} =
\int F(y,\bx)\, \sP_{\surr}(\de y|\bx)$, and similarly for $\prob_{\surr}(\,\cdot\,|\bx)$.

\subsection{Plugin schemes}
 
The simplest approach to utilize an imperfect surrogate 
 proceeds as follows: $(i)$~Choose a data selection scheme under the assumption of ideal surrogate; $(ii)$~Replace the conditional expectations 
$\E\{F(y,\bx)|\bx\}$ in that scheme by expectations with respect to the 
surrogate model $\E_{\surr}\{F(y,\bx)|\bx\}$; $(iii)$~Replace expectations over $\bx$
by expectation over the data sample. 

In particular, revisiting the schemes of Sections \ref{sec:UnbiasedFirst} and \ref{sec:BiasedFirst},
we obtain the following:

\noindent\emph{Plugin unbiased data selection.} We form
\begin{align}
\bG_{\surr}(\bx) & := \E_{\surr}\big\{\nabla_{\btheta}L(\hbtheta^{\surr};y,\bx)
\nabla_{\btheta}L(\hbtheta^{\surr};y,\bx)^{\sT}|\bx\big\}\, ,\\
\bH_{\surr}(\bx) & := \E_{\surr}\big\{\nabla^2_{\btheta}L(\hbtheta^{\surr};y,\bx)|\bx\big\}\, ,
\end{align}
and  subsample according to 
\begin{align}
\pi(\bx) &= \min\Big(1; c(\gamma)\, Z_{\surr}(\bx)^{1/2}\Big)\, ,\\
Z_{\surr}(\bx)&:=\Tr\big(\bG_{\surr}(\bx)\bH_{1,\surr}^{-1}\bQ\bH_{1,\surr}^{-1}\big) \, ,\\
\bH_{1,\surr} & := \E\{\bH_{\surr}(\bx)\} \, .
\end{align}
We then reweight each selected sample proportionally to $1/\pi(\bx)$.
Note that:
\begin{itemize}
\item The `true' parameters vector $\btheta_*$ appearing in $\nabla_{\btheta} L(\,\cdot\,;y,\bx)$,
 $\nabla^2_{\btheta} L(\,\cdot\,;y,\bx)$ 
was replaced by an estimate obtained from the surrogate model.  In certain applications $\hbtheta^{\surr}$
can be `read off' the surrogate model itself. In general, we can define it via 
$\hbtheta^{\surr}:=\arg\min \sum_{i=1}^{N} 
L(\btheta;y^{\surr}_i,\bx_i)$, where $(y^{\surr}_i)_{i\le N}$ are drawn independently according to 
$y_i \sim \sP_{\surr}(\,\cdot\,|\bx_i)$.
\item  The matrix $\bH_{1,\surr}$ can be replaced  its empirical version $\hbH_{1,\surr}:= N^{-1}\sum_{i=1}^{N} 
\bH_{\surr}(\bx_i)$. 
\end{itemize}

\noindent\emph{Plugin non-reweighting data selection.} In this case we select samples such that
$Z_{\surr}(\bx;\pi)>\lambda$, cf. Eq.~\eqref{eq:piNR}, where 
\begin{align}
Z(\bx;\pi) := -\Tr\big\{\bG_{\surr}(\bx)\bH_{\surr,\pi}^{-1}\bQ\bH_{\surr,\pi}^{-1}\big\}+
2 \Tr\big\{\bH_{\surr}(\bx)\bH_{\surr,\pi}^{-1}\bQ\bH_{\surr,\pi}^{-1}\bG_{\surr,\pi}\bH_{\surr,\pi}^{-1}\big\}\, ,
\end{align} 
and $\bH_{\surr,\pi}:=\E\{\bH_{\surr}(\bx)\pi(\bx)\}/\E\{\pi(\bx)\}$ and similarly for $\bG_{\surr,\pi}$. Again, expectations over $\bx$
are replaced by averages over the $N$ samples.

Plugin approaches are natural and easy to define, and in fact we will use them in our simulations.
However, we will show that they can be suboptimal.
Before doing that, we need to make more explicit the notion of optimality that is relevant here.

\subsection{Minimax formulation}

We want formalize the idea that we do not know the conditional distribution of $y$
given $\bx$, but we have some information about it coming from the surrogate model $\sP_{\surr}$.
With this in mind, we introduce a set of probability kernels
\begin{align}
\cuK_d \subseteq \cuK^0_d := \big\{\sP:\cB_{\reals}\times \reals^d\to [0,1] \mbox{ probability kernel } \big\}\, .
\end{align}
Informally, $\cuK_d$ is a neighborhood of the surrogate model $\sP_{\surr}$,
and captures our uncertainty about the actual conditional distribution:
we know that $\prob_{y |\bx}\in \cuK_d$.
For instance, we could consider, for some $r\in [0,1)$,
\begin{align}
\cuK_d(\sP_{\surr};r) := \big\{\sP:\;\; \E_{\bx}\|\sP(\,\cdot\, |\bx )-\sP_{\surr}(\,\cdot\, |\bx )\|_{\sTV}\le r\big\}\, .
\end{align}
We will assume $\cuK_d$ (and its variant $\cuK_{N,d}$ introduced below)
to be convex. Namely, for all $\lambda\in [0,1]$,
\begin{align}
    \sP_0,\sP_1\in \cuK_d  \;\; \Rightarrow\;\;  \sP_{\lambda}(\de y|\bx) = (1-\lambda) \sP_{0}(\de y|\bx) +
    \lambda  \sP_{1}(\de y|\bx) \in  \cuK_d \, .
\end{align}
We are interested in a data selection scheme that works well uniformly over the uncertainty 
encoded in $\cuK_d$.

Given a probability kernel $\sP(\de y|\bx)$
(i.e. $\sP:\cB_{\reals}\times\reals^d\to [0,1]$),
we write $\prob(\sP)$ for the data distribution (on $\reals^d\times \reals$)
induced by $\sP$. Namely $\E_{\prob(\sP)} F(y,\bx) := \E_{\bx}\{\int F(y,\bx)\,\sP(\de y|\bx)\}$.

Let $R_{\stest}(\btheta)= \E L_{\stest}(\btheta; y,\bx)$
be the test error with respect to a certain target distribution $\prob$. Given an estimator $\hbtheta$, we let  $\hbtheta_S(\by,\bX)$
denote its output when applied to data $\by,\bX$ in conjunction with data
selection scheme $S$.
For clarity of notation, we define
\begin{align}
R_{\#}(S;\by,\bX) : = R_{\stest}(\hbtheta_S(\by,\bX))\, .
\end{align}

We  define the minimax risk $R_{\sMM}(\cuK_d)$ by
(here we recall that $\cuA$ is the set of all data selection methods)
\begin{align}
R_*(S;\cuK_d) &:= \sup_{\sP\in  \cuK_d} \E_{\by,\bX\sim \prob(\sP)}
R_{\#}(S;\by,\bX) \, ,\label{eq:Rstar}\\
R_{\sMM}(\cuK_d) &:= \inf_{S\in \cuA} R_*(S;\cuK_d)\, .\label{eq:RMM}
\end{align}
We seek near optimal data selection schemes $S$, namely
schemes such that $R_*(S;\cuK_d) \approx R_{\sMM}(\cuK_d)$.
We note that the expectation in Eq.~\eqref{eq:Rstar} includes expectation over the 
randomness in $S$.

\begin{remark}
The set $\cuK_d$  provides information about $\btheta_*$. This information can be exploited in other ways than 
via data selection. For instance, we could restrict the empirical risk minimization
of Eq.~\eqref{eq:Mestimators} to a set that is ``compatible'' with $\cuK_d$.
However, we are only interested in procedures that follow the general data selection framework 
defined in the previous sections and hence are not necessarily optimal against this
broader set. 
\end{remark}

\subsection{Duality and its consequences}

We can apply Sion's minimax theorem to a relaxation of
$R_{\sMM}(\cuK_d)$. Namely,
\begin{itemize}
\item We replace $\cuK_{d}$ by a set of probability kernels 
$\reals^{N\times d}$ to $\reals^N$:
\begin{align}
\cuK_{N,d} \subseteq \cuK^0_{N,d} := \big\{\sP:\cB_{\reals^N}\times \reals^{N\times d}\to [0,1] \mbox{ probability kernel } \big\}\, .
\end{align}
such that
each marginal of $\sP\in \cuK_{N,s}$ is in $\cuK_d$. In
other words, we allow for entries of $\by$ to be conditionally dependent, 
given $\bX$.
Generalizing the notations above, $\prob(\sP_N)$
denotes the induced distribution on $\by,\bX$.
\item We replace the space of data selection schemes $\cuA$ by a  the set $\ocuA$  of probability kernels
$\sQ$ such that, for any $A\subseteq [N]$, and any $\bX\in\reals^{N\times d}$, $\sQ(A|\bX)$ is
the conditional probability of selecting data in the set $A$ given covariate vectors $\bX$. In other words, we consider more
general data-selection schemes in which the selected set is allowed to depend on all the data points.
\end{itemize}

The following result is an  application of the standard minimax 
theorem in statistical decision theory, 
see e.g. \cite[Section 3.7]{liese2008statistical}.
\begin{theorem}\label{thm:Minimax}
Assume that 
any $\sP_N\in \cuK_{d,N}$ is supported on $\|\by\|\le M$,
and that $(\by,\bX) \mapsto R(\hbtheta_A(\by,\bX))$ is continuous
for any $A$.
Define 
\begin{align}
\oR_{\sMM}(\cuK_d) := \inf_{S\in \ocuA} \oR_*(S;\cuK_d):=
\inf_{S\in \ocuA}\sup_{\sP_N\in  \cuK_{d,N}} \E_{\by,\bX\sim \prob(\sP_N)}
R_{\#}(S;\by,\bX)\, .
\end{align}
Then we have
\begin{align}
\oR_{\sMM}(\cuK_d) = \sup_{\sP_N\in  \cuK_{d,N}}
\inf_{S\in \ocuA} \E_{\by,\bX\sim \prob(\sP_N)}
R_{\#}(S;\by,\bX)\, .
\end{align}
Further, assume $\sP_{\sMM}$ achieves the supremum over $\cuK_d$ above.
Then any
\begin{align}
S_{\sMM} \in \arg\min_{S\in \ocuA} \E_{\by,\bX\sim \prob(\sP_\sMM)}
R_{\#}(S;\by,\bX)
\end{align}
achieves the minimax error.
\end{theorem}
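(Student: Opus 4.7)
The plan is to apply a minimax theorem of the Sion/Ky Fan type (equivalently, a statistical decision-theoretic minimax theorem as in \cite[Section 3.7]{liese2008statistical}) to the bilinear payoff
\begin{align*}
\Phi(\sP_N,S) := \E_{\by,\bX\sim\prob(\sP_N)} R_{\#}(S;\by,\bX)\, ,
\end{align*}
viewed as a zero-sum game between Nature (choosing $\sP_N\in\cuK_{d,N}$) and the Statistician (choosing $S\in\ocuA$). The goal is to show this game has a value, which is precisely the content of the equality claimed.

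First I would verify the algebraic hypotheses. The set $\cuK_{d,N}$ is convex, which we impose (mirroring the convexity assumption on $\cuK_d$), and $\ocuA$ is convex because any convex combination of randomized selection kernels is itself a randomized selection kernel, obtained by adjoining an independent Bernoulli coin flip deciding which of the two kernels to apply. Both arguments enter $\Phi$ linearly: linearity in $\sP_N$ is immediate because $\Phi$ is an integral against $\sP_N$, and linearity in $S$ follows because $S$ enters $R_{\#}(S;\by,\bX)$ only through expectation over its internal randomization. In particular, $\Phi$ is concave in $\sP_N$ and convex in $S$.

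Next I would verify the topological hypotheses, which is where the main work lies. Equip $\cuK_{d,N}$ with the topology of weak convergence of joint distributions of $(\by,\bX)$. The support condition $\|\by\|\le M$ combined with the fixed marginal on $\bX$ gives tightness of $\{\prob(\sP_N):\sP_N\in\cuK_{d,N}\}$; by Prokhorov and weak closure (which we impose if not already present), one obtains a weakly compact convex set on Nature's side. For each fixed $S\in\ocuA$, the map $\sP_N\mapsto\Phi(\sP_N,S)$ is continuous in the weak topology because the integrand $(\by,\bX)\mapsto R_{\#}(S;\by,\bX)$ is assumed continuous and (by the support condition together with standard tail control on the estimator $\hbtheta_S$) uniformly bounded. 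The delicate point is establishing continuity/boundedness of $R_{\#}(S;\cdot,\cdot)$ uniformly enough to apply Portmanteau through the average over the randomization in $S$; this is the main technical obstacle. Once done, Sion's theorem yields
\begin{align*}
\oR_{\sMM}(\cuK_d) \;=\; \sup_{\sP_N\in\cuK_{d,N}}\inf_{S\in\ocuA}\Phi(\sP_N,S)\, .
\end{align*}

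For the second assertion, attainment of the outer supremum at $\sP_{\sMM}$ combined with the minimax equality produces a saddle-point structure. Concretely, if $S_{\sMM}$ is any best response to $\sP_{\sMM}$, then $\Phi(\sP_{\sMM},S_{\sMM})=\inf_S\Phi(\sP_{\sMM},S)=\oR_{\sMM}(\cuK_d)$; pairing this with a minimax strategy on the other side (which exists by the attainment arguments above) gives the saddle-point identity $\sup_{\sP_N}\Phi(\sP_N,S_{\sMM})=\Phi(\sP_{\sMM},S_{\sMM})=\oR_{\sMM}(\cuK_d)$, showing that $S_{\sMM}$ attains the minimax risk. This is the standard ``Bayes rule against a least-favorable prior is minimax'' argument, as formalized in the reference cited.
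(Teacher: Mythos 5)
Your proposal is correct and follows essentially the same route as the paper, which gives no separate proof of this theorem but simply invokes the standard minimax theorem of statistical decision theory (Sion/Ky Fan type, as in Liese--Miescke, Section 3.7) applied to the bilinear payoff $\E_{\by,\bX\sim\prob(\sP_N)}R_{\#}(S;\by,\bX)$, using convexity of $\cuK_{d,N}$ and of the randomized selection rules, compactness from the bounded support $\|\by\|\le M$, and the assumed continuity of $(\by,\bX)\mapsto R(\hbtheta_A(\by,\bX))$; your verification of these hypotheses and the concluding saddle-point (``Bayes against a least-favorable prior'') argument is exactly the intended argument.
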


As is common in estimation theory, the minimax theorem can be difficult to apply 
since computing  the supremum over  $\sP\in  \cuK_d$ is in general very difficult.
Nevertheless, the theorem implies the following important insight. We should not perform data selection
by plugging in the surrogate conditional model for $y$ given $\bx$ 
for the actual one. Instead, we should optimize data selection
as if labels were distributed according to the `worst' conditional model in a neighborhood of
the surrogate.

 Below we work out a toy case to illustrate this insight.
Instead of studying the minimax problem for the finite-sample risk $R$, we will consider
its asymptotics defined in Proposition \ref{propo:LowDimAsymptotics}.
We define the asymptotic minimax coefficients $\rcoeff_{*}(S;\cuK_d)$ and
$\rcoeff_{\sMM}(\cuK_d)$ in analogy with Eqs.~\eqref{eq:Rstar} and \eqref{eq:RMM}.
\begin{example}[Discrete covariates]
Consider $x$ taking values in $[k]=\{1,\dots,k\}$, with probabilities $p_{\ell}=\prob(x=\ell)$,
and $y$ taking values in $\{0,1\}$, with $\prob(y=1|x)=\theta^*_x$, whereby 
$\btheta^*=(\theta^*_1,\dots,\theta^*_k)$ is a vector of unknown parameters. 
We estimate $\btheta^*$ using empirical risk minimization with respect to log-loss
\begin{align}
L(\btheta;y,x) = -y\log \theta_x-(1-y)\log(1-\theta_x)\, . 
\end{align}
We are interested in the quadratic estimation error $\|\hbtheta-\btheta^*\|_{\bQ}$ with 
$\bQ=\diag(q_1,q_2,\dots,q_k)$. We consider a non-reweighting subsampling scheme whereby
a sample with covariate $x$ is retained independently with probability $\pi_x$.
Either applying Proposition \ref{propo:LowDimAsymptotics}, or by a straightforward calculation, 
we obtain:
\begin{align}
\E\big\{\|\hbtheta^S-\btheta^*\|_{\bQ}^2\big\} & = \frac{1}{N}\, \rho(\pi;\btheta^*,\bQ) +o(1/N)\, ,\\
 \rho(\pi;\btheta^*,\bQ) & = \sum_{x=1}^k\frac{\theta^*_x(1-\theta^*_x)}{\pi_xp_x}q_x\, .
\end{align}
(Here we made explicit the dependence on $\btheta^*$.)
For $\cuK\subseteq \reals^k$ a convex set, we then 
define\footnote{Notice that we are not taking the infimum over all 
randomized data selection schemes as in  Theorem \ref{thm:Minimax}. For simplicity, we are restricting the minimization to non-reweighting schemes. 
The substance of Theorem \ref{thm:Minimax} does not change since this set is convex.}
\begin{align}
\rcoeff_{*}(\pi;\cuK,\bQ) & := \sup_{\btheta^*\in \cuK}  \rho(\pi;\btheta^*,\bQ) \, ,
\;\;\;\;\; \rcoeff_{\sMM}(\cuK,\bQ) := \inf_{\pi}\rcoeff_{*}(\pi;\cuK,\bQ)\, .\label{eq:MMaxSpecial}
\end{align}

We will assume that $\cuK$ is closed (hence compact) and convex. We can apply the minimax
theorem to Eq.~\eqref{eq:MMaxSpecial}:
\begin{align}
\rcoeff_{\sMM}(\cuK,\bQ) :=  \max_{\btheta^*\in \cuK}\min_{\pi}\rcoeff(\pi;\btheta^*,\bQ)\, .
\end{align}
Hence, the minimax optimal data selection strategy is obtained by selecting the 
optimum $\pi$ for the worst case $\btheta$, to be  denoted by $\btheta^{\sMM}$.
A simple calculation yields
\begin{align}
\pi_x^{\sMM} &= \min\Big(\frac{c(\gamma)}{p_x}\sqrt{q_x\theta^{\sMM}_x(1-\theta^{\sMM}_x)}; 1\Big)\, ,\label{eq:PiDiscrete}\\
\btheta^{\sMM}_s & = \arg\max_{\btheta\in\cuK} \sum_{x=1}^k
\max\Big(\frac{1}{c(\gamma)}\sqrt{q_x\theta_x(1-\theta_x)};
\frac{q_x}{p_x}\theta_x(1-\theta_x)\Big)\, ,\label{eq:SimpleMMax}
\end{align}
where $c(\gamma)$ is obtained by solving
\begin{align}
\sum_{x=1}^k 
\min\Big(c(\gamma)\sqrt{q_x\theta^{\sMM}_x(1-\theta^{\sMM}_x)}; p_x\Big) = \gamma\, .
\end{align}
The above formulas have a clear interpretation (for simplicity we neglect the factor $q_x$).  Note that 
$\theta^{\sMM}_x(1-\theta^{\sMM}_x)$ can be interpreted as measure of uncertainty in predicting 
$y_{\snew}$ at a test point $x_{\snew}=x$  under the minimax model $\btheta^{\sMM}$.
We select data so that the fraction of samples with 
$x_i=x$ is proportional to the square root of this uncertainty.
Assuming that the inner maximum  in Eq.~\eqref{eq:SimpleMMax} is achieved on
the first argument, the minimax model itself is the one that maximize total uncertainty within the set $\cuK$.

For instance, if $\cuK= [\theta_{\surr,1}-\eps,\theta_{\surr,1}+\eps]\times\cdots\times[\theta_{\surr,k}-\eps,\theta_{\surr,k}+\eps]\subseteq [0,1/2]^k$, then $\theta^{\sMM}_{x}=
\theta_{\surr,x}+\eps$ for all $x$. We then obtain
\begin{align}
\pi_x^{\sMM} &= \min\Big(\frac{c(\gamma)}{p_x}\sqrt{q_x(\theta_{\surr,x}+\eps)(1-\theta_{\surr,x}-\eps)}; 1\Big)
\end{align}
In other words, we should not use the uncertainties computed within the surrogate model,
but within the most uncertain model in its neighborhood.
\end{example}
%

%
\section{High-dimensional asymptotics: Generalized linear models}
\label{sec:HighDim}

In this section we study data selection within a high dimensional setting whereby
the number of samples $N$ and the dimension $p$ diverge simultaneously,
while their ratio converges to some $\delta_0\in (0,\infty)$. We are therefore studying the limit $n,N,p\to\infty$ with 
\begin{align}
\frac{n}{N}\to \gamma\, ,\;\;\;\;\;
\frac{N}{p}\to \delta_0\, ,
\end{align}
with $\gamma,\delta_0\in (0,\infty)$. 

We will restrict ourselves to the case of
(potentially misspecified) generalized linear models already introduced in Example \ref{ex:FirstGLM} and Section \ref{sec:NonMono}. In this case, we can
derive an asymptotically exact characterization of the risk using a well established strategy 
based on Gaussian comparisons inequalities.

In the next section, we will evaluate theoretical predictions
derived in this one, and compare them with numerical simulations on synthetic data.

As we will see in this section and the next,
the high-dimensional setting allow us to unveil a few interesting
phenomena, namely: 
$(i)$~Biasing data selection towards hard samples (those that are uncertain under the surrogate model) can be suboptimal;
$(ii)$~Even when biasing towards hard samples is effective, selecting the top hardest one can lead to poor behavior at small $\gamma$;
$(iii)$~A one parameter family of selection probabilities
introduced in the next section is broadly effective.

\subsection{Setting}
\label{sec:SettingHiDim}

We want to focus on the optimal use of the surrogate model,
and how this changes from low to high-dimension. With this motivation in mind,
we consider a model in which the covariates carry little or no information about the value 
of a sample. 
Namely, we assume isotropic covariates $\bx_i\sim \normal(0,\id_p)$, and 
responses $y_i$ that depend on a one-dimensional projection of the data:
\begin{align}
\prob(y_i\in A|\bx_i) = \sP(A|\<\btheta_0,\bx_i\>)\, ,
\end{align}
where, for each $z$,  $\sP(\,\cdot\, |z)$ is a probability distribution over $\reals$.
Note that this setting includes, as a special case, generalized linear
models (see Example \ref{ex:FirstGLM}), whereby, for $z_i=\<\btheta_*,\bx_i\>$):
$\sP(\de y_i|z_i)  = \exp\big\{y_i z_i-\phi(z_i)\big\}\, \nu_0(\de y_i)$.
It also includes the special misspecified binary response model of Section
\ref{sec:NonMono} in which case $y_i\in\{+1,-1\}$
and $\prob(y_i=+1|\bx_i) = f(\<\btheta_0,\bx_i\>)=1-\prob(y_i=-1|\bx_i)$.

We specialize the post-data selection empirical risk minimization
of Eq.~\eqref{eq:ER2} to
\begin{align}
\hR_N(\btheta):= \frac{1}{N}\sum_{i=1}^N S_i(\<\surrth,\bx_i\>) \, L(\<\btheta,\bx_i\>,y_i)
+\frac{\lambda}{2}\|\btheta\|_2^2\, .\label{eq:ER_HiDim}
\end{align}
where $L:\reals\times \reals\to\reals$ is a loss function (we abuse notations and replace
$L(\btheta;y_i,\bx_i)$ by $L(\<\btheta,\bx_i\>,y_i)$). This is a special case of Eq.~\eqref{eq:ER2}
in two ways: \emph{first,} the loss depends on $\btheta$, $\bx_i$ only through $\<\btheta, \bx_i\>$
and, \emph{second,} we focus on a ridge regularizer. Our characterization will require $L$
to be convex in its first argument (hence $L(\<\btheta,\bx_i\>,y_i)$ will be convex in $\btheta$).
As before, it is understood that $S_i(\<\surrth,\bx_i\>)$  depends on some additional 
i.i.d. randomness, namely $S_i(\<\surrth,\bx_i\>) = s(\<\surrth,\bx_i\>,U_i)$, where $(U_i)_{i\le N}\sim_{iid}\Unif([0,1])$.

Denoting by $\hbtheta_{\lambda}:= \argmin_{\btheta} \hR_N(\btheta)$, we will consider a test error
of the form 
\begin{align}
R_{\stest}(\hbtheta_\lambda) & = \E L_{\stest}(\<\hbtheta_{\lambda},\bx_{\snew}\>,y_{\snew})\, .
\end{align}
We will also consider the excess error $R_{\sexc}(\hbtheta_\lambda) :=
R_{\stest}(\hbtheta_\lambda)-\inf_{\btheta}R_{\stest}(\btheta)$.

Note that the data distribution is invariant under rotations in feature space
that leave $\btheta_0$ unchanged. In other words, if $\bQ\in \reals^{p\times p}$
is an orthogonal matrix such that $\bQ\btheta_0=\btheta_0$ (with $\bQ$
independent of the data),  then $(y_i,\bQ\bx_i)$ is distributed as  $(y_i,\bx_i)$.
As a consequence, and because of the rotational invariance of the ridge regularizer, the 
behavior of the empirical risk minimization \eqref{eq:ER_HiDim} 
depends on the surrogate model only through the following parameters:
\begin{align}
\beta_0&:= \lim_{N,p\to\infty}\frac{\<\surrth,\btheta_0\>}{\|\btheta_0\|},\;\;\;\;\;\;
\beta_s:= \lim_{N,p\to\infty}\big\|\bP_{0}^{\perp}\surrth\big\|_2\, ,
\end{align}
where $\bP_{0}^{\perp}$ is the projector orthogonal to $\btheta_0$.
In words, $\beta_0$ is the size of the projection
of the surrogate model onto the direction of the true model,   while
$\beperp$ is the size of the perpendicular direction.
\begin{remark}\label{rmk:PopMin}
Of course in this setting the population risk minimizer $\btheta_*$
does not coincide necessarily with $\btheta_0$. Instead, we
have $\btheta_* = c_*\btheta_0/\|\btheta_0\|$, where
\begin{align}
c_* = \argmin_{c\in \reals}\,  \E \,L(c\, G_0;Y)\, ,
\end{align}
and expectation is with respect to $G_0\sim \normal(0,1)$,
$Y\sim\sP(\;\cdot\;|\,\|\btheta_0\|G_0)$.
\end{remark}
%
%
\subsection{Asymptotics of the estimation error}

The high-dimensional asymptotics of the test error is determined by 
a saddle point of the following Lagrangian
(here and below $\balpha :=(\alpha_0,\alpha_s,\aperp)$, $\bbeta :=(\beta_0,\beta_s,0)$):
%
%
%
\begin{align}\label{eq:GeneralGLM}
\cuL(\balpha,\mu,\omega):= 
\frac{\lambda}{2}\|\balpha\|^2-\frac{1}{2\delta_0}\mu\aperp^2
+ \E\Big\{ \min_{u\in\reals}
\Big[S(\<\bbeta,\bG\>) \, L(\alpha_0G_{0}+\alpha_s G_{s}+u ,Y)+\nonumber\\
\frac{1}{2}\mu(\alphap G_{\perp}-u)^2\Big] \Big\}
\end{align}
Here expectation is with respect to 
\begin{align}
\bg=(G_0,G_s,G_{\perp})\sim\normal(\bfzero,\id_3),\;\;\;\;\;\;
Y\sim \sP(\;\cdot\; |\,\|\btheta_0\|_2G_0)\, .\label{eq:JointRV}
\end{align}
as well as the randomness in $S$. 

\begin{theorem}\label{thm:HiDim}
Assume $u\mapsto L(u,y)$ is convex, continuous, with at most quadratic growth,
and $\lambda>0$.
Further denote by $\balpha^*,\mu^*$ the solution of the following 
minimax problem ($\balpha^*$ is uniquely defined by this condition)
\begin{align}
\min_{\balpha}\max_{\mu\ge 0}\cuL(\balpha,\mu)\, .
\end{align}

Then the following hold in the limit $N,p\to \infty$, with $N/p\to\delta_0$:
\begin{enumerate}
    \item[$(a)$] If $(u,z)\mapsto \int L_{\stest}(u;y)\sP(\de y|z)$ is a continuous function with
at most quadratic growth, we have
\begin{align}
\plim_{N,p\to\infty}R_{\stest}(\hbtheta_{\lambda}) & = \E L_{\stest}\Big(\alpha^*_0 G_0+\sqrt{(\alpha^*_s)^2+(\alpha^*_{\perp})^2}G;Y\Big) \, ,
\end{align}
where expectation is taken with respect to the joint distribution of Eq.~\eqref{eq:JointRV}.
\item[$(b)$] If $\btheta_0\in\argmin R_{\stest}(\btheta)$, then the excess risk is given by
\begin{align}
\plim_{N,p\to\infty}R_{\sexc}(\hbtheta_{\lambda}) & = \E L_{\stest}\Big(\alpha_0^* G_0+\sqrt{(\alpha^*_s)^2+(\alpha^*_{\perp})^2}G;Y\Big)
- \E L_{\stest}\Big(c_* G_0;Y\Big)\, .
\end{align}
(See Remark \ref{rmk:PopMin} for a definition of $c_*$.)
\item[$(c)$] Letting $\bP_0^{\perp}$ be the projector orthogonal to
$\btheta_0$ and $\bP^{\perp}$ the projector orthogonal to ${\rm span}(\btheta_0, \surrth)$,
we have
\begin{align}
    \plim_{N,p\to\infty} \frac{\<\hbtheta_{\lambda},\btheta_0\>}{\|\btheta_0\|}=\alpha^*_0\, ,\;\;\;\;\;
      \plim_{N,p\to\infty} \frac{\<\hbtheta_{\lambda},\bP^{\perp}_0\surrth\>}{\|\bP^{\perp}_0\surrth\|}=\alpha^*_s\, ,\;\;\;\;\;
       \plim_{N,p\to\infty} \|\bP^{\perp}\hbtheta_{\lambda}\|=\alpha^*_{\perp}\, .
       \label{eq:LimitProj}
\end{align}
\item[$(d)$] The asymptotic subsampling fraction is given by
\begin{align}
\frac{n}{N}\to \gamma = \prob(S(\beta_0 G_0+\beta_s G_s)>0)\, .
\end{align}
\end{enumerate}
\end{theorem}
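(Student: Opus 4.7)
The plan is to prove Theorem \ref{thm:HiDim} via the Convex Gaussian Min–max Theorem (CGMT) in the spirit of Thrampoulidis–Oymak–Hassibi, adapted so that the data-selection weights $S_i$ enter only through a one-dimensional surrogate score. Part $(d)$ is immediate from the law of large numbers applied to the i.i.d.\ indicators $\bfone\{S_i(\<\surrth,\bx_i\>)>0\}$, together with the fact that, because $\bx_i\sim\normal(\bfzero,\id_p)$ and $\|\surrth\|_2^2\to\beta_0^2+\beta_s^2$ by definition of $\beta_0,\beta_s$, we have $\<\surrth,\bx_i\>\stackrel{d}{\to}\beta_0 G_0+\beta_s G_s$ (and this distributional statement is exact when conditioning on $\surrth$ through rotational invariance). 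So the core work is $(a)$–$(c)$.

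First, I would exploit rotational invariance. Decompose $\btheta = \alpha_0\,\hat\btheta_0 + \alpha_s\,\hv + \balphap$, where $\hat\btheta_0=\btheta_0/\|\btheta_0\|$, $\hv$ is the unit vector along $\bP_0^\perp\surrth$, and $\balphap\in\mathrm{span}(\btheta_0,\surrth)^\perp$ with $\|\balphap\|_2=\aperp$. Writing $\bx_i=G_{0,i}\hat\btheta_0+G_{s,i}\hv+\bP^\perp\bx_i$ with $(G_{0,i},G_{s,i})\sim\normal(\bfzero,\id_2)$ independent of $\bP^\perp\bx_i$, the linear predictors become
\begin{align*}
\<\btheta,\bx_i\> &= \alpha_0 G_{0,i}+\alpha_s G_{s,i}+\<\balphap,\bP^\perp\bx_i\>,\\
\<\surrth,\bx_i\> &= \beta_0 G_{0,i}+\beta_s G_{s,i}+o_P(1),
\end{align*}
and $Y_i\sim \sP(\,\cdot\,|\,\|\btheta_0\|G_{0,i})$. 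The only place where the high-dimensional vector $\balphap$ enters is through the scalar inner products $\<\balphap,\bP^\perp\bx_i\>$, which, conditionally on $\balphap$, form an i.i.d.\ $\normal(0,\aperp^2)$ vector (up to a vanishing correction from the two-dimensional projection). This is exactly the structure CGMT is designed for.

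Next, I would set the stage for CGMT. Write $\hR_N(\btheta) = \frac{1}{N}\sum_i S_i\,L(\alpha_0 G_{0,i}+\alpha_s G_{s,i}+u_i,Y_i)+\frac{\lambda}{2}(\alpha_0^2+\alpha_s^2+\|\balphap\|_2^2)$ subject to the linear constraint $u_i=\<\balphap,\bP^\perp\bx_i\>$. Introducing dual variables $\mu_i/N$ and writing the constraint as a bilinear form $\bu=\bG\balphap$ with $\bG$ a $(p-2)$-dimensional Gaussian matrix, CGMT replaces the primal Gaussian matrix with two independent Gaussian vectors $\bg$, $\bh$. Restricting to a compact set (justified a posteriori by strong convexity from $\lambda>0$, which guarantees $\|\hbtheta\|_2=O_P(1)$), the auxiliary minimax problem collapses by rotational symmetry: the optimization over the direction of $\balphap$ reduces to a scalar optimization over $\aperp=\|\balphap\|_2$, and the coupling between $\bu$ and $\balphap$ becomes $u_i=\aperp G_{\perp,i}+\frac{\|\bmu\|}{\sqrt{N}}\cdot(\text{Gaussian})$ modulo standard algebra. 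After decoupling the per-sample minimization over $u_i$ and sending $N,p\to\infty$, the empirical average converges to the expectation in \eqref{eq:GeneralGLM} by the law of large numbers (uniformly on compacts thanks to convexity and at-most-quadratic growth of $L$). The scalar parameter $\mu$ is the Lagrange multiplier remaining after this reduction, and the $-\frac{1}{2\delta_0}\mu\aperp^2$ term comes from evaluating $\frac{\mu}{2N}\|\bu-\aperp\bG_\perp\|^2$ at its saddle after decoupling.

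With the limiting Lagrangian $\cuL$ in hand, I would verify convexity-concavity in $(\balpha,\mu)$ (convexity in $\balpha$ comes from convexity of $L$ and from $\|\balpha\|^2$; concavity in $\mu$ comes from the $-\frac{1}{2\delta_0}\mu\aperp^2$ term and the infimal convolution structure of the inner minimization). Uniqueness of $\balpha^\star$ follows from strict convexity due to the ridge penalty. Then the standard CGMT conclusion gives concentration of the primal minimizer's projections onto any fixed directions equal to those of the auxiliary solution, which is $(\alpha_0^\star,\alpha_s^\star,\alpha_\perp^\star)$ in the decomposition above; this is exactly \eqref{eq:LimitProj}, proving $(c)$. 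For $(a)$, by $(c)$ and Gaussianity of test features, $\<\hbtheta_\lambda,\bxnew\> \Rightarrow \alpha_0^\star G_0+\sqrt{(\alpha_s^\star)^2+(\alpha_\perp^\star)^2}\,G$ jointly with $Y_{\snew}$ distributed as in \eqref{eq:JointRV}, and the at-most-quadratic-growth hypothesis on $L_{\stest}$ allows passing this limit through the expectation via uniform integrability. Part $(b)$ follows from $(a)$ by subtracting the Bayes-like quantity $\E L_{\stest}(c_* G_0;Y)$ (with $c_*$ as in Remark \ref{rmk:PopMin}).

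The main obstacle is the standard but delicate CGMT reduction in the presence of the weights $S_i$: one must check that $S_i$, which depends only on $\<\surrth,\bx_i\>=\beta_0 G_{0,i}+\beta_s G_{s,i}$, is measurable with respect to the ``easy'' two-dimensional component and is independent of the residual Gaussian noise $\bP^\perp\bx_i$ that CGMT manipulates. This is what makes the per-sample term $S(\<\bbeta,\bG\>)L(\alpha_0 G_0+\alpha_s G_s+u,Y)$ appear cleanly inside the expectation in $\cuL$. A secondary technical point is the non-smoothness that randomized $S_i$ could introduce; this is handled by writing $S_i=s(\<\surrth,\bx_i\>,U_i)$ with $U_i\sim\Unif([0,1])$ independent and pushing the expectation over $U$ inside, so that the CGMT regularity requirements (convexity of $L$, quadratic growth, coercivity via $\lambda>0$) are the only ones that matter.
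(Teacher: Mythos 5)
Your proposal is correct and follows essentially the same route as the paper: a Gordon/CGMT argument after decomposing $\btheta$ and $\bx_i$ along $\btheta_0$, $\bP_0^{\perp}\surrth$ and their orthogonal complement, with the selection weights entering harmlessly because $S_i$ is measurable with respect to $(G_{0,i},G_{s,i},U_i)$ and independent of the residual Gaussian component, localization via the ridge-induced strong convexity, and the Lagrange multiplier $\mu$ producing the term $-\tfrac{1}{2\delta_0}\mu\aperp^2$ in the limiting Lagrangian. The paper's appendix fills in the same skeleton with a uniform (Lipschitz-on-compacts) convergence argument for the auxiliary objective and a limiting variational problem over random variables $V$, but there is no substantive difference in approach.
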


The proof of Theorem \ref{thm:HiDim} is deferred to Appendix \ref{sec:ProofHiDim}. We can further specialize the above formulas to the two classes of data selection schemes
studied before: unbiased  and non-reweighting schemes.

\paragraph{Unbiased data selection.} In this case $S_i(\<\surrth,\bx_i\>)= 1/\pi(\<\surrth,\bx_i\>)$
with probability $\pi(\<\surrth,\bx_i\>)$
and $S_i(\<\surrth,\bx_i\>)=0$ otherwise. The Lagrangian reduces to
\begin{align}
\cuL(\balpha,\mu):= &\frac{\lambda}{2}\|\balpha\|^2-\frac{1}{2\delta_0}\mu\aperp^2+
\E\Big\{\min_{u\in\reals}\big[L(\alpha_0 G_0+\alpha_s G_s+u;Y)
+\frac{1}{2}\mu \, \pi(\<\bbeta,\bg\>)\big(u-\aperp G_{\perp}\big)^2\big]\Big\}\, .\nonumber
\end{align}
%
%

\paragraph{Non-reweighting data selection.} In this case $S_i(\<\surrth,\bx_i\>)= 1$
with probability $\pi(\<\surrth,\bx_i\>)$
and $S_i(\<\surrth,\bx_i\>)=0$ otherwise. The Lagrangian reduces to

\begin{align}
\cuL(\balpha,\mu):= &
\frac{\lambda}{2}\|\balpha\|^2-\frac{1}{2\delta_0}\mu\aperp^2
+\E\Big\{\pi(\<\bbeta,\bg\>)\min_{u\in\reals}\big[L(\alpha_0 G_0+\alpha_s G_s+u;Y)
+\frac{1}{2}\mu\, \big(u-\aperp G_{\perp}\big)^2\big]\Big\}\, .
\label{eq:NR-Lagrangian}
\end{align}
%
%
%

\subsection{The case of misspecified linear regression}

We revisit the case of misspecified linear regression, already studied 
in Section \ref{sec:NonMono}. We assume a non-reweighting data-selection scheme,
whose asymptotic behavior is characterized by the Lagrangian
\eqref{eq:NR-Lagrangian}.

In the case of square loss, the inner minimization over $u$ is easily solved and
we can then perform the maximization over $\mu$
in Theorem \ref{thm:HiDim}
analytically. This calculation yields the following 
Lagrangian
\begin{align}
\cuL_{\sls}(\balpha):= &
\frac{1}{2}\left(
\sqrt{\E\Big\{\pi(\<\bbeta,\bg\>)\big[Y-\<\balpha,\bg\>\big]^2\Big\}}
-\frac{\aperp}{\sqrt{\delta_0}}\right)_+^2+
\frac{\lambda}{2}\|\balpha\|^2_2\, .\label{eq:LagrangianLS}
\end{align}
We then have the following consequence of Theorem \ref{thm:HiDim}.
\begin{corollary}
Assume the misspecified generalized linear model of 
Section \ref{sec:SettingHiDim}, and further consider the case of square loss 
$L(\btheta;y,\bx)=(y-\<\btheta,\bx\>)^2/2$.  
Let  $\balpha^*=(\alpha^*_0,\alpha^*_s,\aperp^*)$ be the unique minimizer
of the Lagrangian $\cuL_0$.
Then claims $(a)$ to $(d)$ of Theorem \ref{thm:HiDim} hold.
\end{corollary}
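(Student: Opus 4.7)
\emph{Plan of proof.} The strategy is to take the non-reweighting Lagrangian \eqref{eq:NR-Lagrangian} furnished by Theorem \ref{thm:HiDim}, specialize it to $L(u,y) = (y-u)^2/2$, and eliminate in closed form first the inner minimization over $u$ and then the outer maximization over $\mu\ge 0$, reducing the saddle-point problem of Theorem \ref{thm:HiDim} to the single minimization of $\cuL_{\sls}$. Once the equivalence of the two variational problems is established, claims $(a)$–$(d)$ of Theorem \ref{thm:HiDim} transfer verbatim. The hypotheses of that theorem are trivially satisfied: $u\mapsto (y-u)^2/2$ is convex, continuous, and has quadratic growth, and the regularization parameter $\lambda>0$ is inherited from \eqref{eq:ER_HiDim}.

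\emph{Inner minimization.} For the square loss the inner problem reads
\begin{align*}
\min_{u\in\reals}\Big\{\tfrac{1}{2}(Y-\alpha_0 G_0-\alpha_s G_s-u)^2+\tfrac{1}{2}\mu(\aperp G_\perp-u)^2\Big\}.
\end{align*}
Setting $A := Y-\alpha_0 G_0-\alpha_s G_s$ and $B := \aperp G_\perp$, the first-order condition gives $u^* = (A+\mu B)/(1+\mu)$, and a direct substitution yields the minimum value $\tfrac{\mu}{2(1+\mu)}(A-B)^2 = \tfrac{\mu}{2(1+\mu)}(Y-\<\balpha,\bg\>)^2$. Plugging this back into \eqref{eq:NR-Lagrangian} and introducing the abbreviation
\begin{align*}
V(\balpha) := \E\bigl\{\pi(\<\bbeta,\bg\>)\,(Y-\<\balpha,\bg\>)^2\bigr\}\ge 0,
\end{align*}
the Lagrangian collapses to
\begin{align*}
\cuL(\balpha,\mu) = \tfrac{\lambda}{2}\|\balpha\|^2 - \tfrac{1}{2\delta_0}\mu\aperp^2 + \tfrac{\mu}{2(1+\mu)}V(\balpha).
\end{align*}

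\emph{Maximization over $\mu$.} The $\mu$-dependent part $g(\mu) := -\tfrac{\aperp^2}{2\delta_0}\mu + \tfrac{V}{2}\cdot\tfrac{\mu}{1+\mu}$ is concave with derivative $g'(\mu) = -\aperp^2/(2\delta_0) + V/[2(1+\mu)^2]$. If $V\delta_0 \le \aperp^2$, then $g$ is non-increasing on $[0,\infty)$ and $\max_{\mu\ge 0}g(\mu)=0$. Otherwise the unconstrained maximizer is $\mu^* = \sqrt{V\delta_0}/\aperp - 1 \ge 0$, and a short computation gives $g(\mu^*) = \tfrac{1}{2}\bigl(\sqrt{V}-\aperp/\sqrt{\delta_0}\bigr)^2$. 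Combining the two cases,
\begin{align*}
\max_{\mu\ge 0}\cuL(\balpha,\mu) = \tfrac{\lambda}{2}\|\balpha\|^2 + \tfrac{1}{2}\Bigl(\sqrt{V(\balpha)}-\aperp/\sqrt{\delta_0}\Bigr)_+^2 \;=\; \cuL_{\sls}(\balpha),
\end{align*}
which is precisely \eqref{eq:LagrangianLS}.

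\emph{Conclusion.} It remains to identify the saddle point $(\balpha^*,\mu^*)$ furnished by Theorem \ref{thm:HiDim} with the minimizer of $\cuL_{\sls}$. The function $\cuL(\balpha,\mu)$ is convex in $\balpha$ (the loss is a quadratic in $\<\balpha,\bg\>$ plus the strictly convex ridge $\lambda\|\balpha\|^2/2$ with $\lambda>0$) and concave in $\mu$ (as just observed); hence the min-max can be performed in either order, and the unique minimizer of $\min_\balpha\max_{\mu\ge 0}\cuL(\balpha,\mu)$ coincides with the unique minimizer of $\cuL_{\sls}$ (uniqueness coming from strict convexity of the ridge term). Applying parts $(a)$–$(d)$ of Theorem \ref{thm:HiDim} to this $\balpha^*$ then yields the stated claims. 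The main—though still elementary—subtlety is the joint convexity/concavity justifying the interchange of $\min_\balpha$ and $\max_\mu$ so that the reduced Lagrangian $\cuL_{\sls}$ truly encodes the same saddle point appearing in Theorem \ref{thm:HiDim}; once this is noted, the rest is an explicit calculation.
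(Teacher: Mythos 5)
Your proof is correct and follows essentially the same route as the paper: specialize the non-reweighting Lagrangian \eqref{eq:NR-Lagrangian} to the square loss, solve the inner minimization over $u$ in closed form, perform the maximization over $\mu$ analytically to recover $\cuL_{\sls}$ of \eqref{eq:LagrangianLS}, and then invoke Theorem \ref{thm:HiDim}. One small remark: the min--max interchange you invoke at the end is unnecessary (and your claim that $\cuL(\balpha,\mu)$ is convex in $\balpha$ is not obvious because of the $-\mu\aperp^2/(2\delta_0)$ term); since Theorem \ref{thm:HiDim} already defines $\balpha^*$ through $\min_{\balpha}\max_{\mu\ge 0}$, computing the maximum over $\mu$ pointwise in $\balpha$ directly identifies $\balpha^*$ with the unique minimizer of $\cuL_{\sls}$.
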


The next statement gives a particularly simple expression in the case of perfect surrogate, and ridgeless limit $\lambda\to 0$. 
For $n>p$ this is standard least squares, while for $n\le p$,
this is minimum $\ell_2$ norm interpolation. Its proof is deferred to Appendix \ref{sec:LinearHighD}.
\begin{proposition}\label{propo:LinearHighD}
Assume the misspecified generalized linear model of 
Section \ref{sec:SettingHiDim}, and further consider the case of square loss  $L(\btheta;y,\bx)=(y-\<\btheta,\bx\>)^2/2$,
and further consider the case of a perfect surrogate 
which, without loss of generality, we assume normalized:
$\surrth = \btheta_0/\|\btheta_0\|$.
Define the quantities 
\begin{align}
A_{\pi} := \frac{1}{\gamma}
\E[G^2\pi(G)]\, ,\;\;\;\;\; B_{\pi} :=  \frac{1}{\gamma}\E[G Y\pi(G)]
\, ,\;\;\;\;\; C_{\pi} :=  \frac{1}{\gamma}\E[Y^2\pi(G)]\, ,
\end{align}
where expectation is with respect to $G\sim\normal(0,1)$ and
$Y\sim\sP(\,\cdot\,|G)$. In particular,
we let $A_1$, $B_1$, $C_1$ be the quantities defined above with 
$\pi(t) = 1$ identically.

Then the asymptotic excess risk of ridgeless regression is
as follows

\textup(here $\delta:=\delta_0\gamma=\lim_{n,p\to\infty} \left(n/p\right)$\textup):
\begin{enumerate}
\item For $\delta>1$:
\begin{align}
   \lim_{\lambda\to 0}\plim_{N,p\to\infty}R_{\sexc}(\hbtheta_{\lambda})
   = \lt( \frac{B_1}{A_1}-\frac{B_{\pi}}{A_{\pi}} \rt)^2+
   \frac{1}{\delta-1}\cdot \lt(C_{\pi }-\frac{B^2_{\pi}}{A_{\pi}}\rt)\, .\label{eq:Risk_LS_Interp_Underp_delgt1}
\end{align}
\item For $\delta<1$:
\begin{align}
   \lim_{\lambda\to 0}\plim_{N,p\to\infty}R_{\sexc}(\hbtheta_{\lambda})
   = &\lt(\frac{B_1}{A_1} - \frac{B_{\pi}\delta}{1-\delta+A_{\pi}\delta}\rt)^2
    +   \frac{B_\pi^2}{A_\pi}\cdot \frac{\delta(1-\delta)}{(1-\delta+A_{\pi}\delta)^2}
   \label{eq:Risk_LS_Interp_Underp_dellt1}\\
   &+
   \frac{\delta}{1-\delta}\cdot \lt(C_{\pi }-\frac{B^2_{\pi}}{A_{\pi}}\rt)
   \, .\nonumber
\end{align}
\end{enumerate}
\end{proposition}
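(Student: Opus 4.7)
The plan is to specialize the saddle-point characterization of the corollary preceding the statement to the perfect surrogate, and then solve for $\balpha^*$ explicitly in the ridgeless limit $\lambda\to 0^+$. With $\surrth=\btheta_0/\|\btheta_0\|$ we have $\beta_0=1$ and $\beta_s=0$, so $\pi(\<\bbeta,\bg\>)=\pi(G_0)$ depends only on $G_0$. Since $G_s$ and $G_\perp$ are independent of $(G_0,Y)$ with mean zero and unit variance, a direct expansion yields
\begin{align*}
\E\big\{\pi(G_0)\,[Y-\alpha_0 G_0-\alpha_s G_s-\aperp G_\perp]^2\big\}=\gamma f(\alpha_0)+\gamma(\alpha_s^2+\aperp^2),
\end{align*}
where $f(\alpha_0):=A_\pi\alpha_0^2-2B_\pi\alpha_0+C_\pi$. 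Because $\alpha_s^2$ enters $\cuL_{\sls}$ only through an additive positive term inside the square root and via the ridge, any minimizer has $\alpha_s^*=0$.

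Rather than minimizing $\cuL_{\sls}$ directly, I would go back to the saddle-point form \eqref{eq:NR-Lagrangian} with the explicit dual variable $\mu\ge 0$. For the square loss the inner minimization over $u$ is quadratic and yields
\begin{align*}
\cuL(\alpha_0,\aperp,\mu)=\frac{\lambda}{2}(\alpha_0^2+\aperp^2)-\frac{\mu\aperp^2}{2\delta_0}+\frac{\mu\gamma}{2(1+\mu)}\big[f(\alpha_0)+\aperp^2\big].
\end{align*}
The three stationarity conditions, using $\delta=\gamma\delta_0$, read
\begin{align*}
\lambda\,\delta_0(1+\mu)&=\mu(1+\mu-\delta),\\
\alpha_0\big[\lambda(1+\mu)+\mu\gamma A_\pi\big]&=\mu\gamma B_\pi,\\
\aperp^2\big[(1+\mu)^2-\delta\big]&=\delta f(\alpha_0).
\end{align*}
Taking $\lambda\to 0^+$, the first equation forces $\mu(1+\mu-\delta)\to 0$, and admissibility $(1+\mu)^2\ge\delta$ in the third equation selects the correct branch. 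For $\delta>1$ one picks $\mu\to\delta-1>0$; substituting in the other two equations yields $\alpha_0^*=B_\pi/A_\pi$ and $\aperp^{*2}=(C_\pi-B_\pi^2/A_\pi)/(\delta-1)$. For $\delta<1$ only $\mu\to 0$ is admissible; expanding $\mu\sim\lambda\delta_0/(1-\delta)$ in the remaining equations produces $\alpha_0^*=\delta B_\pi/(1-\delta+\delta A_\pi)$, and then decomposing $f(\alpha_0^*)=(C_\pi-B_\pi^2/A_\pi)+A_\pi(\alpha_0^*-B_\pi/A_\pi)^2$ gives the claimed expression for $\aperp^{*2}$.

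Finally I would apply Theorem~\ref{thm:HiDim}(a): for the square loss with a fresh test Gaussian $G$ independent of $(G_0,Y)$, the asymptotic test error collapses to a Pythagorean identity, and using $c_*=B_1/A_1=B_1$ (since $A_1=1$) together with $\alpha_s^*=0$, the excess risk reduces (up to the normalization of the loss) to $(\alpha_0^*-B_1/A_1)^2+\aperp^{*2}$, from which substitution of the closed forms above produces Eqs.~\eqref{eq:Risk_LS_Interp_Underp_delgt1} and \eqref{eq:Risk_LS_Interp_Underp_dellt1}. The one genuine subtlety is justifying that $\lambda\to 0^+$ commutes with the probability limit: for $\delta>1$ this is painless because the limiting saddle lies in the interior where $\cuL$ is strictly convex-concave with non-degenerate Hessian, so the implicit function theorem applies; for $\delta<1$ the dual $\mu$ vanishes at rate $\lambda$ and we approach the boundary of the active set of the $(\cdot)_+$, so I would verify the expansion $\mu=\lambda\delta_0/(1-\delta)+o(\lambda)$ uniformly via the quadratic first KKT equation. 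This boundary-regime expansion, rather than any of the algebraic steps, is the main obstacle.
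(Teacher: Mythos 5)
Your proposal is correct and arrives at exactly the right saddle-point values $(\alpha_0^*,\aperp^*)$ and hence at Eqs.~\eqref{eq:Risk_LS_Interp_Underp_delgt1}--\eqref{eq:Risk_LS_Interp_Underp_dellt1}; I checked your reduced Lagrangian $\cuL(\alpha_0,\aperp,\mu)=\frac{\lambda}{2}(\alpha_0^2+\aperp^2)-\frac{\mu\aperp^2}{2\delta_0}+\frac{\mu\gamma}{2(1+\mu)}[f(\alpha_0)+\aperp^2]$ and your three stationarity equations, and they are consistent with the $(\cdot)_+^2$ form of Eq.~\eqref{eq:LagrangianLS} (maximizing your expression over $\mu\ge 0$ recovers it). The route differs from the paper's in how the ridgeless limit is executed: the paper first eliminates $\mu$ analytically, then uses the standard fact that the ridge path converges to the minimum-$\ell_2$-norm element of $\argmin_{\balpha} G(\balpha)^2$, splitting into an interior stationary point for $\delta>1$ and a constrained min-norm problem on the zero set $S_0$ (solved with a Lagrange multiplier $\xi$) for $\delta<1$; you instead keep $\mu$ explicit and take $\lambda\to 0$ in the KKT system, selecting the branch $\mu\to\delta-1$ versus $\mu\sim\lambda\delta_0/(1-\delta)$. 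Your branch selection via the admissibility constraint $(1+\mu)^2\ge\delta$ (which needs the generic condition $f(\alpha_0)>0$, i.e. $C_\pi-B_\pi^2/A_\pi>0$, exactly as the paper implicitly needs $G(\balpha)>0$ at the unconstrained stationary point for $\delta>1$) is sound. One clarification: the "main obstacle" you flag is not actually present, because the proposition asserts the iterated limit $\lim_{\lambda\to 0}\plim_{N,p\to\infty}$ in that order; Theorem~\ref{thm:HiDim} gives the inner limit at each fixed $\lambda>0$ as a deterministic function of the saddle point $\balpha^*(\lambda)$, so all you need is the $\lambda\to 0$ limit of the deterministic KKT solutions, which your expansion $\mu=\lambda\delta_0/(1-\delta)+o(\lambda)$ already supplies; no interchange with the probability limit is required. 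The paper's min-norm formulation buys a cleaner treatment of the $\delta<1$ case (no boundary expansion in $\lambda$), while yours makes the role of the dual variable and the interpolation threshold $(1+\mu)^2=\delta$ more transparent; both yield the same algebra after the decomposition $f(\alpha_0^*)=(C_\pi-B_\pi^2/A_\pi)+A_\pi(\alpha_0^*-B_\pi/A_\pi)^2$.
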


\begin{remark}[Random data selection]
    In the case of no data selection, $\pi(\bx)=\gamma$,
we have $A_{\pi}=A_1=1$,  
$B_{\pi}= B_1$, $C_{\pi}=C_1$, and 
we recover the result of ordinary ridgeless regression \cite{advani2020high, hastie2022surprises}
\begin{align}
\delta>1  \; &:\;\;\;\;\;
\lim_{\lambda\to 0}\plim_{N,p\to\infty}R_{\sexc}(\hbtheta_{\lambda})
   = 
   \frac{1}{\delta-1}\cdot \lt(C_{1}-\frac{B^2_{1}}{A_{1}}\rt)\, ,\\
\delta<1  \; &:\;\;\;\;\;
\lim_{\lambda\to 0}\plim_{N,p\to\infty}R_{\sexc}(\hbtheta_{\lambda})
   = B_1^2(1-\delta)+
   \frac{\delta}{\delta-1}\cdot \lt(C_{1}-\frac{B^2_{1}}{A_{1}}\rt)\, .
\end{align}
\end{remark}
%
%
\section{Numerical results: Synthetic data}
\label{sec:NumericalSynth}

\begin{figure}
\begin{center}
\includegraphics[width=0.9\linewidth]{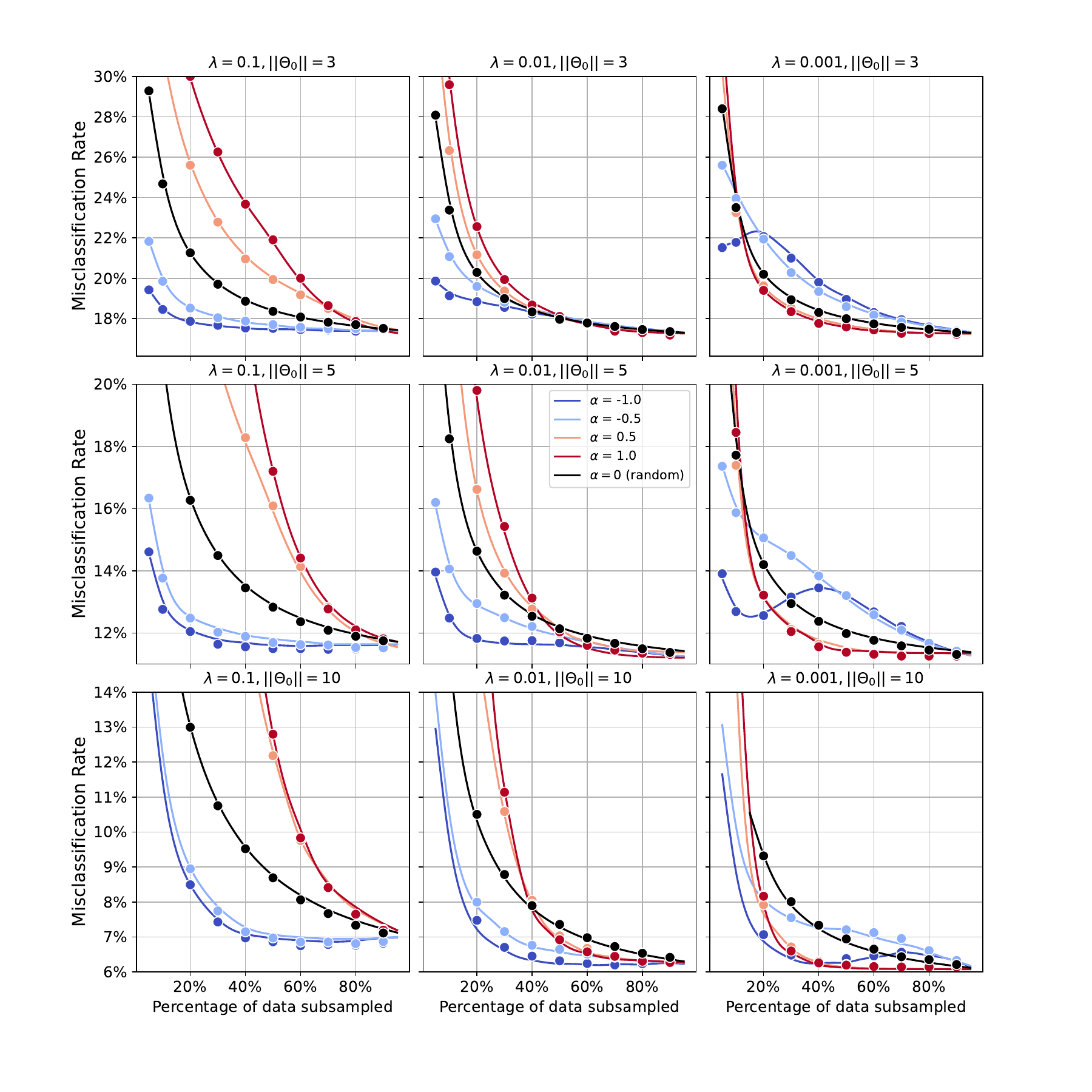}
\end{center}
\caption{Misclassification error for logistic regression  
after non-reweighted subsampling under the scheme of 
Eq.~\eqref{eq:Scheme}. Here $N=34345$, $p=932$.
Circles: simulations. Continuous lines: theory. Each panel
corresponds to a different choice of $\|\btheta_0\|$, 
$\lambda$. Different colors represent different values of the exponent $\alpha$ in Eq.~\eqref{eq:Scheme}. }
\label{fig:Synthetic-Mis-Error-Lowdim}
\end{figure}

In this section we present numerical simulations within the synthetic data model introduced in Section \ref{sec:SettingHiDim}.
We consider the case of
binary labels $y_i\in\{+1,-1\}$. Summarizing,
we generate isotropic feature vectors $\bx_i\sim\normal(0,\id_p)$,
and labels with $\prob(y_i=+1|\bx_i) = f(\<\btheta_0,\bx_i\>)$.
We then perform  data selection,  with selection
probability
\begin{align}
\pi(\bx_i) &= \min\Big(c(\gamma)\, \phi''(\<\surrth,\bx_i\>) ^{\alpha};\;1\Big)\, ,\label{eq:Scheme}
\end{align}
where we remind the reader that $\phi$ is the log-moment generating function. In the binary case
$\phi(t) = \log \left(2\cosh(t)\right)$ and hence $\phi''(t)=1-\tanh(t)^2$
is label variance under the logistic model.
We fix the constant  $c(\gamma)$, chosen such that $\sum_{i\le N}\pi(\bx_i)  = n$.
We can rewrite the above formula as
\begin{align}
\pi(\bx_i) &= \min\Big(c(\gamma)\, p(\<\surrth,\bx_i\>)^{\alpha}
(1-p(\<\surrth,\bx_i\>))^{\alpha};\; 1\Big)
\, ,\label{eq:SchemeB}
\end{align}
where  $p(\<\surrth,\bx_i\>)=(\tanh(\<\surrth,\bx_i\>) + 1) / 2 = 
(1+e^{-2\<\surrth,\bx_i\>})^{-1}$ is the probability of $y_i=+1$ under the surrogate model (note: binary labels are defined over $y_i\in\{+1,-1\}$ and not $y_i\in\{+1, 0\}$).
Hence, $\alpha>0$ upsamples data points with higher uncertainty
under the surrogate model (`difficult' data),
while $\alpha<0$ upsamples data points with lower uncertainty
 (`easy' data). We then fit ridge regularized logistic regression 
 to the selected 
 data (cf. Eq.~\eqref{eq:ER_HiDim}) and evaluate misclassification 
error on a hold-out test set. 

It is instructive to compare the above scheme with influence function-based
data selection, cf. Example \ref{ex:FirstGLM}. Within the present data model,
the population Hessian takes the form $\bH = a_+ \id+b_+ \btheta_0\btheta_0^{\sT}/\|\btheta_0\|^2$
where $a_+= \E\{\phi''(\|\btheta_0\|G)\}$, $b_+= \E\{\phi''(\|\btheta_0\|G)(G^2-1)\}$.
The score of Example \ref{ex:FirstGLM} (cf. Eq.~\eqref{eq:GLM_Score_Unbiased})
reads (an overall factor $da_-$ is immaterial and introduced for convenience)
\begin{align}
Z(\bx_i) = \phi''(\<\btheta_0,\bx_i\>)
\Big\{\frac{\|\bx_i\|^2}{d}+b_-\frac{\<\btheta_0,\bx_i\>^2}{da_-\|\btheta_0\|^2}\Big\}\, ,
\end{align}
where $a_-=1/a_+$, $b_- = (a_++b_+)^{-1}-1/a_+$. For large dimension $d$,
$\|\bx_i\|^2/d=1+O_P(1/\sqrt{d})$, and  $\<\btheta_0,\bx_i\> =O_P(1)$. 
We therefore get
\begin{align}
\label{eq:inf_fn_exp}
Z(\bx_i) =  \phi''(\<\btheta_0,\bx_i\>)\cdot\big(1+O_P(1/\sqrt{d})\big)\, .
\end{align}
Therefore influence-function based subsampling essentially  corresponds to the 
case $\alpha=1/2$ of Eq.~\eqref{eq:Scheme}.

\begin{figure}
\begin{center}
\includegraphics[width=0.9\linewidth]{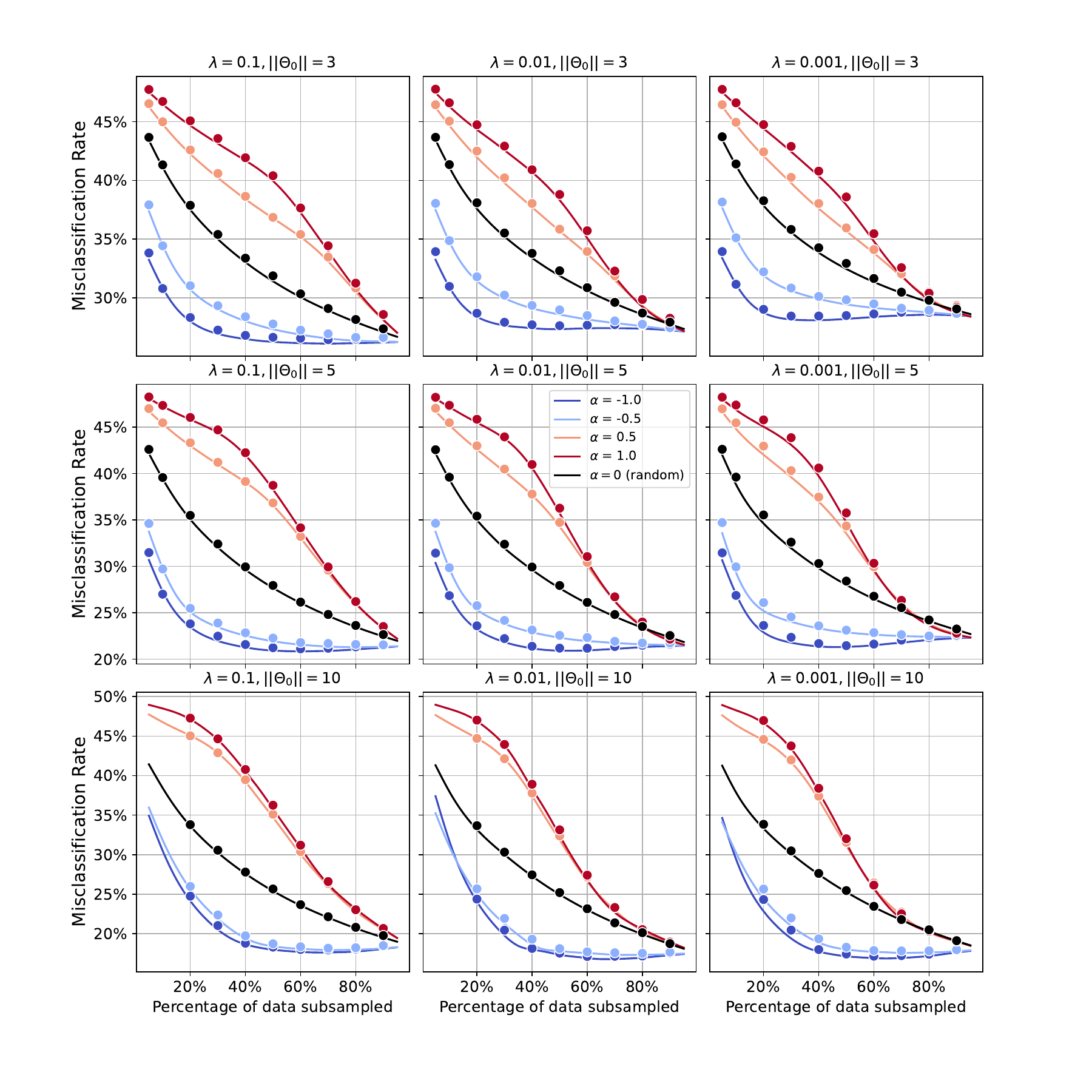}
\end{center}
\vspace{-1cm}
\caption{Same as Figure \ref{fig:Synthetic-Mis-Error-Lowdim},
with $N=6870$ and $p=3000$.}
\label{fig:Synthetic-Mis-Error-Highdim}
\end{figure}

\begin{figure}
\centering
\includegraphics[width=0.9\linewidth]{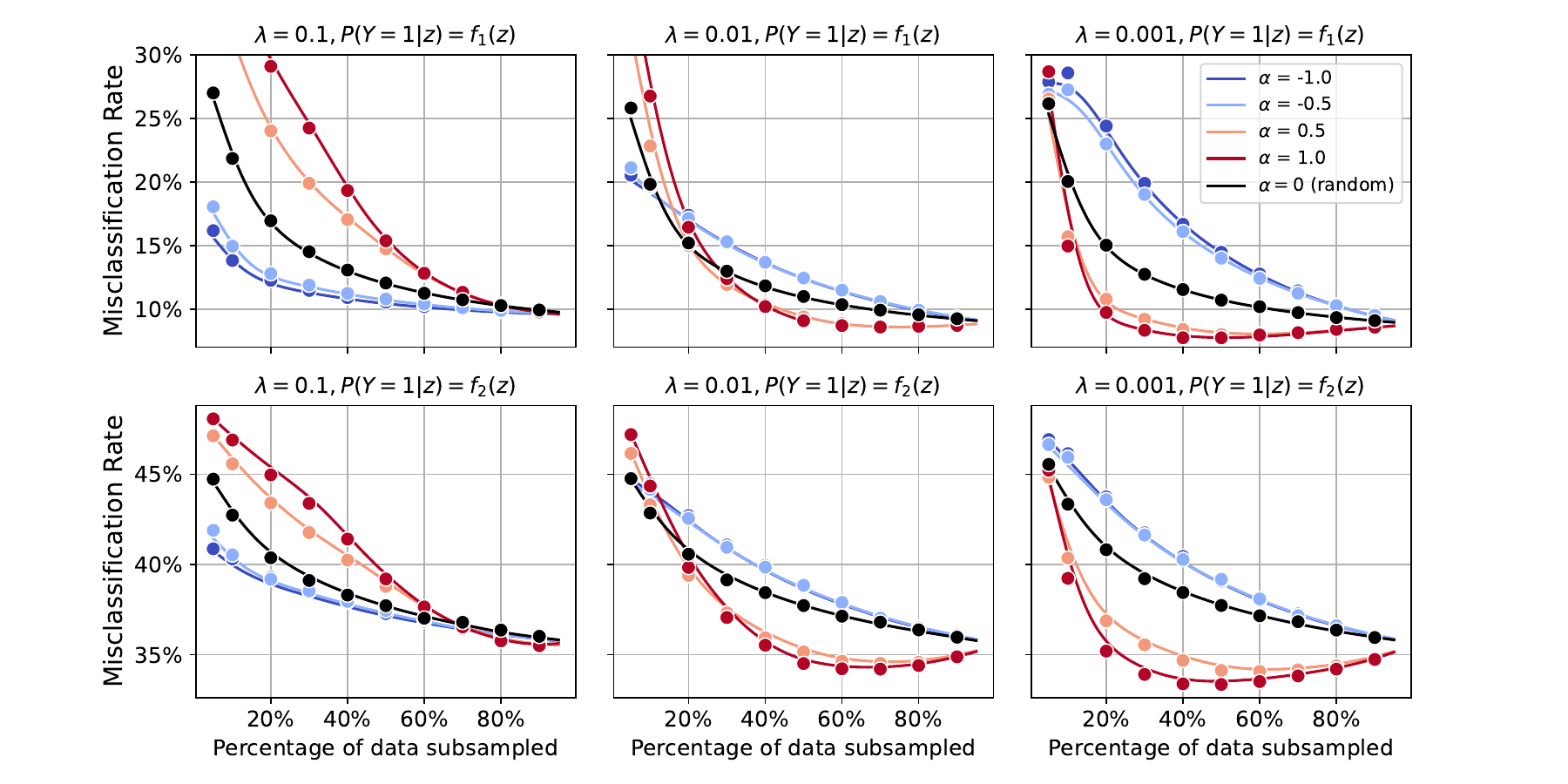}
\vspace{0.5cm} 
\includegraphics[width=0.9\linewidth]{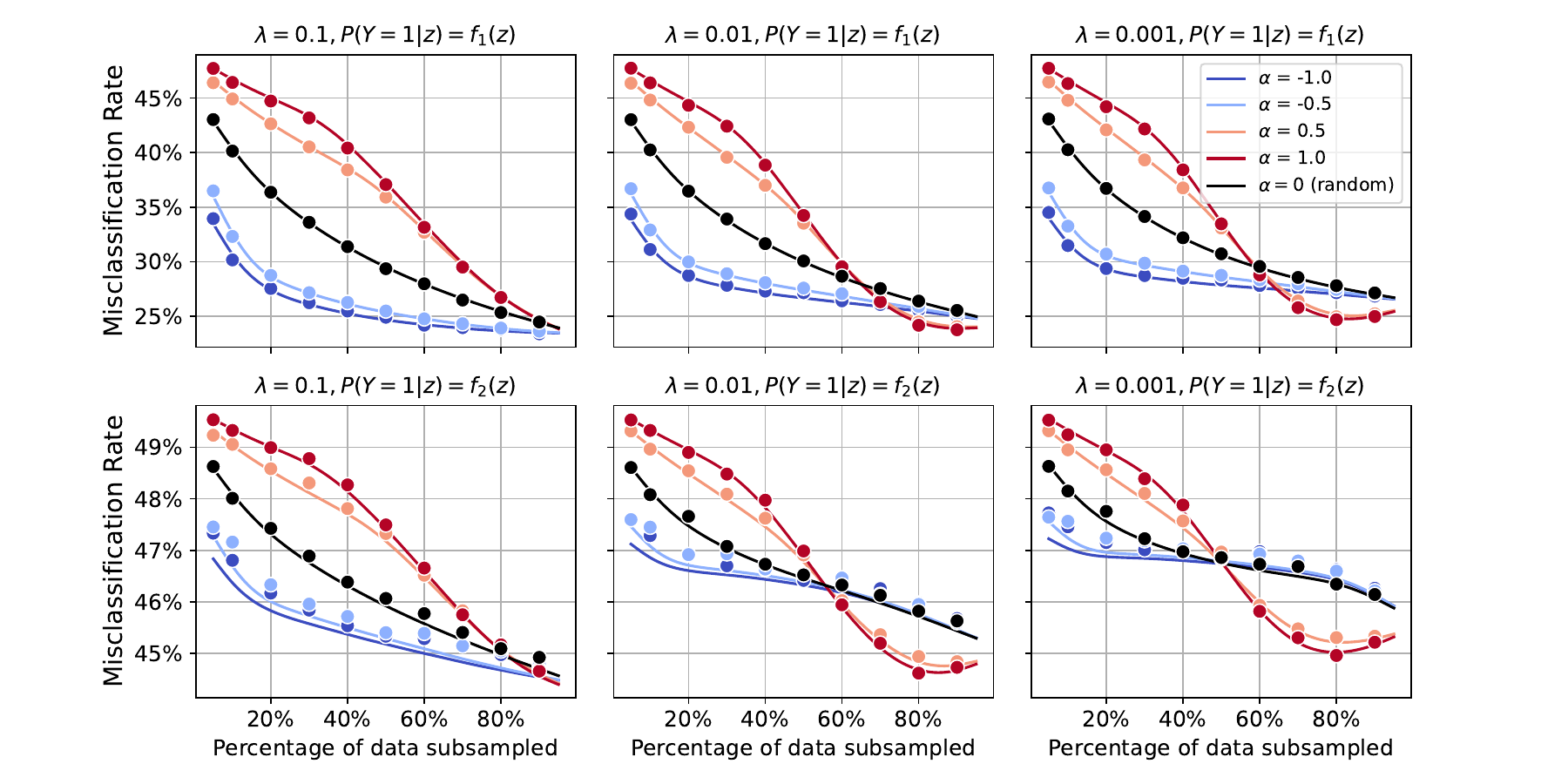}
\caption{Misclassification error for logistic regression  
after non-reweighted subsampling under the scheme of 
Eq.~\eqref{eq:Scheme}. Circles: simulations. Continuous lines: theory. Data are generated according to 
a misspecified model $\prob\big(\by=1\big|z\big) := f(z)$, where $z = \<\btheta_0, \bx\>$. Top plot: $N=34345$, $p=932$; bottom plot: $N=6870$, $p=3000$. Each panel corresponds to a different choice of $f$, $\lambda$. First and second row in each plot corresponds to $f=f_1$ and $f=f_2$ respectively where $f_1(z) = \eta\bfone_{z\geq 0 } + (1-\eta)\bfone_{z<0}$, $f_2(z) =  (1-\zeta) \bfone_{z < -0.5 } + (1-\eta)\bfone_{(-0.5 \leq z < 0)} + \eta \bfone_{(0 \leq z < 0.5)} + \zeta \bfone_{z \geq 0.5}$; $(\eta, \zeta) = (0.95, 0.7)$ and $\|\btheta_0\|=5$ for $f_2(z)$ ($f_1(z)$ only depends on the sign of $z$).}
\label{fig:Synthetic-Mis-Error-misspec}
\end{figure}

\begin{figure}
\centering
\includegraphics[width=0.9\linewidth]{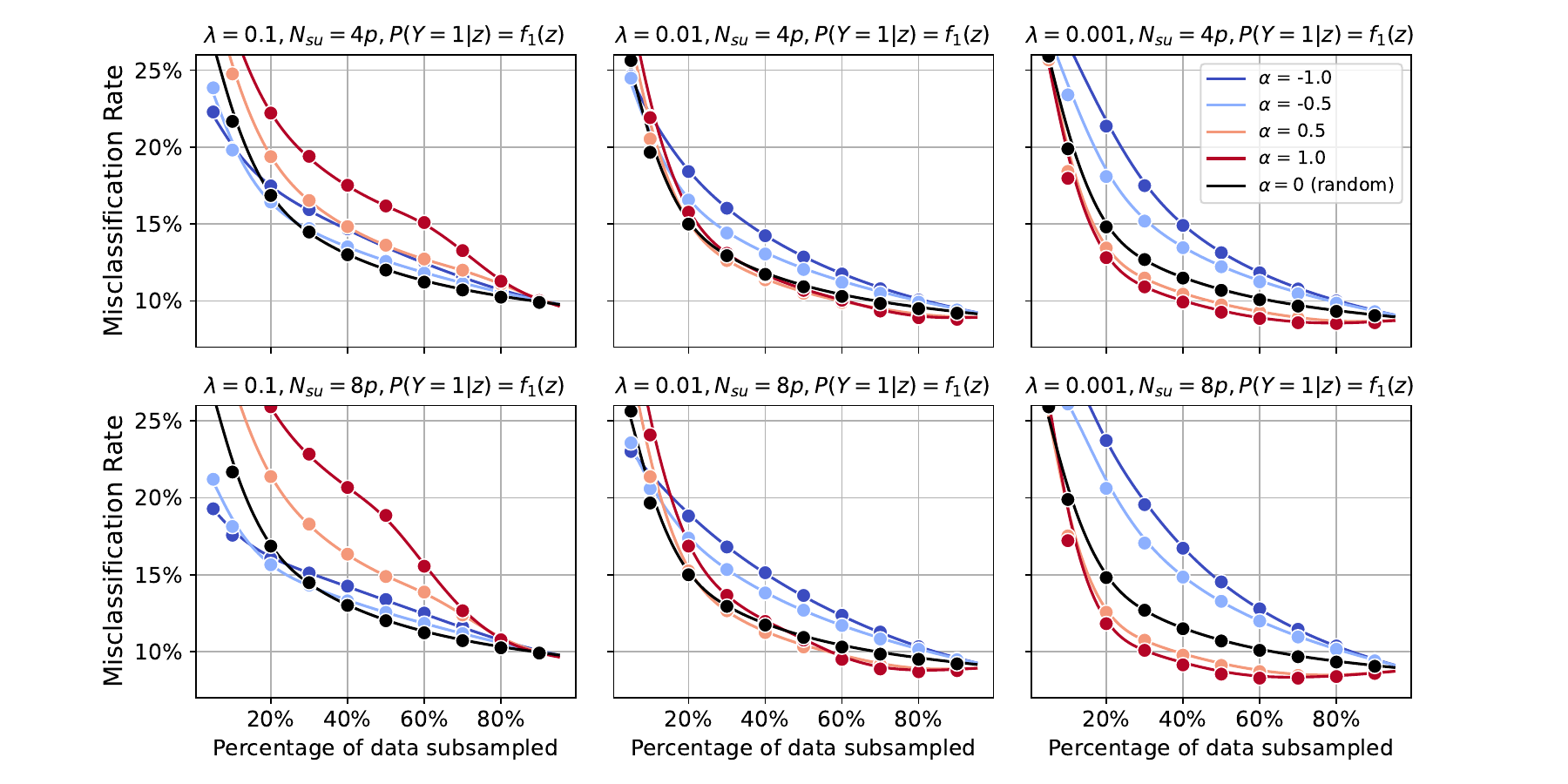}
\caption{Misclassification error for logistic regression  
after non-reweighted subsampling with data generated according to 
the  same misspecified model as in Figure \ref{fig:Synthetic-Mis-Error-misspec}.
Circles: simulations. Continuous lines: theory. Here $N=34345$, $p=932$.
Unlike in Figure \ref{fig:Synthetic-Mis-Error-misspec}, we use an imperfect
surrogate $\surrth$ that is fit on $N_{\surr}$ samples from the same distribution.
Top row: $N_{\surr}=4p$. Bottom row:  $N_{\surr}=8p$. The values of $\lambda$
indicated in the plot titles are used when learning on the selected subsample.}
\label{fig:Synthetic-Mis-Error-misspec-Imperf}
\end{figure}

Figures \ref{fig:Synthetic-Mis-Error-Lowdim} and \ref{fig:Synthetic-Mis-Error-Highdim} report results of simulations (circles) and theoretical 
predictions (continuous lines), respectively in a  lower-dimensional setting ($N=34345$, $p=932$) and in a higher-dimensional setting ($N=6870$, $p=3000$). Simulation results are median over $10$ realizations. Theoretical predictions are computed by evaluating the saddle point formula of Theorem \ref{thm:HiDim}. 

A few remarks are in order:
\begin{enumerate}
    \item The agreement between theory and simulations is excellent over the whole range of settings we investigated. 
    \item Theory correctly predicts that, for a suitable choice of $\alpha$, the non-reweighted  data selection
    scheme of Eq.~\eqref{eq:Scheme} achieves nearly identical test error as full data, while using as few as $40\%$ of the samples.
    \item The optimal choice of $\alpha$ is highly dependent on the setting, with $\alpha<0$ broadly outperforming $\alpha>0$
    when the number of samples per dimension is smaller
    (Figure \ref{fig:Synthetic-Mis-Error-Highdim}). 
    \item For small $\lambda$ and certain choices of $\delta_0$, $\alpha$ we observe interesting non-monotonicities of the error: smaller sample sizes lead to lower misclassification error. This is of course related to suboptimality of that value of $\alpha$ and of the scheme \eqref{eq:Scheme} (see Figure \ref{fig:Synthetic-Mis-Error-Lowdim}). Indeed, as we saw in Section
    \ref{sec:NonMono}, the optimal data selection scheme is necessarily monotone in $n$.
\end{enumerate}

In Figure \ref{fig:Synthetic-Mis-Error-misspec} we repeat the above 
experiments within a misspecified linear model, 
whereby $\prob(y_i=+1|\bx_i) = f(\<\btheta_0,\bx_i\>)$. We 
carry out experiments with two choices of $f$:
\begin{align*}
f_1(z) &:= \eta\bfone_{z\geq 0 } + (1-\eta)\bfone_{z<0}\, ,\\ 
f_2(z) &:=  (1-\zeta) \bfone_{z < -0.5 } + (1-\eta)\bfone_{(-0.5 \leq z < 0)} + \eta \bfone_{(0 \leq z < 0.5)} + \zeta \bfone_{z \geq 0.5}\, .
\end{align*}
with $\eta=0.95$, $\zeta = 0.7$. Note that $f_1(z)$ depends only on the sign of $z$. For $f_2(z)$ we present results with $\|\btheta_0\|=5$.

Finally, while all previous figures use a perfect surrogate, Figure 
\ref{fig:Synthetic-Mis-Error-misspec-Imperf} explores the impact of an 
imperfect surrogate. Namely, we estimate $\surrth$ by using $N_{\surr}$
independent samples from the same distribution as the training samples. 
We train $\surrth$ using ridge-regularized logistic regression, with an oracle choice 
of the regularization parameter $\lambda$. This setting
gives an intuitive understanding of the `cost' of the surrogate model. 
We choose $N_{\surr}= 4p$ (top row) or $N_{\surr}= 8p$
(bottom row), corresponding to $N_{\surr}/N\approx 10.9\%$
or $N_{\surr}/N\approx 21.7\%$, respectively.

The agreement between theoretical predictions and simulation results is again excellent. Also, we observe behaviors
that are qualitatively new with respect to the previous
setting that assumes  well-specified data and a perfect surrogate. 
Most notably:
\begin{enumerate}
\item Learning after data selection often outperforms learning on the full sample. 
\item Upsampling `hard' datapoints (i.e. using $\alpha>0$) is often the optimal strategy.
This appears to be more common than in the well-specified case.
\item  As shown in Figure \ref{fig:Synthetic-Mis-Error-misspec-Imperf},
the performance of data selection-based learning degrades gracefully with the quality of the surrogate. 
\item In particular, we observe once more the striking phenomenon of Figure \ref{fig:Summary},
cf. bottom row, rightmost plot of Figure \ref{fig:Synthetic-Mis-Error-misspec-Imperf}. At subsampling fraction $n/N=60\%$, learning on
selected data outperforms learning on the full data, even if the surrogate model only used
additional $N_{\surr}/N\approx 21.7\%$ samples. As shown in next section, this effect is even stronger with real data.
\end{enumerate} 

%
\section{Numerical results: Real data}

For our real-data experiments we used an Autonomous Vehicle (AV) dataset, and a binary classification task. 
As in the synthetic data simulations, we use
ridge regularized logistic regression. This model was trained 
and tested on unsupervised features extracted from image data. Below we first provide details of the dataset and experiments, followed by empirical results.

\label{sec:NumericalReal}
\subsection{Dataset}
We use a subset of images obtained from the KITTI-360 train set \cite{Liao2022PAMI}. The KITTI-360 train set comprises $1408\times376$ dimensional 8-bit stereo images, with 2D semantic and instance labels. These images are sourced from $9$ distinct continuous driving trajectories. We only consider the left stereo image for our dataset. To adapt this dataset for a binary classification task, we initially center-crop the images to dimensions of $224\times224$ pixels. Subsequently, we 
assign binary labels, by setting $y_i=+1$ if the count of pixels containing the semantic label in a certain class
surpasses a predefined threshold. We choose `car' as the label and the pixel cutoff threshold is set at $50$, resulting in a chance accuracy of approximately $0.69$. 

We then extract SwAV embeddings of the images to serve as 
feature vectors \cite{caron2021unsupervised}. We use \code{torch.hub.load(`facebookresearch/swav:main', `resnet50')} as the base model and we use the $2048$ dimensional outputs from the penultimate layer (head) as the SwAV features. 
Following common practice, we normalize the images before computing feature vectors, using mean and standard 
deviations of the images calculated on ImageNet. This results in a dataset with a total of $N=61,280$ images with $p=2048$ dimensional features with
binary labels indicating the presence or absence of a car.

\subsection{Experiments}

We randomly partition this dataset into four disjoint sets:
$N_{\strain}=34,345$ images to perform sub-sampling and train 
models, $N_{\ssurr}=14,720$ images to train surrogate models,
$N_{\sval}=3665$ images for validation and $N_{\stest}=8550$ images
for reporting the final experiment results. Prior to model
training, we center and normalize each of the features using mean
and standard deviation calculated from $N_{\strain}$ and
$N_{\ssurr}$.

We proceeded by training a ridge-regularized logistic regression
model without intercept. The training utilized the L-BFGS 
optimization algorithm with a cap of 10,000 iterations implemented 
using the scikit-learn library \cite{scikit-learn}. Surrogate 
models are trained on a fraction $k$ of the surrogate set ($N_{\ssurr}$) 
where $k \in \{10\%, 50\%, 100\%\}$ using 5-fold cross validation 
to choose the regularization parameter $\lambda$. Note that, the
sample sizes used for these 
surrogate models correspond to $\{4.2\%, 21.4\%, 42.8\%\}$ of 
$N_{\strain}$ (but we use a disjoint set of data). 

We use the data selection procedure introduced in
the previous section, cf. Eqs.~\eqref{eq:Scheme}, \eqref{eq:SchemeB}.
We show empirical results for $\alpha \in \{-2, -1, -0.5, 0, 0.5, 1, 2\}$, 
where $\alpha=0$ corresponds to random subsampling and positive 
(negative) values of $\alpha$ correspond to upsampling hard (easy) 
examples respectively. To ensure numerical stability for negative 
values of $\alpha$ modify the 
definition of $\pi$ in  Eqs.~\eqref{eq:Scheme}, by limiting\footnote{Formally, we replace 
$\<\surrth,\bx_i\>$
by $T(\<\surrth,\bx_i\>)$, where $T(x) = \min(\max(x,-10),10).$}  $\<\surrth,\bx_i\>$ to be in $[-10, 10]$. 

We also perform experiments in which we select the 
$n$ samples with the largest (or smallest) values of 
$p(\<\surrth,\bx_i\>)(1-p(\<\surrth,\bx_i\>))$.
This corresponds to the limit cases 
$\alpha\to\infty$ and $\alpha\to -\infty$ respectively. 
We will refer to these limit cases as `Hard topK' and 
`Easy topK', respectively. 

\subsection{Results}

All experiments are repeated across five random  
subsamplings when the data selection scheme is probabilistic 
and across three different surrogate models when the surrogate model is trained an a strict subset of the 
$N_{\ssurr}$ samples reserved for this.

We vary four 
different parameters in our experiments:
\begin{itemize}
    \item The ridge regularization parameter $\lambda$.
    This is either fixed or selected optimally by taking $\lambda^* = \argmin_{\lambda \in \Lambda} 
    R_{\mbox{\tiny\rm val}}(\hbtheta_\lambda)$, where
     $R_{\mbox{\tiny\rm val}}$ is the risk on the validation set and 
     $\Lambda :=$ $\{0.001, 0.01, 0.03,\\ 0.06, 0.1, 1, 10\}$.
    \item The exponent $\alpha$ that parameterizes the  subsampling probabilities. 
    \item The `strength' of the surrogate model, $\tsurr$, namely, the sample size $n_{\ssurr}\in [0,N_{\ssurr}]$
    used to learn $\surrth$. We will report the ratio
    $n_{\ssurr}/N_{\strain}$, as this provides a direct measurement of how much information is required to train the surrogate model. In particular, we will qualitatively refer to surrogate models in results and figures as `weak', `medium' and `strong' for the cases where: $n_{\ssurr}/N_{\strain} = 4.2\%$, $n_{\ssurr}/N_{\strain} = 21.4\%$ and $n_{\ssurr}/N_{\strain} = 42.8\%$ respectively.
    \item A binary parameter $\tbias$
    indicating whether we are using 
    unbiased or non-reweighted subsampling (referred to as biased sampling in Fig.~\ref{fig:Summary}). 
\end{itemize}

Figure~\ref{fig:Summary} shows the misclassification error on test set for optimal $\lambda$ and $\alpha=0.5$ fixed, for both unbiased and non-reweighted (biased) subsampling, using `weak' and `strong' surrogate models.

Figure \ref{fig:Summary_Opt} shows test error for the optimal parameters $\{\lambda, \alpha, \tsurr, \tbias\}$, as selected
by minimizing the misclassification rate on the validation set. The 
reported results are computed on the test set. We compare this
optimal choice by a `constant strategy' that uses $\lambda=0.01$, $\alpha=0.5$, non-reweighted subsampling and weak surrogate models. As evident from the figure, this constant strategy performs almost optimally when $n$ is large, and still provides consistent improvements over random subsampling when $n$ is low. 
As a reminder to the reader, this strategy reflects influence-function based subsampling, although without reweighting and using a weak surrogate model.

\begin{figure}  
\begin{center}
\includegraphics[width=0.48\linewidth]{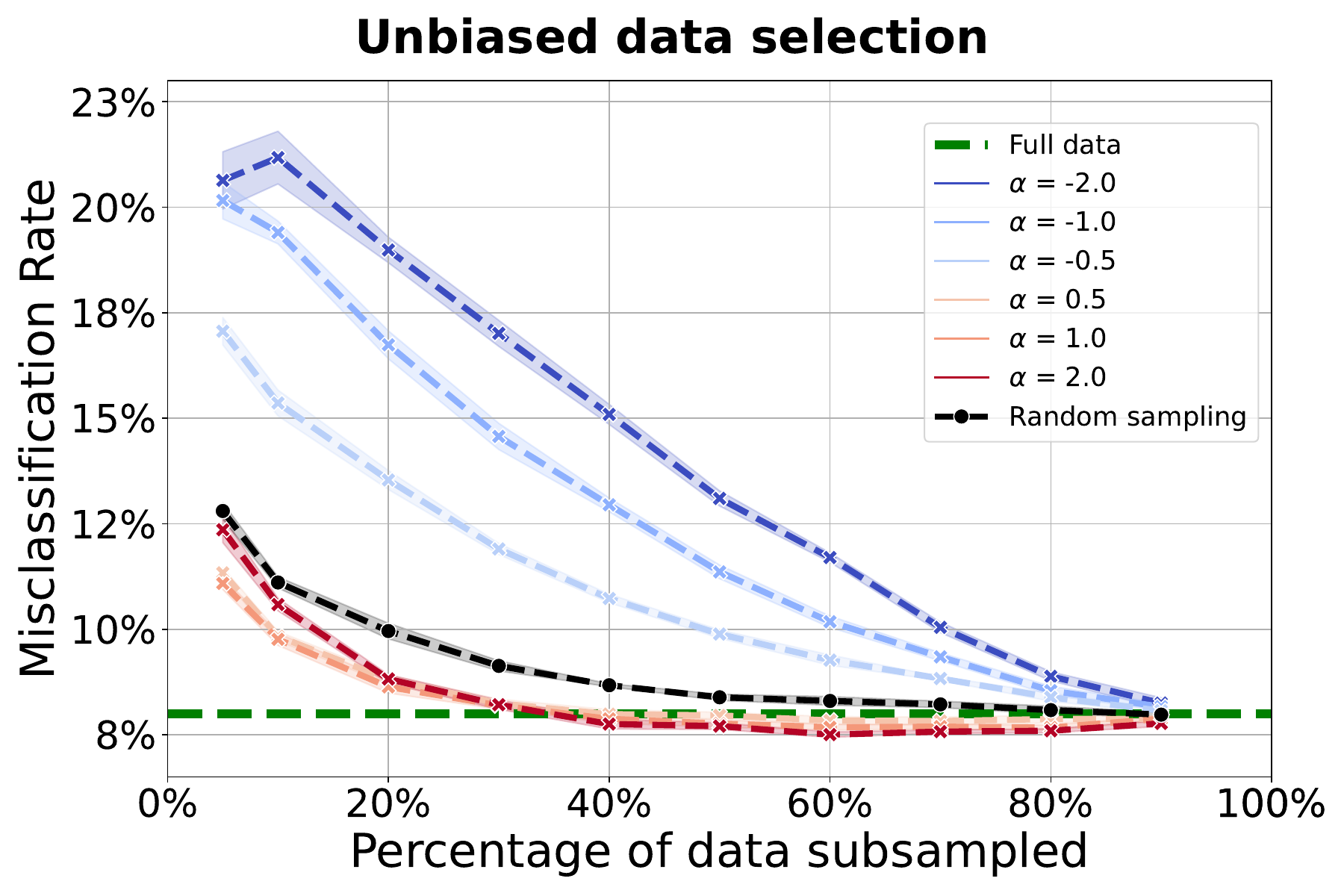}
\includegraphics[width=0.48\linewidth]{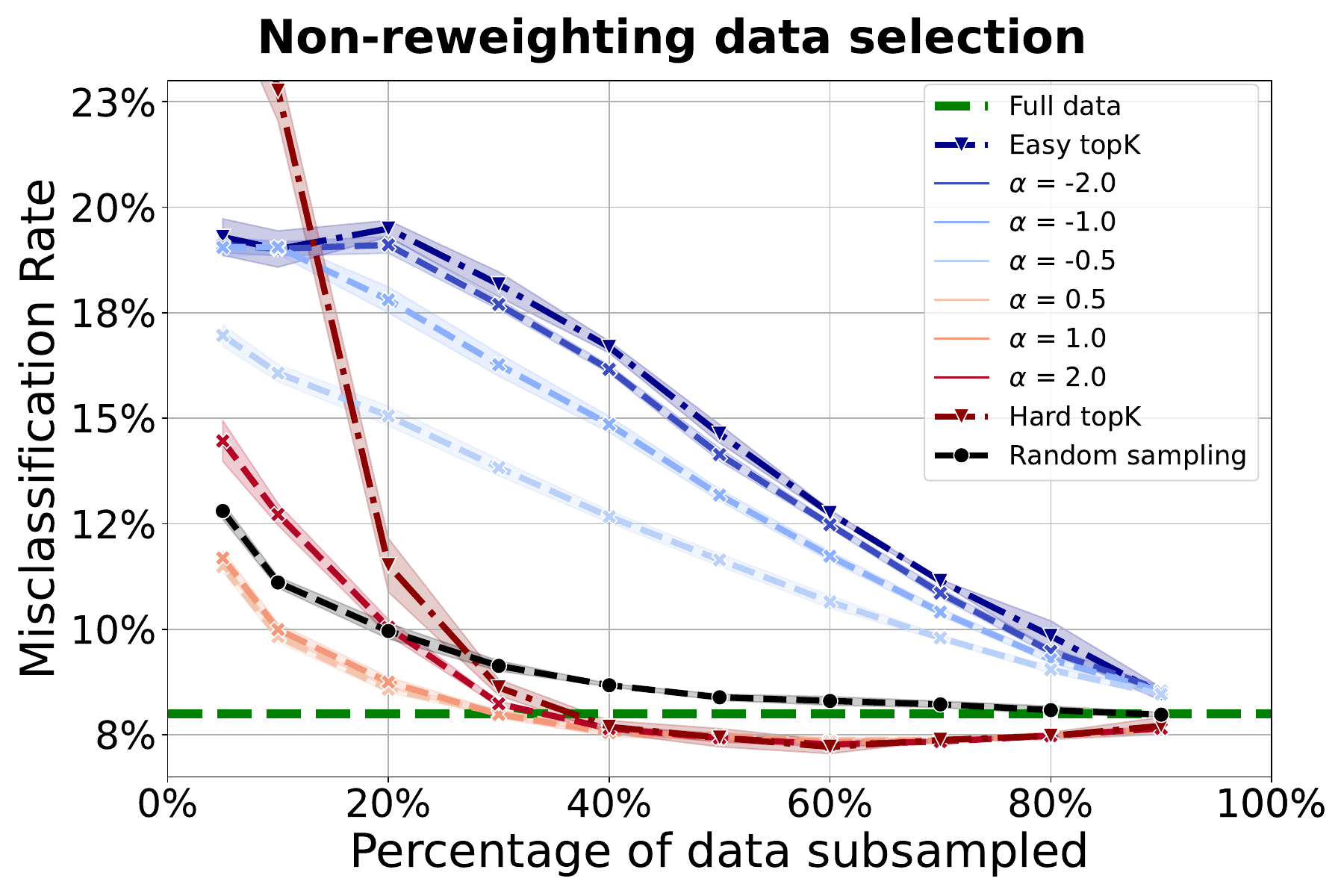}
\end{center}
\caption{Test error on image classification task, for a model trained after data subsampling. Effect of changing $\alpha$ in the subsampling probabilities, cf. Eq.~\eqref{eq:Scheme}. Here we use both
 unbiased (left) and non-reweighting (right) 
subsampling schemes with 
$n_{\ssurr}/N_{\strain} = 4.2\%$. $N=34345$, $p=2048$.
}
\label{fig:EffectAlpha}
\end{figure}

\begin{figure}  
\begin{center}
\includegraphics[width=0.48\linewidth]{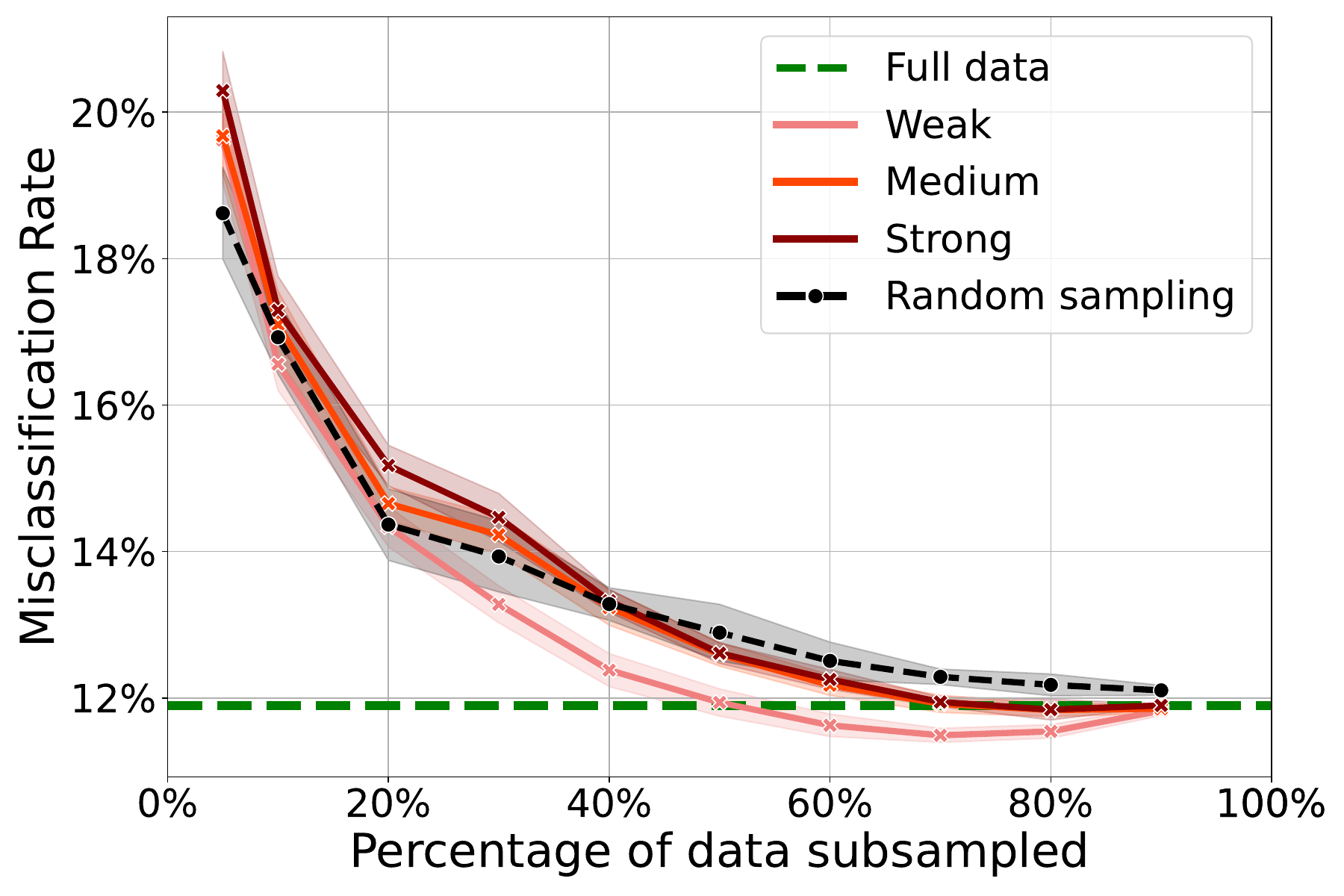}
\includegraphics[width=0.48\linewidth]{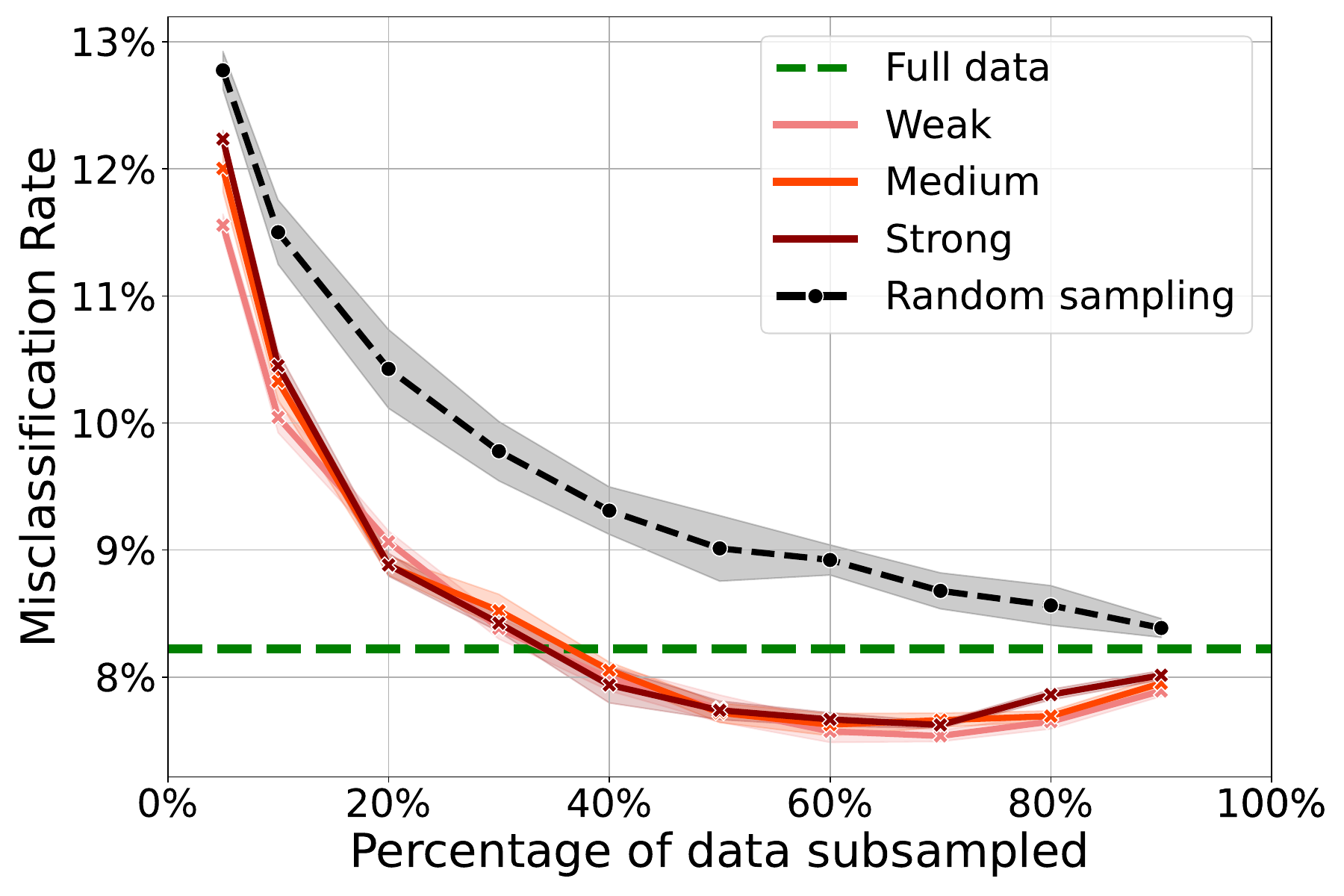}
\end{center}
\caption{
Test error on image classification task. 
Here $\lambda = 0.001$, $\alpha = 0.5$, 
and non-reweighting subsampling.
Left plot: $N=3434$, $p=2048$; right 
plot: $N=34345$, $p=2048$.
Various curves correspond to different
surrogate models.}
\label{fig:EffectSurr}
\end{figure}

Figure~\ref{fig:EffectAlpha} reports the misclassification rate on the test set as a function
of the subsampling fraction $\gamma$ for various values of the exponent $\alpha$. We consider both unbiased and non-reweighting subsampling
and use `weak' surrogate models.
In this case, we select $\lambda$  optimally, as 
described above. We observe that subsampling with $\alpha>0$ outperforms
training on the full sample down to subsampling ratios $\gamma\gtrsim 0.4$. This is most significant with non-reweighing subsampling, as anticipated by the asymptotic theory of Section \ref{sec:UnbiasedFirst}.
Further, above this value of $\gamma$,
the test error is fairly insensitive to the choice of $\alpha>0$.  
The situation changes dramatically at smaller
subsampling fractions. In particular, for non-reweighting 
subsampling and for $\gamma<0.3$,
soft subsampling $\alpha=0.5$ outperforms
substantially $\alpha=2$ and $\alpha=\infty$. 
Negative $\alpha$ (upsampling easy examples) 
always underperforms with respect to random in this case.

Figure \ref{fig:EffectSurr} investigates the effect of the strength of the surrogate model. In both subplots, we fix $\lambda = 0.001$, $\alpha = 0.5$, and  subsampling with no-reweighting. 
The two subplots correspond to different
regimes of the number of samples-to-parameters ratio.
The left subplot uses 10\% of the total available training set (i.e. $N=0.1\times N_{\strain}=3434$) as $100\%$ train data,
while the right subplot uses the entire training set (i.e. $N=N_{\strain}=34,345$). Each subplot shows subselection performance for three different surrogate models -- `weak', `medium' and `strong' described above.

We observe that at larger sample size $N$
(right subplot),
the test error of the model learnt after subsampling is insensitive to the accuracy of the surrogate model. We recover the results of Figure 
\ref{fig:EffectAlpha} irrespective
of the strength of the surrogate.
This is encouraging because it indicates that weak supervision is sufficient for effective data selection. Even more surprising is the behavior at
smaller sample size (left plot).
In this case the weak surrogate outperforms
medium and strong surrogates. A similar phenomenon was derived in a  minimax
setting in  Section \ref{sec:LowDimImperf}.

\section{Conclusion}

Data selection has been studied over the years 
from several point of views. Both heuristics and mathematically
motivated approaches have been put forward, sometimes with 
conflicting conclusions regarding their effectiveness. 

In this paper, we studied data selection within 
a statistical setting, using both low-dimensional and high-dimensional
asymptotics. Our main conclusions are as follows:
\begin{enumerate}
\item Restricting to `unbiased' data selection is unnecessary and sometimes harmful.
\item Data selection criteria based on the `uncertainty' associated to the label of a data point are effective. However both upsampling `hard'
and `easy' datapoints can be beneficial in different cases, and the form of the weights used does matter.
\item The efficacy of these method is not highly sensitive on the quality of the surrogate.  
\item Learning after data selection can outperform learning on the full sample.
\item Finally, we introduce a one-parameter family of data selection schemes, depending on parameter $\alpha\in\reals$, cf. Section \ref{sec:NumericalSynth}. By optimizing over $\alpha$, we obtain consistently good data selection across a varity of settings.
\end{enumerate}

\section*{Acknowledgements}

We are grateful to Joseph Gardi, Evan Gunter, Marc Laugharn, Kaleigh Mentzer, Rahul Ponnala, 
Eren Sasoglu and Eric Weiner for several conversations about this work. This work was carried out while Andrea Montanari was on leave from
Stanford and a Chief Scientist at Granica (formerly known as Project N). The present research is unrelated to
AM’s Stanford research.

\newpage

\bibliographystyle{alpha}

\begin{thebibliography}{CMM{\etalchar{+}}21}

\bibitem[ASS20]{advani2020high}
Madhu~S Advani, Andrew~M Saxe, and Haim Sompolinsky.
\newblock High-dimensional dynamics of generalization error in neural networks.
\newblock {\em Neural Networks}, 132:428--446, 2020.

\bibitem[AYZW21]{ai2021optimal}
Mingyao Ai, Jun Yu, Huiming Zhang, and HaiYing Wang.
\newblock Optimal subsampling algorithms for big data regressions.
\newblock {\em Statistica Sinica}, 31(2):749--772, 2021.

\bibitem[BLM13]{boucheron2013concentration}
St{\'e}phane Boucheron, G{\'a}bor Lugosi, and Pascal Massart.
\newblock {\em Concentration Inequalities: A Nonasymptotic Theory of
  Independence}.
\newblock Oxford University Press, 2013.

\bibitem[CH86]{chatterjee1986influential}
Samprit Chatterjee and Ali~S Hadi.
\newblock Influential observations, high leverage points, and outliers in
  linear regression.
\newblock {\em Statistical science}, pages 379--393, 1986.

\bibitem[CMM{\etalchar{+}}21]{caron2021unsupervised}
Mathilde Caron, Ishan Misra, Julien Mairal, Priya Goyal, Piotr Bojanowski, and
  Armand Joulin.
\newblock Unsupervised learning of visual features by contrasting cluster
  assignments, 2021.

\bibitem[CSZ21]{cui2021large}
Hugo Cui, Luca Saglietti, and Lenka Zdeborov{\`a}.
\newblock Large deviations in the perceptron model and consequences for active
  learning.
\newblock {\em Machine Learning: Science and Technology}, 2(4):045001, 2021.

\bibitem[DM06]{drineas2006sampling}
Petros Drineas and Michael~W Mahoney.
\newblock Sampling algorithms for l 2 regression and applications.
\newblock In {\em Proceedings of the seventeenth annual ACM-SIAM symposium on
  Discrete algorithm}, pages 1127--1136, 2006.

\bibitem[DMMS11]{drineas2011faster}
Petros Drineas, Michael~W Mahoney, Shan Muthukrishnan, and Tam{\'a}s
  Sarl{\'o}s.
\newblock Faster least squares approximation.
\newblock {\em Numerische mathematik}, 117(2):219--249, 2011.

\bibitem[GIG17]{gal2017deep}
Yarin Gal, Riashat Islam, and Zoubin Ghahramani.
\newblock Deep bayesian active learning with image data.
\newblock In {\em International conference on machine learning}, pages
  1183--1192. PMLR, 2017.

\bibitem[HHGL11]{houlsby2011bayesian}
Neil Houlsby, Ferenc Husz{\'a}r, Zoubin Ghahramani, and M{\'a}t{\'e} Lengyel.
\newblock Bayesian active learning for classification and preference learning.
\newblock {\em arXiv:1112.5745}, 2011.

\bibitem[HMRT22]{hastie2022surprises}
Trevor Hastie, Andrea Montanari, Saharon Rosset, and Ryan~J Tibshirani.
\newblock Surprises in high-dimensional ridgeless least squares interpolation.
\newblock {\em The Annals of Statistics}, 50(2):949--986, 2022.

\bibitem[JWZ{\etalchar{+}}19]{jiang2019accelerating}
Angela~H Jiang, Daniel L-K Wong, Giulio Zhou, David~G Andersen, Jeffrey Dean,
  Gregory~R Ganger, Gauri Joshi, Michael Kaminksy, Michael Kozuch, and
  Zachary~C Lipton.
\newblock Accelerating deep learning by focusing on the biggest losers.
\newblock {\em arXiv:1910.00762}, 2019.

\bibitem[LG94]{lewis94sequential}
David~D Lewis and William~A Gale.
\newblock A sequential algorithm for training text classifiers.
\newblock In {\em SIGIR’94: Proceedings of the Seventeenth Annual
  International ACM-SIGIR Conference on Research and Development in Information
  Retrieval, organised by Dublin City University}, pages 3--12. Springer, 1994.

\bibitem[Lin56]{lindley1956measure}
Dennis~V Lindley.
\newblock On a measure of the information provided by an experiment.
\newblock {\em The Annals of Mathematical Statistics}, 27(4):986--1005, 1956.

\bibitem[LM08]{liese2008statistical}
Friedrich Liese and Klaus-J Miescke.
\newblock Statistical decision theory.
\newblock In {\em Statistical Decision Theory: Estimation, Testing, and
  Selection}, pages 1--52. Springer, 2008.

\bibitem[LXG22]{Liao2022PAMI}
Yiyi Liao, Jun Xie, and Andreas Geiger.
\newblock {KITTI}-360: A novel dataset and benchmarks for urban scene
  understanding in 2d and 3d.
\newblock {\em Pattern Analysis and Machine Intelligence (PAMI)}, 2022.

\bibitem[MCZ{\etalchar{+}}22]{ma2022asymptotic}
Ping Ma, Yongkai Chen, Xinlian Zhang, Xin Xing, Jingyi Ma, and Michael~W
  Mahoney.
\newblock Asymptotic analysis of sampling estimators for randomized numerical
  linear algebra algorithms.
\newblock {\em The Journal of Machine Learning Research}, 23(1):7970--8014,
  2022.

\bibitem[MM21]{miolane2021distribution}
L{\'e}o Miolane and Andrea Montanari.
\newblock The distribution of the lasso: Uniform control over sparse balls and
  adaptive parameter tuning.
\newblock {\em The Annals of Statistics}, 49(4):2313--2335, 2021.

\bibitem[MMY14]{ma2014statistical}
Ping Ma, Michael Mahoney, and Bin Yu.
\newblock A statistical perspective on algorithmic leveraging.
\newblock In {\em International conference on machine learning}, pages 91--99.
  PMLR, 2014.

\bibitem[PVG{\etalchar{+}}11]{scikit-learn}
F.~Pedregosa, G.~Varoquaux, A.~Gramfort, V.~Michel, B.~Thirion, O.~Grisel,
  M.~Blondel, P.~Prettenhofer, R.~Weiss, V.~Dubourg, J.~Vanderplas, A.~Passos,
  D.~Cournapeau, M.~Brucher, M.~Perrot, and E.~Duchesnay.
\newblock Scikit-learn: Machine learning in {P}ython.
\newblock {\em Journal of Machine Learning Research}, 12:2825--2830, 2011.

\bibitem[RM16]{raskutti2016statistical}
Garvesh Raskutti and Michael~W Mahoney.
\newblock A statistical perspective on randomized sketching for ordinary
  least-squares.
\newblock {\em The Journal of Machine Learning Research}, 17(1):7508--7538,
  2016.

\bibitem[Set12]{settles2012active}
Burr Settles.
\newblock {\em Active learning}.
\newblock Morgan \& Claypool, 2012.
\newblock Volume 6 of synthesis lectures on artificial intelligence and machine
  learning.

\bibitem[SGS{\etalchar{+}}22]{sorscher2022beyond}
Ben Sorscher, Robert Geirhos, Shashank Shekhar, Surya Ganguli, and Ari Morcos.
\newblock Beyond neural scaling laws: beating power law scaling via data
  pruning.
\newblock {\em Advances in Neural Information Processing Systems},
  35:19523--19536, 2022.

\bibitem[SOS92]{seung1992query}
H~Sebastian Seung, Manfred Opper, and Haim Sompolinsky.
\newblock Query by committee.
\newblock In {\em Proceedings of the fifth annual workshop on Computational
  learning theory}, pages 287--294, 1992.

\bibitem[TAH18]{thrampoulidis2018precise}
Christos Thrampoulidis, Ehsan Abbasi, and Babak Hassibi.
\newblock Precise error analysis of regularized $ m $-estimators in high
  dimensions.
\newblock {\em IEEE Transactions on Information Theory}, 64(8):5592--5628,
  2018.

\bibitem[TB18]{ting2018optimal}
Daniel Ting and Eric Brochu.
\newblock Optimal subsampling with influence functions.
\newblock {\em Advances in neural information processing systems}, 31, 2018.

\bibitem[TOH15]{thrampoulidis2015regularized}
Christos Thrampoulidis, Samet Oymak, and Babak Hassibi.
\newblock Regularized linear regression: A precise analysis of the estimation
  error.
\newblock {\em Proceedings of Machine Learning Research}, 40:1683--1709, 2015.

\bibitem[vdV00]{van2000asymptotic}
Aaad~W van~der Vaart.
\newblock {\em Asymptotic Statistics}.
\newblock Cambridge University Press, 2000.

\bibitem[VLM18]{vodrahalli2018all}
Kailas Vodrahalli, Ke~Li, and Jitendra Malik.
\newblock Are all training examples created equal? an empirical study.
\newblock {\em arXiv:1811.12569}, 2018.

\bibitem[WZD{\etalchar{+}}20]{wang2020less}
Zifeng Wang, Hong Zhu, Zhenhua Dong, Xiuqiang He, and Shao-Lun Huang.
\newblock Less is better: Unweighted data subsampling via influence function.
\newblock In {\em Proceedings of the AAAI Conference on Artificial
  Intelligence}, volume~34, pages 6340--6347, 2020.

\bibitem[WZM18]{wang2018optimal}
HaiYing Wang, Rong Zhu, and Ping Ma.
\newblock Optimal subsampling for large sample logistic regression.
\newblock {\em Journal of the American Statistical Association},
  113(522):829--844, 2018.

\end{thebibliography}
\newcommand{\etalchar}[1]{$^{#1}$}

\newpage

\addcontentsline{toc}{section}{References}

\newpage

\appendix

 %
\section{Proof of Proposition \ref{propo:LowDimAsymptotics}}
 \label{sec:ProofLowDim}

Write $S_i(\bx_i)= s(\bx_i,U_i)$ where $(U_i)_{i\le N}\sim_{iid}\Unif([0,1])$ are
the independent random seed used to compute
$S_i$ (we omit the dependence on the surrogate model). 
For $\|\bxi\|<1$, define 
$\btheta(\bxi)=c(\|\bxi\|)\, \bxi$, where $c(r) =1/(1-r^2)$,
and, with an abuse of notation, $R_S(\bxi) = R_S(\btheta(\bxi))$.
Finally
\begin{align}
L_S(\bxi;Z_i) &:= 
\begin{dcases}
    s(\bx_i,U_i)\, L(\btheta(\bxi);y_i,\bx_i)
&\mbox{ if $\|\bxi\|<1$,}\\
  s(\bx_i,U_i)\, L_{\infty}(\bxi;y_i,\bx_i)
 &\mbox{ if $\|\bxi\|=1$,}
\end{dcases}\\
Z_i&:= \big(U_i,y_i,\bx_i\big)\, ,
\end{align}
so that $R_S(\bxi) = \E L_S(\bxi;Z)$ and $\hR_N(\bxi) = N^{-1}\sum_{i=1}^NL_S(\bxi;Z_i)$ are defined for 
$\bxi\in\Ball^p(1)$ (the unit ball in $\reals^p$).
If $\hbxi:=\arg\min_{\bxi\in\Ball^p(1)}\hR_N(\bxi)$,
and $\bxi_*:= \arg\min_{\bxi\in\Ball^p(1)}R_S(\bxi)$
(the latter is unique by Assumption {\sf A1}),
by \cite[Theorem 5.14]{van2000asymptotic}, we have
$\hbxi\to \bxi_*$ almost surely.
By Assumption {\sf A2}, we further have $\|\bxi\|<1$
strictly. 
Therefore, almost surely, $\hbtheta\to \btheta_*=\btheta(\bxi_*)$.

%
%

Using \cite[Theorem 5.14]{van2000asymptotic} (whose assumptions follow from {\sf A3}, 
 {\sf A4}), we get
 \begin{align}
 \hbtheta-\btheta_* = -\frac{1}{N}\bH_S^{-1}\sum_{i=1}^N\nabla_{\btheta}L_S(\btheta_*;Z_i)+o_P(N^{-1/2})\,,
 \end{align}
 whence 
 \begin{align}
 \rho(S;\bQ) = \E\,\Tr\Big(\nabla_{\btheta}L_S(\btheta_*;Z_1)\nabla_{\btheta}L_S(\btheta_*;Z_1)^{\sT}
 \bH_S^{-1}\bQ\bH_S^{-1}\Big)\, .
 \end{align}
 The claim follows simply by substituting the expression for $\nabla_{\btheta}L_S(\btheta_*;Z_1)$.
 %
 %
 \section{Proof of Proposition \ref{propo:Unbiased}}
 \label{sec:ProofBasicUnbiased}

 As mentioned in the main text, this result (in slightly different form) appears already
 in the literature \cite{ting2018optimal,wang2018optimal,ai2021optimal}. We nevertheless present a 
 proof for the reader's convenience. 

 First of all notice that, for $S$ unbiased we have $\E\{S(\bx)|\bx\}=1$
 and therefore $\bH_S=\bH$. Eq.~\eqref{eq:GeneralAsymp} yields
 \begin{align*}
 \rho(S;\bQ)  =  \E\{S(\bx)^2Z(\bx)\}\, ,\;\;\;\; Z(\bx) = \Tr\big(\bG(\bx)\bH^{-1}
\bQ\bH^{-1}\big)\, ,
 \end{align*}
 We can always write $S(\bx) = S_+(\bx) \, I(\bx)$, where $S_+(\bx)>0$ almost surely  and, conditionally on $\bx$,  $S_+(\bx)$ is
 independent of $I(\bx)\in\{0,1\}$ with $\prob(I(\bx) = 1|\bx) = \pi(\bx)$.
 The unbiasedness constraint translates into $\E(S_+(\bx) |\bx) = 1/\pi(\bx)$. 
 Hence
 \begin{align*}
 \rho(S;\bQ)  &=  \E\{S_+(\bx)^2\pi(\bx)Z(\bx)\}\\
 & \ge \E\{\E\{S_+(\bx)|\bx\}^2\pi(\bx)Z(\bx)\} \\
 &= \E\Big\{\frac{Z(\bx)}{\pi(\bx)}\Big\} \, ,
 \end{align*}
 where the lower bound holds with equality if and only if $S_+(\bx) = 1/\pi(\bx)$ almost surely.

 The optimal $\pi$ is determined by the convex optimization problem
 \begin{align}
     \mbox{minimize} &\;\;\; \E\Big\{\frac{Z(\bx)}{\pi(\bx)}\Big\} \, ,\\
     \mbox{subj. to} &\;\;\; \E\{\pi(\bx)\} = \gamma\, ,\\
     &\;\;\; \pi(\bx)\in [0,1]\; \forall \bx\, .
 \end{align}
 By duality, there exists a constant $\lambda=\lambda(\gamma)$, such that the optimum of the above problem is
 the solution of
 \begin{align}
     \mbox{minimize} &\;\;\; \E\Big\{\frac{Z(\bx)}{\pi(\bx)}-\lambda\,\pi(\bx)\Big\} \, ,\\
     \mbox{subj. to} &\;\;\; \pi(\bx)\in [0,1]\;\;\; \forall \bx\, .
 \end{align}
This yields the claimed optimum $\pi_{\sunb}$. 
 %
 %
 \section{Proof of Lemma \ref{lemma:Simple}}
 \label{sec:ProofSimple}

  As in the previous section, we can always  
 $S(\bx) = S_+(\bx) \, I(\bx)$, where $S_+(\bx)>0$ almost surely and,
 conditionally on $\bx$, $S_+(\bx)$ is
 independent of $I(\bx)\in\{0,1\}$. Further
 \begin{align}
 \prob(I(\bx) = 1|\bx) &= \pi(\bx)\, ,\\
\E(S_+(\bx) |\bx) &= w(\bx)\, .
\end{align}
Simple schemes correspond to the case in which $S_+(\bx) = w(\bx)$ is non-random 

Recall the formula \eqref{eq:GeneralAsymp} for the asymptotic error coefficient 
 $\rho(S;\bQ)$, which 
 we  rewrite here as 
 \begin{align}
 \rho(S;\bQ) & =  \Tr\Big(\E\big\{S^2_+(\bx)\pi(\bx)\bG(\bx)\big\}
 \tbH_{w,\pi}^{-1}
\bQ\tbH_{w,\pi}^{-1}\Big)\, ,\\
\tbH_{w,\pi}&:=\E\{w(\bx)\pi(\bx)\bH(\bx)\}\, .
\end{align}
By Jensen inequality (using the fact that $\bG(\bx), \tbH_{w,\pi}, \bQ\succeq\bfzero$),
we get 
 \begin{align}
\rho(S;\bQ) & \ge   \Tr\Big(\E\big\{w(\bx)^2\pi(\bx)\bG(\bx)\big\}
 \tbH_{w,\pi}^{-1}
\bQ\tbH_{w,\pi}^{-1}\Big)\, ,
\end{align}
and simply note that the right hand side is achieved by the simple scheme.
 %
 %
 \section{Proof of Proposition \ref{propo:Biased}}
 \label{sec:ProofBiased}

Recall the general formula \eqref{eq:GeneralAsymp} for the asymptotic error coefficient 
 $\rho(S;\bQ)$. For a non-reweighting scheme with selection probability $\pi$, with an abuse of notation
 we write $\rho(S;\bQ)$ as  $\rho(\pi;\bQ)$ (we also used $\rho(\pi,1;\bQ)$ in the main text). 
 Explicitly
 \begin{align}
 \rho(\pi;\bQ) & =  \Tr\Big(\E\big\{\pi(\bx)\bG(\bx)\big\}
 \E\big\{\pi(\bx)\bH(\bx)\big\}^{-1}
\bQ\E\big\{\pi(\bx)\bH(\bx)\big\}^{-1}\Big)\, .\label{eq:NRrho}
\end{align}
Notice that this is defined for $\E\big\{\pi(\bx)\bH(\bx)\big\}\succ \bfzero$.
We extent it to $\E\big\{\pi(\bx)\bH(\bx)\big\}\succeq \bfzero$
by letting
 \begin{align}
 \rho(\pi;\bQ) & = \lim_{\lambda\to 0+} \Tr\Big(\E\big\{\pi(\bx)\bG(\bx)\big\}
 \big(\lambda\id+\E\big\{\pi(\bx)\bH(\bx)\big\}\big)^{-1}
\bQ\big(\lambda\id+\E\big\{\pi(\bx)\bH(\bx)\big\}\big)^{-1}\Big)\, .\label{eq:NRrhoinf}
\end{align}
We want to minimize this function over $\pi$ subject to the convex constraints $\E\pi(\bx)=\gamma$,
$\pi(\bx)\in[0,1]$ for all $\bx$. 

We claim that a minimizer $\pi_{\snr}$ always exists. To this end, we view $\rho(\pi;\bQ)  = F(\nu)$ 
as a function of the probability measure $\nu(\de\bx) = \pi(\bx)\prob(\de \bx)/\gamma$. In other words $F$
is a function of the space of probability measures, whose Radon-Nikodym derivative with respect to 
$\prob$ is upper bounded by $1/\gamma$.
This domain is uniformly tight. Further, if $\nu_n$ is a sequence in this space and $\nu_n\Rightarrow\nu_{\infty}$
(weak convergence), it follows by the Portmanteau's theorem that $\nu_{\infty}$
has also Radon-Nikodym derivative with respect to 
$\prob$ that is upper bounded by $1/\gamma$.
Hence this domain is compact by Prokhorov's
theorem.
Finally $\nu \mapsto \int \bG(\bx)\, \nu(\de\bx)$ and 
$\nu \mapsto \int \bH(\bx)\, \nu(\de\bx)$ are continuous
in the topology of weak convergence  (because $\bG(\bx)$, $\bH(\bx)$
are continuous by assumption), and therefore
$\nu\mapsto F(\nu)$ is lower
semi-continuous. Hence, there exists a minimizer 
$\nu_{\snr}(\de\bx) = \pi_{\snr}(\bx)\prob(\de \bx)/\gamma$, with 
$\pi_{\snr}(\bx)\in[0,1]$.

Given any minimizer $\pi_{\snr}$, and any other feasible $\pi$,
let $\pi_t :=(1-t)\pi_{\snr}+t\pi$. By assumption 
$\bH_{\pi_{\snr}}\succ \bfzero$ strictly.
Then 
 \begin{align}
 \rho(\pi_t;\bQ) = \rho(\pi_{\snr};\bQ)+t \int \big(\pi(\bx)-\pi_{\snr}(\bx)\big) Z(\bx;\pi_{\snr})\,\prob(\de\bx)+o(t)\,.
 \end{align}
 Therefore it must be true that, for any feasible $\pi$, 
 \begin{align}
 J(\pi;\pi_{\snr}):=\int \big(\pi(\bx)-\pi_{\snr}(\bx)\big) Z(\bx;\pi_{\snr}) \,\prob(\de\bx) \ge 0\, .
 \label{eq:Stationarity}
 \end{align}
Let $Q_{\eps}:= \{\bx\in\reals^d:\, \pi_{\snr}(\bx)\in (\eps,1-\eps)\}$. The claim \eqref{eq:piNR}
is is implied by the following statement: $Z(\bx;\pi_{\snr})$ is almost surely
constant on $Q_{\eps}$ for each $\eps>0$. Assume by contradiction that there exists $\eps>0$
and $z_0\in\reals$
such that $\prob(\bx\in Q_{\eps}:\, Z(\bx;\pi_{\snr})\ge z_0)=p_+>0$, $\prob(\bx\in Q_{\eps}: Z(\bx;\pi_{\snr})<z_0)=p_->0$.
Let $Q_{+}:=\{\bx\in Q_{\eps}:\, Z(\bx;\pi_{\snr})\ge z_0\}$,  $Q_{-}:=\{\bx\in Q_{\eps}:\, Z(\bx;\pi_{\snr})< z_0\}$.
Define
\begin{align*}
\pi(\bx) = \begin{cases}
\pi_{\snr}(\bx)-p_-\eps & \mbox{ if $\bx\in Q_+$,}\\
\pi_{\snr}(\bx)+p_+\eps & \mbox{ if $\bx\in Q_-$,}\\
\pi_{\snr}(\bx) & \mbox{ otherwise.}
    \end{cases}
\end{align*}
It is easy to check that $\pi$ is feasible and
 \begin{align*}
 J(\pi;\pi_{\snr})&=-p_-\eps \int_{Q_+} Z(\bx;\pi_{\snr}) \,\prob(\de\bx)
 +p_+\eps \int_{Q_-} Z(\bx;\pi_{\snr}) \,\prob(\de\bx)\\
 &= -p_+p_-\eps\Big\{\E\big(Z(\bx;\pi_{\snr})\big|\bx\in Q_+\big)-\E\big(Z(\bx;\pi_{\snr})\big|\bx\in Q_-\big)
 \Big\}<0\, ,
\end{align*}
thus yielding a contradiction with Eq.~\eqref{eq:Stationarity}.

Finally, we note that Eq.~\eqref{eq:VariationalOptBiased} follows from Eq.~\eqref{eq:NRrho}.
 %
 %
 \section{Proof of Theorem \ref{thm:GeneralDerivative} and Theorem \ref{thm:NonMono}} 
 \label{sec:ProofNonMono}

\subsection{Proof of Theorem \ref{thm:GeneralDerivative}}

Write $\tbG_{\pi} = \E\{\bG(\bx)\pi(\bx)\}$, and similarly for $\tbH_{\pi}$.
Defining $\opi(\bx):=1-\pi(\bx)$, and using $\E\{\opi(\bx)\}=1-\gamma$, we get 
\begin{align}
\tbG_{\pi}  = \bG- \E\{\bG(\bx)\opi(\bx)\}\, ,\;\;\;\; \|\E\{\bG(\bx)\opi(\bx)\}\|_{\op}\le 
\E\{\|\bG(\bx)\|_{\op}^{4}\}^{1/4}(1-\gamma)^{3/4}\, ,
\end{align}
and similarly for $\tbH_{\pi}$.

Using Eq.~\eqref{eq:VariationalOptBiased} and Taylor expansion
(recall $\bH\succ\bfzero$ by assumption), we get, for any $\pi$
satisfying $\E\pi(\bx)=\gamma$, 
\begin{align*}
\rho_{\snr}(\bQ;\gamma) &\le  \rho_{\snr}(\bQ;1) +\E\big\{Z_{\bQ}(\bx;1)\opi(\bx)\big\} + 
O\big(\|\E\{\bG(\bx)\opi(\bx)\}\|_{\op}^2+\|\E\{\bH(\bx)\opi(\bx)\}\|^2_{\op}\big)\\
 &=\rho_{\snr}(\bQ;1) +\E\big\{Z_{\bQ}(\bx;1)\opi(\bx)\big\} + 
O\big((1-\gamma)^{3/2}\big)\, ,
\end{align*}
where $Z_{\bQ}(\bx;1)$ is defined in the statement of the theorem.

For $\lambda$ as in the statement, and all $(1-\gamma)$ small enough, we
can take $\opi(\bx) = (1-\gamma)/\prob(Z_{\bQ}(\bx;1)<\lambda)$ if $Z_{\bQ}(\bx;1)<\lambda$, and  $\opi(\bx)=0$ otherwise. This immediately implies 
Eq.~\eqref{eq:MainGenDeriv} whence point $(b)$ follows.

In order to prove point $(a)$, note that, by definition, for any
$\lambda>\ess\inf Z_{\bQ}(\bx)$, we have $\prob(Z_{\bQ}(\bx)<\lambda)>0$
and therefore Eq.~\eqref{eq:MainGenDeriv} implies, for all $1-\gamma$ small enough
\begin{align}
\rho_{\snr}(\bQ;\gamma) &\le \rho_{\snr}(\bQ;1)+\lambda(1-\gamma)
+C(1-\gamma)^{3/2}\, .
\end{align}
This implies $\partial_{\gamma}\rho_{\snr}(\bQ;1)\ge -\lambda$
and therefore $\partial_{\gamma}\rho_{\snr}(\bQ;1)\ge -\ess\inf Z_{\bQ}(\bx)$.

Alternatively, assume $\ess\inf Z_{\bQ}(\bx) = \lambda_*>-\infty$.
Let $\pi_{\snr, \gamma}$ achieve the infimum in Eq.~\eqref{eq:VariationalOptBiased}
(Proposition \ref{propo:Biased} ensures that such $\pi_{\snr, \gamma}$ exists).
By the above argument (letting $\opi_{\snr,\gamma}(\bx)= 1-\pi_{\snr,\gamma}(\bx)$)
\begin{align*}
\rho_{\snr}(\bQ;\gamma) &=  \rho_{\snr}(\bQ;1) +\E\big\{Z_{\bQ}(\bx;1)\opi_{\snr,\gamma}(\bx)\big\} + 
O\big(\|\E\{\bG(\bx)\opi(\bx)\}\|_{\op}^2+\|\E\{\bH(\bx)\opi(\bx)\}\|^2_{\op}\big)\\
& \ge \rho_{\snr}(\bQ;1) +\lambda_*(1-\gamma) + 
O\big((1-\gamma)^{3/2}\big)\, .
\end{align*}
This yields $\partial_{\gamma}\rho_{\snr}(\bQ;1)\le -\lambda_*$.

\subsection{Proof of Theorem \texorpdfstring{\ref{thm:NonMono}(a)}{}}

Both  claims of the theorem follow if we can provide an example such that 
$Z_{\bQ}(\bx;1)<0$ with
strictly positive probability, in the case $\bQ=\bH$. Specializing to that case,
we have
\begin{align}
Z_{\bH}(\bx;1) := -\Tr\big\{\bG(\bx)\bH^{-1}\big\}+
2 \Tr\big\{\bH(\bx)\bH^{-1}\bG\bH^{-1}\big\}\, .\label{eq:ZHX}
\end{align}
In the following we will simplify notations and write $Z(\bx)=Z_{\bH}(\bx;1)$.

We next consider the linear regression setting of point $(a)$.
Without loss of generality, we will assume $\|\btheta_0\|=1$.
By rotation invariance, the population risk minimizer has the form
$\btheta_*=\alpha_*\btheta_0$. The coefficient $\alpha_*$ is fixed by
\begin{align*}
\bfzero= \nabla R_S(\alpha_*\btheta_0) = \E\big\{(y-\alpha_*\<\btheta_0,\bx\>)\bx\big\}\, .
\end{align*}
The only non-zero component of this equation is the one along $\btheta_0$.
Projecting along this direction, we get (for $G\sim\normal(0,1)$, $Y\sim\sP(\,\cdot\,|G)$):
\begin{align*}
\E\big\{(Y-\alpha_*G)G\big\}=0\, .
\end{align*}
We next compute 
\begin{align}
\bG(\bx) = \E\big\{(y-\<\btheta_*,\bx\>)^2\big|\bx\big\}\, \bx\bx^{\sT}\, ,\;\;\;
\bH(\bx) = \bx\bx^{\sT}\, . 
\end{align}
Note that we can rewrite the first equation as 
\begin{align}
\bG(\bx) &= f(\<\btheta_0,\bx\>)\, \bx\bx^{\sT}\, ,\\
f(t)& := \int(y-\alpha_*t)^2\, \sP(\de y|t)\, . 
\end{align}
Taking expectation with respect to $\bx$
\begin{align}
\bG&= a\id_d+b\btheta_0\btheta_0^{\sT}\, ,\;\;\;
\bH = \id_d\, ,\\
a& := \E\big\{(Y-\alpha_*G)^2\big\}\, ,\;\;\;\;\;b:= \E\big\{(Y-\alpha_*G)^2
(G^2-1)\big\}\, .
\end{align}
Substituting in Eq.~\eqref{eq:ZHX}, we get
\begin{align}
Z(\bx) &= -f(\<\btheta_0,\bx\>)\|\bx\|^2+2a\|\bx\|^2+2b\<\btheta_0,\bx\>^2
\nonumber\\
& = \<\btheta_0,\bx\>^2\big[-f(\<\btheta_0,\bx\>) +2\,\E\{ 
f(G) G^2\}\big]\label{eq:ZxRegression}\\
&\phantom{AAA}+\big\|\bP_0^{\perp}\bx\big\|_2^2\big[-f(\<\btheta_0,\bx\>) +
2\,\E \{f(G) \}\big]\, .\nonumber
\end{align}
It is easy to construct examples in which $\ess\inf Z(\bx)=-\infty$.
For instance, take $\sP(\,\cdot\,|t) = \delta_{h(t)}$, $h(t)= t + c(t^3-3t)$
for some $c>0$ (no noise). Then we get $\alpha_*=1$ and $f(t)=c^2(t^3-2t)^2$.
Let $t_0$ be such that $f(t)>2\E\{G^2 f(G)\}$ for all $t\ge t_0$.
Then the claim follows since 
\begin{align}
\prob\Big(\bx:\; \<\btheta_0,\bx\>\in [t_0,t_0+1], \big\|\bP_0^{\perp}\bx\big\|_2 >M
\Big)>0\, ,
\end{align}
for any $M$, and Eq.~\eqref{eq:ZxRegression} yields $Z(\bx)<0$
on the above event for all $M$ large enough. In fact, we also have 
$Z(\bx)<-c$ for any $c$, by taking $M$ sufficiently large. 

\subsection{Proof of Theorem \texorpdfstring{\ref{thm:NonMono}(b)}{}}

We next consider a  misspecified generalized linear model
with $y_i\in\{+1,-1\}$ and
\begin{align}
    \sP(+1|z)= 1- \sP(-1|z) = f(z)\, .
\end{align}
We set $u(t):= 2f(t)-1$.
It is simple to compute
 \begin{align*}
 \nabla R(\btheta) & = -\E\big\{(u(\<\btheta_0,\bx\>)-\phi'(\<\btheta,\bx\>))\bx\big\}\, .
 \end{align*}
In particular  $\nabla R(\btheta)  = -\E\big\{(u(G)-\phi'(G))G\big\}\btheta_0$,
where expectation is with respect to $G\sim\normal(0,1)$.   We will impose the condition
 \begin{align*}
 \E\big\{Gu(G)\big\}= \E\big\{G\phi'(G))\big\}\, .\label{eq:ThetaStarCond}
 \end{align*}
so that the empirical risk minimizer is $\btheta_*=\btheta_0$.

Next we compute
\begin{align}
\bG(\bx)  = g(\<\btheta_*,\bx\>)\,\bx\bx^{\sT}\, ,\;\;\;\;\;\;
\bH(\bx)  = h(\<\btheta_*,\bx\>)\,\bx\bx^{\sT}\, ,
\end{align}
where
\begin{align}
g(t) := (u(t)-\phi'(t))^2 + 1-u(t)^2\, ,\;\;\;\;
h(t) := 1-\phi'(t)^2\, .
\end{align}
Performing the Gaussian integral, we get
\begin{align}
    \bG =  a_G\id_d+ b_G\btheta_*\btheta_*^{\sT}\, , \,\;\;\;\;\;
    \bH =  a_H\id_d+ b_H\btheta_*\btheta_*^{\sT}\, ,
\end{align}
where $a_G:=\E g(G)$, $b_G:=\E g''(G)$, and similarly for $\bH$.
We thus get 
\begin{align}
    \bH^{-1}  = c_1\id_d+ c_1'\btheta_*\btheta_*^{\sT}\, , \,\;\;\;\;\;
    \bH^{-1}\bG\bH^{-1} = c_2\id_d+ c_2'\btheta_*\btheta_*^{\sT}\, ,
\end{align}
for some constants $c_i,c_i'$ that are dimension-independent functions of $a_H, b_H, a_G, b_G$.
Substituting in the formula for $Z(\bx)=Z_{\bH}(\bx;1)$,
cf. Eq.~\eqref{eq:ZHX}, we get
\begin{align}
Z(\bx) := -g(\<\btheta_*,\bx\>)\, \<\bx,(c_1\id_d+ c_1'\btheta_*\btheta_*^{\sT})\bx\>
+2 h(\<\btheta_*,\bx\>)\, \<\bx,(c_2\id_d+ c_2'\btheta_*\btheta_*^{\sT})\bx\>\, .
\end{align}
For large $d$, we have
\begin{align}
\<\bx,(c\id_d+ c'\btheta_*\btheta_*^{\sT})\bx\> = c\, d +o_P(1)\, ,
\end{align}
and therefore 
\begin{align}
\frac{1}{d} Z(\bx) = -c_1 g(\<\btheta_*,\bx\>) +c_2  h(\<\btheta_*,\bx\>) +o_P(d)\, .
\end{align}
Note that $G=\<\btheta_*,\bx\>\sim\normal(0,1)$. In particular, its distribution
is $d$-independent.
It is therefore sufficient to prove that  $-c_1 g(G) +2c_2  h(G)  <0$  with strictly positive probability, since this implies $Z(\bx)$ with strictly positive
probability for all $d$ large enoug.
It is simple to compute $c_1= 1/a_H$, $c_2= a_G/a^2_H$.
Therefore it is sufficient to prove the following
\begin{itemize}
\item[] {\bf Claim:} We can choose $f$ so that Eq.~\eqref{eq:ThetaStarCond} is satisfied and 
$-a_H g(G) + 2a_G h(G)<0$ with strictly positive probability.
\end{itemize}

To prove this claim, it is convenient to define the random variables $W=\phi'(G)$,
$M=u(G)$ and note that the distribution of $W$ is symmetric around $0$, and has support $(-1,1)$.

We have $g(G) = (M-W)^2+1-M^2$, $h(G) = 1-W^2$.
Therefore, the conditions of the claim are equivalent to
(the second condition is understood to hold with strictly positive probability)
\begin{align}
    \E\big\{(M-W)\, \psi(W)\big\}& =0\, ,\\
    a_H\cdot \big[(M-W)^2+1-M^2\big] &> 2a_G\cdot\big[1-W^2\big]\, ,
\end{align}
where $\psi=(\phi')^{-1}$ (functional inverse), and
\begin{align}
    a_G & = \E\big\{(M-W)^2+1-M^2\big\}\, ,\;\;\; a_H =  \E\big\{1-W^2\big\}\, .
\end{align}

We will further restrict ourselves to construct these random variables so that
$2a_G=a_H$, i.e.
\begin{align}
     \E\{1-W^2-4W(M-W)\}=0\, .
\end{align}
Next define $\varphi:\reals\to\reals$ by $\varphi(x) = u(x)-\phi'(x)$,
whence $M-W=\varphi(G)$, $W=\phi'(G)= \tanh(G)$. 
Therefore, it is sufficient to construct $\varphi:\reals\to\reals$ such that,
for some $x_0\in \reals$, $\eps>0$,
\begin{align}
\E\{\varphi(G)\tanh(G) \} &= b_0\, ,\\
\E\{\varphi(G)G \} &= 0\, ,\\
\varphi(x)^2+1-(\tanh(x)+\varphi(x))^2 & >  1-\tanh(x)^2 \;\;\;\mbox{ for all }x\in(x_0-\eps,x_0+\eps)\, ,
\label{eq:SomeX0}\\
-1-\tanh(x)<\varphi(x)&<1-\tanh(x)\;\;\;\;\;\;\;\mbox{ for all }x\in \reals\, .
\end{align}
where $b_0= \E\{1-\tanh(G)^2\}/4$ is a constant $b_0\in(0,1/4)$.
Note that Eq.~\eqref{eq:SomeX0} is sufficient because the law of $G$ is supported on the whole
real line. Simplifying Eq.~\eqref{eq:SomeX0}, and assuming $\varphi$ is continuous, we are led to
\begin{align}
\E\{\varphi(G)\tanh(G) \} &= b_0\, ,\label{eq:CondPhi1}\\
\E\{\varphi(G)G \} &= 0\, ,\label{eq:CondPhi2}\\
x\varphi(x)& <0    \;\;\;\;\;\;\;\;\;\;\;\;\;\;\;\;\;\;\;\;\;\mbox{ for some }x\in\reals\, ,
\label{eq:CondPhi3}\\
-1-\tanh(x)<\varphi(x)&<1-\tanh(x)\;\;\;\;\;\mbox{ for all }x\in \reals\, .\label{eq:CondPhi4}
\end{align}
We complete the proof by the following result.
\begin{lemma}
    There exists $\varphi:\reals\to\reals$ continuous satisfying conditions \eqref{eq:CondPhi1} to \eqref{eq:CondPhi4} above.
\end{lemma}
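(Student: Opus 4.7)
The argument is a direct construction. I would propose the three-parameter ansatz
\begin{align*}
\varphi(x) \;=\; \big(a\, x + b\, x^3\big)\, e^{-x^2/(2\sigma^2)},
\end{align*}
which is continuous and odd for every real $a,b$ and every $\sigma>0$. Because $\varphi$ is odd, conditions \eqref{eq:CondPhi1} and \eqref{eq:CondPhi2} reduce to the linear system
\begin{align*}
a\,M_{11}(\sigma) + b\,M_{12}(\sigma) &= b_0\,,\\
a\,M_{21}(\sigma) + b\,M_{22}(\sigma) &= 0\,,
\end{align*}
with $M_{1k}(\sigma):=\E[G^{2k-1}\tanh(G)\,e^{-G^2/(2\sigma^2)}]$ and $M_{2k}(\sigma):=\E[G^{2k}\,e^{-G^2/(2\sigma^2)}]$ for $k=1,2$. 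All four entries are explicit Gaussian integrals (e.g.\ $M_{21}(\sigma) = \sigma^3(1+\sigma^2)^{-3/2}$), and a direct computation shows that the determinant $M_{11}M_{22}-M_{12}M_{21}$ is nonzero for every $\sigma>0$; this is just the linear independence of $G\mapsto G$ and $G\mapsto\tanh(G)$ in $L^2$ of the measure $e^{-G^2/(2\sigma^2)}\,\phi(G)\,\de G$. Hence there exists a unique $(a(\sigma),b(\sigma))$ solving the system.

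For \eqref{eq:CondPhi3}, the second equation together with $M_{21},M_{22}>0$ forces $a(\sigma)$ and $b(\sigma)$ to be nonzero and of opposite signs. The cubic $ax + bx^3 = a x(1+(b/a)x^2)$ then changes sign at $x_0:=\sqrt{-a/b}>0$, so $x\varphi(x)<0$ for $x\in(x_0,\infty)$, which gives \eqref{eq:CondPhi3}.

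The main obstacle is \eqref{eq:CondPhi4}, namely $|\tanh(x)+\varphi(x)|<1$ pointwise. For $|x|$ large the bound is automatic: $|\varphi(x)| = O(|x|^3 e^{-x^2/(2\sigma^2)})$ decays Gaussian, while the margin $1-|\tanh(x)|\sim 2e^{-2|x|}$ decays only exponentially, so $|\varphi(x)|<1-|\tanh(x)|$ beyond some $|x|\ge R(\sigma)$. On the remaining compact range $[-R,R]$ I would verify by elementary calculus on the polynomial-times-Gaussian that there exists $\sigma^\star$ making $\sup_{|x|\le R}(|\tanh(x)|+|\varphi(x)|)<1$; the numerical bound $b_0\in(0,1/4)$ keeps the required magnitude of $\varphi$ small enough to make this tradeoff go through. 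If the one-parameter family $\sigma\mapsto(a(\sigma),b(\sigma))$ proves too rigid to satisfy \eqref{eq:CondPhi4}, I would extend the ansatz to
\begin{align*}
\varphi(x) \;=\; (a\,x + b\,x^3)\,e^{-x^2/(2\sigma^2)} + (c + d\,x^2)\,e^{-x^2/(2\sigma^2)},
\end{align*}
so that the added terms are even. Such even corrections contribute zero to both \eqref{eq:CondPhi1} and \eqref{eq:CondPhi2}, since an even function $\chi$ satisfies $\E[\chi(G)\tanh(G)]=\E[\chi(G)G]=0$ by symmetry; thus $c$ and $d$ can be used as additional pointwise knobs to push $\varphi$ strictly inside the open tube $(-1-\tanh(x),\,1-\tanh(x))$ without disturbing the three already-verified conditions (apart from a mild check that the sign change in $x\mapsto x\varphi(x)$ is preserved, which holds for $|c|,|d|$ small). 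The delicate step, therefore, is the compact-range pointwise verification of \eqref{eq:CondPhi4}.
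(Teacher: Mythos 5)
Your construction has a genuine gap, and it sits exactly at the step you defer: the pointwise verification of \eqref{eq:CondPhi4}. Conditions \eqref{eq:CondPhi1}--\eqref{eq:CondPhi2} pin down $(a(\sigma),b(\sigma))$ uniquely, so the only remaining freedom is $\sigma$, and you give no argument that any $\sigma$ keeps $(ax+bx^3)e^{-x^2/(2\sigma^2)}$ inside the tube; ``I would verify by elementary calculus'' is where the entire burden of the lemma lies. Moreover, the fallback of adding even terms cannot rescue the plan: writing $\varphi=\varphi_{\rm o}+\chi$ with $\varphi_{\rm o}$ odd and $\chi$ even, the constraint \eqref{eq:CondPhi4} imposed at $x$ and at $-x$ is equivalent to requiring \emph{both} $\varphi_{\rm o}(x)+\chi(x)$ and $\varphi_{\rm o}(x)-\chi(x)$ to lie in $(-1-\tanh x,\,1-\tanh x)$. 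Hence a nonzero even part only tightens the requirement on the odd part; violations by an odd function occur in antisymmetric pairs and can never be repaired by even corrections, so $c,d$ are not usable ``knobs'' for \eqref{eq:CondPhi4}.

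The deeper problem is that the deferred verification cannot succeed — for your ansatz or for any other choice. By the observation above one may assume $\varphi$ odd (the odd part of any feasible $\varphi$ is again feasible and has the same values of \eqref{eq:CondPhi1}--\eqref{eq:CondPhi2}), and then set $\psi:=\varphi+\tanh$. Condition \eqref{eq:CondPhi4} says $|\psi|\le 1$; condition \eqref{eq:CondPhi2} says $\E\{\psi(G)G\}=\E\{G\tanh G\}=\E\{1-\tanh^2G\}=4b_0\approx 0.606$ (Stein's identity); condition \eqref{eq:CondPhi1} says $\E\{\psi(G)\tanh G\}=\E\{\tanh^2G\}+b_0=1-3b_0\approx 0.546$. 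But for every $\lambda$ and every $\psi$ with $|\psi|\le1$ and $\E\{\psi(G)G\}=4b_0$,
\begin{align*}
\E\{\psi(G)\tanh(G)\}\;\le\; \E\big|\tanh(G)-\lambda G\big| \;+\; 4\lambda b_0\,,
\end{align*}
and at $\lambda=0.47$ the right-hand side is $\approx 0.19+0.28=0.48<0.55$ (this bound is essentially sharp: it is attained by the bang--bang choice $\psi=\sign(\tanh(G)-\lambda G)$ at the matching $\lambda\approx 0.47$). So \eqref{eq:CondPhi1}, \eqref{eq:CondPhi2}, \eqref{eq:CondPhi4} are jointly infeasible, with a margin far beyond numerical error; no tuning of $a,b,\sigma$ can work, and the ``delicate step'' is impossible rather than delicate. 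Note also that this bound covers functions with mass in the far tail, which is the mechanism the paper's own proof invokes (subtracting a bump near $x=M$, where $\tanh$ saturates while $x$ is large, claimed to shift $\E\{\varphi(G)G\}$ by order $M^{1/2}$): that shift is in fact damped by the Gaussian density at $M$ and vanishes as $M\to\infty$, so the discrepancy signals an issue with the lemma's conditions themselves rather than a trick your localized, odd ansatz merely fails to exploit. In any case, your proposal contains no mechanism for driving $\E\{\varphi(G)G\}$ to zero without destroying most of $\E\{\varphi(G)\tanh(G)\}$, and quantitatively that loss is fatal.
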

\begin{proof}
We define $\sF: L^2(\normal(0,1))\to\reals^2$ by $\sF(\varphi) =(\E\{\varphi(G)\tanh(G) \},\E\{\varphi(G)G \})$, and 
\begin{align}
\cC:=\Big\{\varphi\in C(\reals):\;  \forall x\in\reals\;\;
\max(-1-\tanh(x),-1) <\varphi(x)<\min(1-\tanh(x),1)\Big\}\, .
\end{align}
In particular, any $\varphi\in \cC$ satisfies condition \eqref{eq:CondPhi4}. 
We
claim that there exists an open set $B\subseteq \reals^2$, such that $(b_0,0)\in B$ 
and $\sF(\cC)\supseteq B$ (i.e., for every $\bx\in B$, there exists $\varphi\in\cC$
such that $\sF(\varphi)=\bx$).

In order to prove this claim, note that $\sF$ is a continuous linear map and $\cC$ is convex,
whence $\sF(\cC)$ is convex.
Hence, it is sufficient to exhibits points $\varphi_1,\varphi_2,\varphi_3\in\overline{\cC}$
(the closure is in  $L^2(\normal(0,1))$ of $\cC$), such that $(b_0,0)$ is in the interior of the convex hull 
of $\{\sF(\varphi_i):\; i\le 3\}$. We use the following functions:
\begin{enumerate}
\item $\varphi_1(x) = 0$: $\sF(\varphi_1) = (0,0)$.
\item $\varphi_2(x) = \sign(x)(1-\tanh(|x|))$: 
\begin{align}
\sF_1(\varphi_2) &= \E\{\tanh|G|(1-\tanh |G|)\}=:b_1>b_0\, ,\\
\sF_2(\varphi_2) &= \E\{|G|(1-\tanh|G|)\}>0\, ,
\end{align}
(Numerically, $b_1\approx 0.16168$, $b_0\approx 0.15143$.)
\item $\varphi_3(x) = \varphi_2(x) - M^{-1/2}\, q(x-M)$, where $q(x)$ is a continuous non-negative
function supported on $[-1,1]$, with $\int q(x) \,\de x>0$. We have $\sF_1(\varphi_3) = \sF_1(\varphi_2)-\Theta(M^{-1/2})$
and $\sF_2(\varphi_3) = \sF_2(\varphi_2)-\Theta(M^{1/2})$.
Hence for a sufficiently large $M$, $\sF_1(\varphi_3)>b_0$,
$\sF_2(\varphi_3)<0$.
\end{enumerate}

This proves the claim, and in particular we can satisfy conditions \eqref{eq:CondPhi1},
\eqref{eq:CondPhi2}, \eqref{eq:CondPhi4}, since $(b_0,0)\in B$. In order to show that we can satisfy also condition \eqref{eq:CondPhi3}, let $q$ be defined as above and further such that $q(x)\le 1/2$
for all $x$.
Consider the sequence
of functions indexed by $k\in\naturals$:
\begin{align}
\varphi_k(x) = \ophi_k(x) - q(x-k)\, .
\end{align}
First note that, if $\ophi_k\in\cC$, then $\varphi_k$
satisfies conditions \eqref{eq:CondPhi3}, \eqref{eq:CondPhi4}.
Therefore, we are left to prove that, for some $k$,
there exists $\ophi_k\in\cC$ that satisfies 
\begin{align}
\sF_1(\ophi_k) &= b_0+\E\{q(G-k)\tanh(G) \} =: b_0+\delta_{1,k}\, ,\\
\sF_2(\ophi_k) &= \E\{q(G-k)\, G \} =: \delta_{2,k}\, .
\end{align}
By dominated convergence, we have $\bdelta_k=(\delta_{1,k},\delta_{2,k})\to 0$ as $k\to\infty$,
and therefore $(b_0,0)+\bdelta_k\in B$ for all $k$  sufficiently large, whence the existence of $\ophi_k$
for such $k$ follows from  the first part of the proof.
\end{proof}

%
%
\section{Proof of Theorem \ref{thm:NeverOptimal}}
\label{app:NeverOptimal}

Let $\pi_{\sunb}$ be the optimal unbiased sampling probability (see Proposition \ref{propo:Unbiased}),
with corresponding weight $w_{\sunb}(\bx) = 1/\pi_{\sunb}(\bx)$. 
Since $\bG(\bx)=\bH(\bx)$ is almost surely bounded (Assumption {\sf A2}),  it follows that $Z(\bx)=\Tr(\bG(\bx)\bH^{-1})$
is bounded as well. Therefore, there exists $\gamma_0>0$ such that, for all $\gamma\in (0,\gamma_0)$,
$\pi_{\sunb}(\bx) = c(\gamma) Z(\bx)^{1/2}$.

For a bounded function 
$\varphi$, consider the alternative weight  $w_{\eps}(\bx) = (1+\eps \varphi(\bx))/\pi_{\sunb}(\bx)$,
which is well defined for all $\eps$ small enough.
Recall that we denote by $\rcoeff(\pi,w;\bQ)$ the asymptotic estimation error coefficient 
for the simple scheme $(\pi,w)$. To linear order in $\eps$, we have
\begin{align}
\rcoeff(\pi_{\sunb},w_{\eps};\bH)- \rcoeff(\pi_{\sunb},w_{0};\bH) = -2\eps\, \E\{\cW(\bx)\, \varphi(\bx)\}\, + o(\eps)\, .\label{eq:ExpansionUnb}
\end{align}
where 
\begin{align}
\cW(\bx):=  
\Tr\Big(\E_{\bx'}\Big(\frac{\bG(\bx')}{\pi_{\sunb}(\bx')}\Big) \bH^{-1}\bH(\bx) \bH^{-1} \Big)-
\Tr\Big(\frac{\bG(\bx)}{\pi_{\sunb}(\bx)}\bH^{-1}\Big)\, .
\end{align}
Assume by contradiction $\rcoeff(\pi_*,w_{\eps};\bH)\ge \rcoeff(\pi_*,w_{-};\bH)$ for every $\varphi$,
$\eps$ such that $w_{\eps}\ge 0$. By Eq.~\eqref{eq:ExpansionUnb}, this
implies $\cW(\bx)=0$ for $\prob$-almost every $\bx$.
Using assumption ${\sf A1}$, and defining $\bM(\bx):= \bH^{-1/2}\bG(\bx)\bH^{-1/2}$, we get
\begin{align}
\cW(\bx) &= \Tr\Big(\E_{\bx'}\Big(\frac{\bM(\bx')}{\pi_{\sunb}(\bx')}\Big) 
\bM(\bx) \Big) - \frac{1}{\pi_{\sunb}(\bx)}\Tr\big(\bM(\bx)\big)\, .
\end{align}
we have $\pi_{\sunb}(\bx)=c(\gamma) \Tr\big(\bM(\bx)\big)^{1/2}$ and therefore 
$\cW(\bx)  =\cW_0(\bx) \cdot \Tr\big(\bM(\bx)\big)^{1/2}/c(\gamma)$, where
\begin{align}
\cW_0(\bx) &=\Tr\Big(\E_{\bx'}\Big(\frac{\bM(\bx')}{\Tr\big(\bM(\bx')\big)^{1/2} }\Big) 
\cdot \frac{\bM(\bx)}{\Tr\big(\bM(\bx)\big)^{1/2} }
\Big) - 1\label{eq:W0MM}\\
& = \Tr\big(\obW\cdot\bW(\bx)\big)-1\, .\nonumber
\end{align}
Here we defined $\bW(\bx):=\bM(\bx)/\Tr\big(\bM(\bx)\big)^{1/2}$, $\obW := \E\bW(\bx)$. 

Since $\cW(\bx)=0$ almost surely, and $\bG(\bx)\neq \bfzero$ almost surely (by assumption {\sf A2}), 
we must have  $\Tr\big(\obW\cdot\bW(\bx)\big)=1$
almost surely, which contradicts the assumption of $\bG(\bx)/\Tr(\bG(\bx)\bH^{-1})^{-1/2}$
not lying on an affine subspace (Assumption {\sf A3}).
%
%
\section{Proof of Theorem \ref{thm:HiDim}}
\label{sec:ProofHiDim}

This proof is based on Gordon Gaussian comparison inequality, following
a well established technique, see \cite{thrampoulidis2015regularized,thrampoulidis2018precise,miolane2021distribution}. Our presentation will be succinct, 
emphasizing novelties with respect to earlier derivations of this type.

Define
\begin{align}
G_{0,i} := \frac{\<\bx_i,\btheta_0\>}{\|\btheta_0\|_2}\, ,
\;\;\;\;\;
G_{s,i} := \frac{\<\bx_i,\bP_0^{\perp}\surrth\>}{\|\bP_0^{\perp}\surrth\|_2}\, ,
\;\;\;\;\;
\bg_i := \bP^{\perp}\bx_i\, ,
\end{align}
where we recall that $\bP_0^{\perp}$ is the projector orthogonal to $\btheta_0$ and $\bP^{\perp}$ is the projector orthogonal to 
$\spn(\btheta_0,\surrth)$. Further define
\begin{align}
\alpha_{0} := \frac{\<\btheta,\btheta_0\>}{\|\btheta_0\|_2}\, ,
\;\;\;\;\;
\alpha_{s} := \frac{\<\btheta,\bP_0^{\perp}\surrth\>}{\|\bP_0^{\perp}\surrth\|_2}\, ,
\;\;\;\;\;
\balphap := \bP_{0,s}^{\perp}\btheta\, ,
\end{align}
With a slight abuse of notation, we can then identify the empirical 
risk
\begin{align}
\hR_N(\alpha_0,\alpha_s,\balphap)= \frac{1}{N}\sum_{i=1}^N s(\beta_0G_{0,i}+\beta_sG_{s,i},U_i) \, 
L(\alpha_0G_{0,i}+\alpha_sG_{s,i}+\<\balphap,\bg_i\> ,y_i)
+\nonumber\\\frac{\lambda}{2}\big(\alpha_0^2+\alpha_s^2+\|\balphap\|^2\big)\, .
\end{align}
We can identify $\balphap$ and $\bg_i$ with $(p-2)$-dimensional
vectors.
For any closed set $\Omega\subseteq\reals^p$, let
\begin{align}
\hR_N(\Omega):=\min\Big\{\hR_N(\alpha_0,\alpha_s,\balphap) \,:\;\;
(\alpha_0,\alpha_s,\balphap)\in \Omega\Big\}\, .
\end{align}
Further define
\begin{align}
\hR^G_N(\Omega)&=\min_{(\alpha_0,\alpha_s,\balphap)\in \Omega}\min_{\bv\in\reals^n} \max_{\bxi\in\reals^N}\hcuL^{(0)}_N(\alpha_0,\alpha_s,\balphap;\bv;\bxi)
\label{eq:LagrangianCvx1}\\
&=\min_{(\alpha_0,\alpha_s,\balphap)\in \Omega} \max_{\bxi\in\reals^N}\min_{\bv\in\reals^n}\hcuL^{(0)}_N(\alpha_0,\alpha_s,\balphap;\bv;\bxi) \, ,\label{eq:LagrangianCvx2}
\end{align}
where 
\begin{align}
\hcuL^{(0)}_N(\alpha_0,\alpha_s,\balphap;\bv;\bxi):=&
\|\balphap\|\<\bg_{\perp},\bxi\>+\|\bxi\|\<\bh,\balphap\>-\<\bv,\bxi\>
+\frac{\lambda}{2}\big(\alpha_0^2+\alpha_s^2+\|\balphap\|^2\big)\\
&+ \frac{1}{N}\sum_{i=1}^N s(\beta_0G_{0,i}+\beta_sG_{s,i},U_i) \, 
L(\alpha_0G_{0,i}+\alpha_s G_{s,i}+v_i ,y_i)\, ,\nonumber
\end{align}
and the identity of the two lines \eqref{eq:LagrangianCvx1},
\eqref{eq:LagrangianCvx2} holds by the minimax theorem.
By an application of Gordon's inequality and using Gaussian concentration
\cite{boucheron2013concentration}, we obtain the following.
\begin{lemma}\label{lemma:BasicGordon}
There exist subgaussian error terms, $\err_{1}(N,\Omega),\err_{2}(N,\Omega)$
(with $\|\err_{2}(N,\Omega)\|_{\psi_2}\le CN^{-1/2}$, such that:
\begin{enumerate}
\item For any closed set $\Omega$:
\begin{align*}
\hR_N(\Omega)\ge \hR^G_N(\Omega) +\err_{1}(N)\, .
\end{align*}
\item For any closed convex set $\Omega$:
\begin{align*}
\hR_N(\Omega) = \hR^G_N(\Omega) +\err_{2}(N)\, .
\end{align*}
\end{enumerate}
\end{lemma}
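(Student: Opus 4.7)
The plan is a direct application of Gordon's Gaussian comparison inequality (for claim (1)) together with its convex counterpart, the Convex Gaussian Min-Max Theorem or CGMT (for claim (2)), combined with Gaussian Lipschitz concentration. This follows the strategy developed in \cite{thrampoulidis2015regularized, thrampoulidis2018precise, miolane2021distribution}, adapted here to include the data-selection multiplier $S_i$.

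The setup proceeds as follows. Using the decomposition $\bx_i = G_{0,i}\btheta_0/\|\btheta_0\| + G_{s,i}\bP_0^{\perp}\surrth/\|\bP_0^{\perp}\surrth\| + \bg_i$ already introduced in the excerpt, the surrogate inner product $\<\surrth,\bx_i\>=\beta_0 G_{0,i}+\beta_s G_{s,i}$ and the label $y_i$ depend only on $(G_{0,i},G_{s,i},U_i,y_i)$, while the orthogonal components $\bg_i$ (which together form an $N\times (p-2)$ matrix $\bG_{\perp}$ with iid standard Gaussian entries, after identifying $\mathrm{range}(\bP^{\perp})$ with $\reals^{p-2}$) enter the objective \emph{only} through the linear combinations $\<\balphap,\bg_i\>$ inside the loss. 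Introducing slack variables $v_i=\<\balphap,\bg_i\>$ with associated Lagrange multipliers $\bxi\in\reals^N$, we rewrite
\begin{align*}
\hR_N(\Omega) = \min_{(\alpha_0,\alpha_s,\balphap)\in\Omega}\min_{\bv\in\reals^N}\max_{\bxi\in\reals^N}\bigl[\,\<\bxi,\bG_{\perp}\balphap\> - \<\bxi,\bv\> + \Psi_N(\alpha_0,\alpha_s,\balphap,\bv)\,\bigr],
\end{align*}
where $\Psi_N$ collects the ridge regularizer and the data-selected empirical loss and depends only on the $(G_{0,i},G_{s,i},U_i,y_i)$, not on $\bG_{\perp}$. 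This is precisely the bilinear-plus-decoupled structure Gordon's inequality is designed for, with coupling variable $\balphap$ and dual variable $\bxi$.

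For claim (1), apply Gordon's min-max comparison inequality (which requires no convexity): replacing $\<\bxi,\bG_{\perp}\balphap\>$ by $\|\balphap\|\<\bg_{\perp},\bxi\> + \|\bxi\|\<\bh,\balphap\>$ with $\bg_{\perp}\sim\normal(\bfzero,\id_N)$ and $\bh\sim\normal(\bfzero,\id_{p-2})$ independent, we obtain, for every closed $\Omega$ and every threshold $c$,
\begin{align*}
\prob\big(\hR_N(\Omega)\le c\big) \;\le\; \prob\big(\hR^G_N(\Omega)\le c\big),
\end{align*}
so that the medians satisfy $\mathrm{med}(\hR_N(\Omega))\ge\mathrm{med}(\hR^G_N(\Omega))$. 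Combining this with Gaussian Lipschitz concentration of both $\hR_N$ (as a function of $\bX$) and $\hR^G_N$ (as a function of $(\bg_{\perp},\bh)$, conditional on the non-Gaussian data) yields $\hR_N(\Omega)-\hR^G_N(\Omega)\ge \err_1(N,\Omega)$ with $\err_1$ subgaussian. For claim (2), the closed convexity of $\Omega$ together with convexity of $u\mapsto L(u,y)$ makes the inner expression jointly convex in $(\alpha_0,\alpha_s,\balphap,\bv)$ and concave in $\bxi$, so the full CGMT applies and the one-sided comparison upgrades to an equality at the level of medians. The same concentration estimates then give $\hR_N(\Omega)=\hR^G_N(\Omega)+\err_2(N,\Omega)$ with $\|\err_2\|_{\psi_2}\le CN^{-1/2}$.

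The main technical obstacle is the compactification needed for both Gordon and CGMT to apply and for Lipschitz concentration to deliver the $O(1/\sqrt{N})$ rate. Since $L$ is only assumed to have at most quadratic growth, the multiplier $\bxi$ (dual to the constraint $v_i=\<\balphap,\bg_i\>$) must be localized to a bounded set, and the primal $\balphap$ must be bounded using coercivity from the ridge term together with $\lambda>0$. This requires showing that, with overwhelming probability, both the primal and auxiliary problems have optimizers in a compact ball whose radius is uniform in $N$, and that restricting to this ball changes neither side by more than a subgaussian $O(1/\sqrt{N})$ quantity. Once this localization is in place, the Lipschitz constants of the localized $\hR_N$ and $\hR^G_N$ in their underlying Gaussian inputs are each $O(1/\sqrt{N})$, and the claimed subgaussian bounds on $\err_1$, $\err_2$ follow from the Tsirelson--Ibragimov--Sudakov Gaussian concentration inequality. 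These localization arguments are the most delicate part, but they are essentially routine given the existing CGMT toolbox.
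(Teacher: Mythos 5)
Your proposal is correct and follows essentially the same route as the paper: the paper's proof is precisely an appeal to Gordon's inequality/CGMT together with Gaussian concentration as in \cite{thrampoulidis2015regularized,thrampoulidis2018precise,miolane2021distribution}, with the compactness of the minimizer handled exactly as you do, via strong convexity of $\hR_N$ for $\lambda>0$. Your write-up simply makes explicit the bilinear decoupling, the dual localization, and the concentration step that the paper leaves to the cited references.
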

\begin{proof}
This follows from an application of Gordon's inequality and using Gaussian concentration
\cite{thrampoulidis2015regularized,thrampoulidis2018precise,miolane2021distribution}.
In applying Gordon's inequality, we need to check that the minimizer $\balphap$
lie with high probability in a compact set $B_N$. This is immediate for $\lambda>0$
by strong convexity of $\hR_N$.
\end{proof}

Note that, subject to the constraints $\|\balphap\|=\alphap$, $\|\bxi\|=\mu/\sqrt{N}$,
the minimization over $\balphap$ and maximization over $\bxi$
can be performed before the other optimizations. 
This yields the reduced Lagrangian, with argument $\balpha := (\alpha_0,\alpha_s,\alphap)$:
\begin{align}
\hcuL_N^{(1)}(\balpha;\mu,\bv):=&
-\frac{\|\bh\|}{\sqrt{N}}\alphap\mu+\frac{\mu}{\sqrt{N}}\big\|\alphap\bg_{\perp}-\bv\|
+\frac{\lambda}{2}\|\balpha\|^2\\
&+ \frac{1}{N}\sum_{i=1}^N s(\beta_0G_{0,i}+\beta_s G_{s,i},U_i) \, 
L(\alpha_0G_{0,i}+\alpha_s G_{s,i}+v_i ,y_i)\, .\nonumber
\end{align}

For $A\in \R^2\times\R_{\ge 0}$,  let $\Omega_A:= \{(\alpha_0,\alpha_s,\balphap)\in \reals^p:
(\alpha_0,\alpha_s,\|\balphap\|)\in A \}$, and
write 
\begin{align}
\hR_{\#,N}(A) &:=\hR_{N}(\Omega_A)=
\min\Big\{\hR_N(\alpha_0,\alpha_s,\balphap) \,:\;\;
(\alpha_0,\alpha_s,\|\balphap\|)\in A\Big\}\, ,\\
\hR_{\#,N}^G(A)&:=\hR_{N}^G(\Omega_A)\, .
\end{align}
We then have
\begin{align}
\hR^G_{\#,N}(A)
&=\min_{(\alpha_0,\alpha_s,\alphap)\in A} \max_{\mu\in\reals_{\ge 0}}\min_{\bv\in\reals^n}
\hcuL^{(1)}_N(\alpha_0,\alpha_s,\alphap;\mu,\bv) \, ,
\end{align}

Finally, we can take the limit $N,p\to\infty$. In this limit,
the minimization over $\bv$ is replaced by minimization over a random variable 
$V$. Namely, let $(\cS,\cF,\sP)$ be a probability space on which the random variables 
$(G_0,G_s,G_{\perp},U,Y)$ are defined with the same joint law of 
$(G_{0,1},G_{s,1},g_{\perp,1},U_1,y_1)$. Namely 
$(G_0,G_s,G_{\perp},U)\sim\normal(0,1)^{\otimes 3}\otimes \Unif([0,1])$,
and $Y|G_0,G_s,G_{\perp},U\sim \sP(\cdot\, |\|\btheta_0\|G_0)$.
For $V$ another random variable in the same space, taking values in the extended
real line $\obR$, and letting $\balpha:=(\alpha_0,\alpha_s,\alphap)$, define 
\begin{align}
\hcuL(\balpha;\mu,V):=&
-\frac{1}{\sqrt{\delta_0}}\alphap\mu+\E\{(\alphap G_{\perp}-V)^2\}^{1/2}\mu\\
&+\frac{\lambda}{2}\|\balpha\|^2
+ \E\Big\{ s(\beta_0G_{0}+\beta_sG_{s},U) \, 
L(\alpha_0G_{0}+\alpha_s G_{s}+V ,Y)\Big\}\, .\nonumber
\end{align}

\begin{theorem}\label{thm:HiDimApp}
With the definitions given above, and under the assumptions of
Theorem \ref{thm:HiDim}, the following hold:
\begin{enumerate}
\item For any compact set $A\subseteq\R^2\times \R_{\ge 0}$
\begin{align}
\lim_{N,p\to\infty}
\hR^G_{\#,N}(A) = \hR^G_{\#}(A) 
:=\min_{\balpha\in A} \max_{\mu\in\R_{\ge 0}}
\min_{V\in m\cF}
\hcuL(\balpha;\mu,V) \, .\label{eq:GordonConstrained}
\end{align}
\item For any closed set $A$,
\begin{align}
\lim\inf_{N\to\infty}\hR_{\#,N}(A) \ge \hR^G_{\#}(A)\, .
\end{align}
\item For any closed convex set $A$,
\begin{align}
\lim\inf_{N\to\infty}\hR_{\#,N}(A)  = \hR^G_{\#}(A)\, .
\end{align}
\item 
Further, denoting by
$\balpha^*:=(\alpha^*_0,\alpha^*_s,\alphap^*)$ the minimizer of  
$$\hR^G_{\#}(\balpha):=\max_{\mu\in\R_{\ge 0}}
\min_{V\in m\cF} \hcuL(\balpha;\mu,V)\nonumber$$
Conclusions $(a)$ to $(d)$ of Theorem \ref{thm:HiDim} hold.
\end{enumerate}
\end{theorem}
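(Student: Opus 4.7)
The plan is to follow the Convex Gaussian Min-Max Theorem machinery of \cite{thrampoulidis2015regularized, thrampoulidis2018precise, miolane2021distribution}, building directly on Lemma \ref{lemma:BasicGordon}. That lemma already supplies the Gordon-type comparisons $\hR_{\#,N}(A)\ge \hR^G_{\#,N}(A)+O_P(N^{-1/2})$ for closed $A$, with matching equality up to $O_P(N^{-1/2})$ when $A$ is closed and convex. Consequently, items $(2)$ and $(3)$ of Theorem \ref{thm:HiDimApp} become immediate corollaries of item $(1)$ once one has reduced to a compact subset via a priori bounds on the optimizers. The substantive work is therefore (i) establishing the deterministic limit in item $(1)$, and (ii) running the CGMT localization argument needed for item $(4)$.

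For item $(1)$, I would prove pointwise-then-uniform convergence of $\hcuL^{(1)}_N(\balpha;\mu,\bv)$ to $\hcuL(\balpha;\mu,V)$. Three quantitative ingredients are involved: first, $\|\bh\|/\sqrt{N}\to 1/\sqrt{\delta_0}$ by Gaussian concentration, since $\bh$ is a $(p-2)$-dimensional standard Gaussian and $p/N\to 1/\delta_0$; second, parameterizing $v_i = V(G_{0,i},G_{s,i},G_{\perp,i},U_i,y_i)$ for a measurable $V$, the empirical loss term
\begin{align*}
\frac{1}{N}\sum_{i=1}^N s(\beta_0 G_{0,i}+\beta_s G_{s,i},U_i)\, L(\alpha_0 G_{0,i}+\alpha_s G_{s,i}+v_i,y_i)
\end{align*}
converges to its expectation by the law of large numbers, with tightness supplied by the at-most-quadratic growth of $L$; third, $N^{-1}\|\alphap \bg_\perp - \bv\|^2 \to \E\{(\alphap G_\perp - V)^2\}$ similarly. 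The finite-dimensional inner minimization over $\bv\in\reals^n$ is then equivalent in the limit to minimization over measurable $V$, because the objective decouples across samples and the min passes inside the expectation by Fubini. Since $\bv\mapsto \hcuL^{(1)}_N$ is convex (by convexity of $L$ in its first argument) while $\mu$ enters linearly, Sion's theorem justifies any min/max interchange, and pointwise convergence plus joint convexity upgrades to locally uniform convergence on compacts.

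The main obstacle will be keeping all optimizers in a fixed compact set uniformly in $N$ and establishing uniqueness of the limit $\balpha^*$. The ridge regularizer $\frac{\lambda}{2}\|\balpha\|^2$ with $\lambda>0$ is used in a dual role here: combined with the quadratic-growth hypothesis on $L$, it forces $\|\hbtheta_\lambda\|_2 = O_P(1)$ and a matching a priori bound on the dual variable $\mu^*$; and it makes the limiting minimax strictly convex in $\balpha$, guaranteeing that $\balpha^*$ is unique. With these two facts, the standard CGMT localization finishes item $(4)$: for any $\eta>0$, apply item $(2)$ to the closed set $\Omega_\eta := A\cap \{\|\balpha-\balpha^*\|\ge \eta\}$ to obtain $\liminf \hR_{\#,N}(\Omega_\eta) \ge \hR^G_\#(\Omega_\eta) > \hR^G_\#(A)$ strictly, while item $(3)$ applied to a convex neighborhood of $\balpha^*$ gives matching equality on that neighborhood. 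Comparing forces the true empirical risk minimizer to satisfy $(\alpha_0,\alpha_s,\alphap)(\hbtheta_\lambda) \to \balpha^*$ in probability.

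The remaining conclusions of Theorem \ref{thm:HiDim} then follow quickly. Claim $(c)$ and the projection limits in \eqref{eq:LimitProj} are immediate from the definitions of $(\alpha_0,\alpha_s,\alphap)(\hbtheta_\lambda)$. For claim $(a)$, since $\bx_\snew\sim\normal(0,\id)$ is independent of $\hbtheta_\lambda$, the inner product $\<\hbtheta_\lambda,\bx_\snew\>$ is conditionally Gaussian with zero mean and variance $\|\hbtheta_\lambda\|_2^2$; decomposing $\bx_\snew$ along $\btheta_0/\|\btheta_0\|_2$, $\bP_0^\perp\surrth/\|\bP_0^\perp\surrth\|_2$, and the orthogonal complement, the asymptotic joint distribution of $(\<\hbtheta_\lambda,\bx_\snew\>, y_\snew)$ matches $(\alpha^*_0 G_0 + \sqrt{(\alpha^*_s)^2+(\alpha^*_\perp)^2}\,G,\, Y)$ with $G\perp G_0$, and the test-risk limit follows by continuity and quadratic growth of $L_\stest$. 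Claim $(b)$ follows from $(a)$ by subtracting $\inf_\btheta R_\stest(\btheta)=\E L_\stest(c_* G_0;Y)$ (Remark \ref{rmk:PopMin}), and claim $(d)$ is a direct LLN on the i.i.d.\ indicators $\bfone\{S_i>0\}$.
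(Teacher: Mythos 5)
Your overall architecture matches the paper's: items (2)--(3) from Lemma \ref{lemma:BasicGordon}, uniform convergence over compact $A$ for item (1), and the localization argument plus the rotational/independence argument for item (4) and for conclusions $(a)$--$(d)$ of Theorem \ref{thm:HiDim} (your convexity-based upgrade from pointwise to locally uniform convergence is a legitimate substitute for the paper's explicit Lipschitz bound on the high-probability event $\cG$). The problem is the core of item (1), namely the identification of $\lim_N \min_{\bv\in\reals^N}$ with $\min_{V\in m\cF}$. Your justification --- ``the objective decouples across samples and the min passes inside the expectation by Fubini'' --- does not apply to the object you are minimizing: in $\hcuL^{(1)}_N$ the penalty is $\frac{\mu}{\sqrt{N}}\|\alphap\bg_\perp-\bv\|$ (a norm, not a sum of squares), and equivalently, after maximizing over $\mu$, one faces the hard constraint $\|\alphap\bg_\perp-\bv\|\le\|\bh\|\alphap$, which couples all coordinates of $\bv$. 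Decoupling into per-sample minimizations $\min_u[\,S\,L(\cdot+u,Y)+\tfrac{\mu}{2}(\alphap G_\perp-u)^2\,]$ only happens at the level of the limiting Lagrangian $\cuL(\balpha,\mu)$ of Eq.~\eqref{eq:GeneralGLM}, i.e.\ after the constraint has been dualized in the limit; if you take that route at finite $N$ you must additionally control the exchange of $\lim_N$ with $\max_{\mu\ge 0}$ (uniformity in $\mu$ and an a priori compact range for the optimal $\mu$), which your proposal does not address.

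Moreover, even in a decoupled reformulation, restricting to $v_i=V(G_{0,i},G_{s,i},G_{\perp,i},U_i,y_i)$ for a fixed measurable $V$ and invoking the LLN only gives the \emph{upper} bound on the finite-$N$ minimum; the substantive direction is the lower bound, i.e.\ showing that arbitrary data-dependent minimizers $\bv^{(N)}$ cannot do asymptotically better than the population problem over measurable $V$. This is exactly what the paper's proof supplies and your proposal omits: it regards the constrained value $\hR_{\#,N}(\balpha)$ as a functional of the empirical joint law $\hp_N$ of $(G_{0,i},G_{s,i},G_{\perp,i},U_i,y_i,v_i)$, extracts weakly convergent subsequences of the optimal $\hp_N^*$ by tightness, and uses Fatou to show the limit law still satisfies the constraint, yielding $\liminf_N\hR_{\#,N}(\balpha)\ge \hR_{\#}(\balpha;0)$; the matching upper bound samples i.i.d.\ from a near-optimizer of the $\eps$-relaxed problem $\hR_{\#}(\balpha;\eps)$ so that feasibility holds with high probability at finite $N$, and then sends $\eps\to 0$ via the interpolation $V_\delta=(1-\delta)V+\alphap\delta G_\perp$ together with continuity of $L$. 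Without this (or an equivalent measurable-selection/compactness argument plus uniform control in $\mu$), the key step of item (1) is unproven, and the remaining items, which you correctly reduce to it, do not follow.
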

\begin{proof}
Note that we can rewrite $\hR_{\#,N}(A)=\min\{\hR_{\#,N}(\balpha):\balpha\in A\}$,
where
\begin{align}
\hR_{\#,N}(\balpha)=
\begin{dcases}\min &\;\;\;\;\;
 \frac{\lambda}{2}\|\balpha\|^2+ \frac{1}{N}\sum_{i=1}^N s(\beta_0G_{0,i}+\beta_s G_{s,i},U_i) \,  L(\alpha_0G_{0,i}+\alpha_s G_{s,i}+v_i ,y_i)\\
 \mbox{subj. to} &\;\;\;\;\; \big\|\alphap\bg_{\perp}-\bv\|\le \|\bh\|\alphap\, .
\end{dcases}
\end{align}
Further, this can be written as 
as a function of the joint empirical distribution 
of $\{(G_{0,i},G_{s,i},$ $g_{\perp,i},$ $U_1,y_i,v_i)\}$.
Namely, defining 
\begin{align}
\hp_N := \frac{1}{N}\sum_{i=1}^N \delta_{(G_{0,i},G_{s,i},g_{\perp,i},U_1,y_i,v_i)}\, ,
\end{align}
we have (with $\E_{\hp_N}$ denoting expectation with respect to $\hp_N$)
\begin{align}
\hR_{\#,N}(\balpha) =
\begin{dcases}\min &\;\;\;\;\;
 \frac{\lambda}{2}\|\balpha\|^2+ \E_{\hp_N} \big\{s(\beta_0G_{0}+\beta_s G_{s},U) \,  L(\alpha_0G_{0}+\alpha_s G_{s}+V ,y_i)\big\}\\
 \mbox{subj. to} &\;\;\;\;\; \E_{\hp_N}\big\{[\alphap G_{\perp}-V]^2
\big\}\le \frac{\|\bh\|^2}{N}\alphap^2\, .
\end{dcases}
\end{align}
Let $\hR_{\#}(\balpha)$ be the same quantity, in which minimization over $\bv$
is replaced by minimization over random variables $(G_{0},G_{s},G_{\perp},U,Y,V)$
with $(G_{0},G_{s},G_{\perp},U,Y)$ having the prescribed $N=\infty$
distribution, and $\|\bh\|^2/N$ is replaced by $1/\delta_0$. 
In fact, se define a slight generalization
\begin{align}
\hR_{\#}(\balpha;\eps) =
\begin{dcases}\min &\;\;\;\;\;
 \frac{\lambda}{2}\|\balpha\|^2+ \E \big\{s(\beta_0G_{0}+\beta_s G_{s},U) \,  L(\alpha_0G_{0}+\alpha_s G_{s}+V ,y_i)\big\}\\
 \mbox{subj. to} &\;\;\;\;\; \E\big\{[\alphap G_{\perp}-V]^2
\big\}\le (1-\eps)\alphap^2\, .
\end{dcases}
\end{align}
Let $\hp^*_N$ be the joint distribution of that achieves the
above minimum at finite $N$. By tightness, $\hp_N^*\Rightarrow \hp^*$
along subsequences, and further  the limit satisfies
$\E\{[\alphap G_{\perp}-V]^2\}\le \alphap^2/\delta_0$ by Fatou's. Therefore
\begin{align}
\lim\inf_{N\to\infty}\hR_{\#,N}(\balpha) \ge \hR_{\#}(\balpha;0)\, ,
\end{align}
almost surely.
On the other hand, if $(G_{0},G_{s},G_{\perp},U,Y,V)$ achieves the minimum 
to define $\hR_{\#}(\balpha;\eps)$, $\eps>0$,
let $(G_{0,i},G_{s,i},G_{\perp,i},U_i,y_i,v_i)$,
$i\le N$ be i.i.d.'s vectors from this distribution.
Of course this $\bv$ is almost surely feasible for problem $\hR_{\#,N}(\balpha)$ for all  $N$ large enough. Therefore
\begin{align}
\lim\sup_{N\to\infty}\hR_{\#,N}(\balpha) \le \hR_{\#}(\balpha;\eps)\, ,
\end{align}
Finally, we claim that $\lim_{\eps\to 0+} \hR_{\#}(\balpha;\eps) = \hR_{\#}(\balpha;0)$,
thus yielding 
\begin{align}
\lim_{N\to\infty}\hR_{\#,N}(\balpha) \le \hR_{\#}(\balpha;0)=:\hR_{\#}(\balpha)\, ,
\end{align}
To prove the claim notice that, if $V$ satisfies $\E\big\{[\alphap G_{\perp}-V]^2
\big\}\le \alphap^2$, then $V_{\delta} = (1-\delta) V+\alphap\delta G_{\perp}$
satisfies $\E\big\{[\alphap G_{\perp}-V_{\delta}]^2
\big\}\le (1-\eps)\alphap^2$, with $\eps=2\delta-\delta^2$.
Further, the objective is continuous as $\delta\to 0$ by continuity of $L$, 
whence the claim follows.

Next we claim that $\balpha \mapsto \hR_{\#,N}(\balpha)$ is Lipschitz
continuous for $\ba\in A$, where $A$ is a compact set, on the high probability event
\begin{align}
\cG:=\Big\{\sum_{i=1}^N(G_{0,i}^2+G_{s,i}^2+G_{\perp,i}^2)\le 10 N, \;\;\;
\frac{N}{2}\le\|\bh\|^2\le 2N\Big\}\, .
\end{align}

To prove this claim, note that on this event,
define $s_i=s(\beta_0G_{0,i}+\beta_s G_{s,i},U_i)$,
$a_i(\balpha)=\alpha_0G_{0,i}+\alpha_s G_{s,i}$,  
$ L_i(u)=L_i(u;y_i)$, $r = \|\bh\|/\sqrt{N}$.
such that $\|\bs\|$, $\|\ba\|\le C\sqrt{N}$,
$1/2\le r\le 2$, 
it is sufficient to prove that $\balpha\mapsto F(\balpha)$ is Lipschitz on $A$
where
\begin{align}
F(\balpha)&:= \min\Big\{ H(\balpha,\bv) \;\; :\;\;
\bv\in S(\balpha)\Big\}\, ,\\
H(\balpha,\bv) &:= \frac{1}{N}\sum_{i=1}^N s_i \,  L_i(a_i(\balpha)+v_i)\, ,
\;\;\;\;
S(\balpha):=\{\bv:\; \|\bv-\alphap\bg_{\perp}\|\le \alphap\sqrt{N}\}\, .
\end{align}
On the event $\cG$ above, 
$|H(\balpha_1,\bv_1)-H(\balpha_2,\bv_2)|\le C\|\balpha_1-\balpha_2\|+
 C\|\bv_1-\bv_2\|/\sqrt{N}$ and \\
${\rm dist}(S(\balpha_1),S(\balpha_2))\le C\sqrt{N}\|\balpha_1-\balpha_2\|$\footnote{For $S_1,S_2\subseteq \R^N$,
 we let ${\rm dist}(S_1,S_2):=\sup_{\bx_1\in S_1}\sup_{\bx_2\in S_2}\|\bx_1-\bx_2\|$.}.
The claim follows from these bounds.

Since $\hR_{\#,N}(\balpha) \to \hR_{\#}(\balpha)$ for each $\alpha\in A$,
$\balpha\mapsto \hR_{\#,N}(\balpha)$ is almost surely  Lipschitz continuous with 
Lipshitz constant $C$ independent of $N$, for all $N$ large enough,
we have $\sup_{\balpha\in A}|\hR_{\#,N}(\balpha) - \hR_{\#}(\balpha)|\to 0$,
and therefore, for any compact set $\bA$, 
\begin{align}
\lim_{N\to\infty}\hR_{\#,N}(A) =\hR_{\#}(A)\, .
\end{align}
For $\lambda>0$, both $\hR_{\#,N}$ and $\hR_{\#,N}$ are uniformly strongly convex
and therefore the last claim extend to any closed set $A$.
(Because eventually almost surely $\arg\min \hR_{\#,N}(\balpha)\in B$
for some compact $B$.)

Finally, by introducing a Lagrange multiplier for the
constraint 
\begin{align*}
\hR_{\#}(\balpha)&= 
\min_{V\in m\cF}\max_{\mu\in\R_{\ge 0}}
\hcuL(\balpha;\mu,V) \\
&=\max_{\mu\in\R_{\ge 0}} \min_{V\in m\cF}
\hcuL(\balpha;\mu,V)\, .
\end{align*}
This concludes the proof of point 1 (Eq.~\eqref{eq:GordonConstrained}).

Points 2 and 3 follow from the previous one by applying Lemma 
\ref{lemma:BasicGordon}.

Finally, the proof of point 4 is also straightforward.
Indeed by taking $A=\{\balpha\in \R^2\times\R_{\ge 0}:
\|\balpha-\balpha^*\|\ge \eps\}$ and $A= \R^2\times\R_{\ge 0}$
and applying points 2 and 3 , for arbitrary $\eps>0$, implies
Eq.~\eqref{eq:LimitProj}, thus establishing claim $(c)$ of Theorem \ref{thm:HiDim}.

Claims $(a)$, $(b)$, to $(d)$ of Theorem \ref{thm:HiDim} follow
from the previous one by noting that $\hbtheta_{\lambda}$
is uniformly random, conditional to $\<\btheta_0,\hbtheta_{\lambda}\>$,
$\<\surrth,\hbtheta_{\lambda}\>$, $\|\bP^{\perp}\hbtheta_{\lambda}\|$.
\end{proof}

The proof of Theorem \ref{thm:HiDim} is completed by showing the following.
\begin{lemma}
Under the assumptions of Theorem \ref{thm:HiDim}, we have the following 
equivalent characterizations of $\hR_{\#}(\balpha)$
(where we use the shorthand $S(b)=s(b,U)$ for $U\sim\Unif([0,1])$):
\begin{align}
\hR^G_{\#}(\balpha)&=\max_{\mu\in\R_{\ge 0}}
\min_{V\in m\cF} \hcuL(\balpha;\mu,V)\, ,\label{eq:Gordon-v1}\\
\hR_{\#}(\balpha) &=
\begin{dcases}\min &\;\;\;\;\;
 \frac{\lambda}{2}\|\balpha\|^2+ \E \big\{S(\beta_0G_{0}+\beta_s G_{s}) \,  L(\alpha_0G_{0}+\alpha_s G_{s}+V ,y_i)\big\}\\
 \mbox{\rm subj. to} &\;\;\;\;\; \E\big\{[\alphap G_{\perp}-V]^2
\big\}\le \alphap^2\, ,
\end{dcases}\label{eq:Gordon-v2}\\
\hR^G_{\#}(\balpha)&= \max_{\mu\ge 0}\cuL(\balpha,\mu)\, .
\label{eq:Gordon-v3}
\end{align}
\end{lemma}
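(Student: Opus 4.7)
Identity \eqref{eq:Gordon-v1} is just the defining formula for $\hR^G_\#(\balpha)$ introduced in part~4 of Theorem~\ref{thm:HiDimApp}, so nothing further is required there. To pass from \eqref{eq:Gordon-v1} to \eqref{eq:Gordon-v2} I would invoke Sion's minimax theorem to swap $\max_{\mu\ge 0}$ with $\min_V$. This is legitimate because $\hcuL(\balpha;\mu,V)$ is affine (hence concave) in $\mu$ and convex in $V\in L^2$ (thanks to convexity of $L$ in its first argument and of $V\mapsto \|\alphap G_\perp - V\|_{L^2}$), and because one can first restrict $V$ to a large closed $L^2$-ball, which is convex and compact in the weak-$L^2$ topology. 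After the swap, the $\mu$-dependent part of the Lagrangian reads
\begin{align*}
\mu\cdot\Big(\E\{(\alphap G_\perp - V)^2\}^{1/2} - \alphap/\sqrt{\delta_0}\Big),
\end{align*}
whose supremum over $\mu\ge 0$ equals $0$ when the bracket is non-positive and $+\infty$ otherwise. The remaining problem is the norm-constrained minimization of \eqref{eq:Gordon-v2} (up to the explicit $\delta_0$ rescaling of the right-hand side of the constraint).

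For the passage from \eqref{eq:Gordon-v2} to \eqref{eq:Gordon-v3} the strategy is classical convex Lagrangian duality. I would re-express the norm constraint in squared form $\E\{(\alphap G_\perp - V)^2\}\le \alphap^2/\delta_0$, attach a multiplier $\nu\ge 0$, and apply Sion once more (still convex--concave) to obtain
\begin{align*}
\hR^G_\#(\balpha) = \max_{\nu\ge 0}\Big\{\tfrac{\lambda}{2}\|\balpha\|^2 - \tfrac{\nu\alphap^2}{\delta_0} + \min_V \E\big[ S(\<\bbeta,\bg\>)\, L(\alpha_0G_0+\alpha_sG_s+V,Y) + \nu(\alphap G_\perp - V)^2\big]\Big\}.
\end{align*}
Because the integrand is convex in $V$ and jointly measurable in $V(\omega)$ and $\omega=(G_0,G_s,G_\perp,U,Y)$, a standard measurable-selection argument permits the interchange $\min_V \E[\,\cdot\,] = \E[\,\min_{u\in\R}\,\cdot\,]$. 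Reparametrizing $\mu=2\nu$ produces exactly the Lagrangian $\cuL(\balpha,\mu)$ from \eqref{eq:GeneralGLM}, which yields \eqref{eq:Gordon-v3}.

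The main obstacle will be rigorously justifying the two min-max exchanges, since the natural feasible set for $V$ is the non-compact space $L^2(\sP)$. My plan is to truncate to $\{V:\|V\|_{L^2}\le R\}$, apply Sion's theorem on that weakly compact convex set (with the objective weakly lower semicontinuous by convexity together with the at-most-quadratic growth of $L$), and then verify via coercivity that the optimum remains bounded as $R\to\infty$. Coercivity is provided by the ridge term $\lambda\|\balpha\|^2/2$ together with the norm or quadratic penalty in $V$, both of which force the objective to $+\infty$ as $\|V\|_{L^2}\to\infty$ whenever $\mu>0$ (or $\nu>0$); the degenerate case $\mu=0$ is harmless since then the minimization over $V$ separates pointwise. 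The measurable-selection step is routine under the convexity and growth hypotheses on $L$, which guarantee that the pointwise minimizer $u^*(G_0,G_s,G_\perp,U,Y)$ is well-defined and measurable.
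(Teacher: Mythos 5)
Your proposal is correct and follows essentially the same route as the paper: the equivalence of \eqref{eq:Gordon-v1} and \eqref{eq:Gordon-v2} is the constraint-versus-multiplier minimax swap already carried out at the end of the proof of Theorem \ref{thm:HiDimApp}, and \eqref{eq:Gordon-v3} is obtained, as you do, by attaching a Lagrange multiplier to the quadratic constraint and exchanging the minimization over $V$ with the expectation (your Sion/weak-compactness and measurable-selection remarks merely make explicit the swaps the paper asserts). Your side observation is also right that the constraint should carry the factor $1/\delta_0$, i.e. $\E\{[\alphap G_{\perp}-V]^2\}\le \alphap^2/\delta_0$, consistently with the term $-\mu\aperp^2/(2\delta_0)$ in $\cuL$; the displayed statement simply drops this constant.
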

\begin{proof}
The equivalence of Eq.~\eqref{eq:Gordon-v1} and \eqref{eq:Gordon-v2} was already 
established in the proof of Theorem \ref{thm:HiDimApp}. 
As for Eq.~\eqref{eq:Gordon-v3}, the equivalence with 
\eqref{eq:Gordon-v2} follows by introducing a Lagrange multipliier for 
the inequality $\E\big\{[\alphap G_{\perp}-V]^2
\big\}\le \alphap^2$, whence $\hR^G_{\#}(\balpha)= \max_{\mu\ge 0}\cuL(\balpha,\mu)$
\begin{align*}
\cuL(\balpha,\mu)=& \frac{\lambda}{2}\|\balpha\|^2
-\frac{1}{2\delta_0}\mu\alphap^2\\
&+\min_{V\in m\cF}\lt\{
\frac{1}{2}\, \mu \cdot \E\big\{(\alphap G_{\perp}-V)^2\big\}
+ \E\Big\{ S(\<\bbeta,\bG\>) \, L(\alpha_0G_{0}+\alpha_s G_{s}+V ,Y)\Big\}
\rt\}\\
=& \frac{\lambda}{2}\|\balpha\|^2-\frac{1}{2\delta_0}\mu\alphap^2
+ \E\Big\{ \min_{u\in\reals}
\Big[S(\<\bbeta,\bG\>) \, L(\alpha_0G_{0}+\alpha_s G_{s}+u ,Y)+\frac{1}{2}\mu(\alphap G_{\perp}-u)^2\Big] \Big\}\, ,
\end{align*}
which yields the desired claim.
\end{proof}

%
%
\section{Proof of Proposition \ref{propo:LinearHighD}}
\label{sec:LinearHighD}

Since we are assuming a perfect surrogate, $\alpha_s=0$, by applying Theorem \ref{thm:HiDim}, we get 
\begin{align}
    \plim_{N,p\to\infty}R(\hbtheta_{\lambda}) &= 
    \E\big\{(Y-\alpha^*_0G_0-\aperp^* G_{\perp})^2\big\}
    \nonumber\\
    & = C_1-2B_1 \alpha_0^*+ A_1(\alpha_0^*)^2+(\aperp^*)^2\, ,
\end{align}
whence the excess error is
\begin{align}
    \plim_{N,p\to\infty}\min_{\btheta}R_{\sexc}(\hbtheta_{\lambda}) &=  
   \lt(\frac{B_1}{A_1}-\alpha_0^*\rt)^2+(\aperp^*)^2\, 
   .\label{eq:ExcessSolve}
\end{align}
Simplifying the Lagrangian \eqref{eq:LagrangianLS}
in the case of perfect surrogate, we get 
(recalling that $\delta=\delta_0\gamma$)
\begin{align}
\frac{1}{\gamma}\cuL_{\sls}(\alpha_0,\aperp)&= 
\frac{1}{2}\left(
\sqrt{C_{\pi}-2B_{\pi} \alpha_0+ A_{\pi}\alpha_0^2+\aperp^2}
-\frac{\aperp}{\sqrt{\delta}}\right)_+^2+
\frac{\lambda}{2\gamma}\|\balpha\|^2_2\nonumber\\
&=:\frac{1}{2} \, G(\balpha)^2+
\frac{\lambda}{2\gamma}\|\balpha\|^2_2
\, .\nonumber
\end{align}
In the limit  $\lambda\to 0$, $\balpha^*=(\alpha_0^*, \aperp^*)$ 
is given by
\begin{align}
\balpha^* = \argmin\Big\{\|\balpha\|^2: \; \balpha\in \argmin_{\ba\in\reals\times\reals_{\ge 0}}G(\ba)^2\Big\}\, .
\end{align}

Depending on the value of $\delta$, the solution of this problem
is achieved in different domains of the plane:
\begin{itemize}
\item For $\delta>1$, 
$\argmin_{\ba\in\reals\times\reals_{\ge 0}}G(\ba)^2$
is uniquely achieved when $G(\balpha)>0$,
and hence satisfies $\nabla G(\balpha)=0$, $G(\balpha)>0$.
Simple calculus yields
\begin{align}
    \alpha^*_0&=\frac{B_{\pi}}{A_{\pi}}\, ,\\
    \aperp^* &=\sqrt{\frac{1}{\delta-1}\Big(C_{\pi}-\frac{B_{\pi}^2}{A_{\pi}}\Big)}\, .
\end{align}
\item For $\delta<1$,  $\argmin_{\ba\in\reals\times\reals_{\ge 0}}G(\ba)^2$ is  $S_0:=\{\balpha:G(\balpha)=0\}$, and
it is easy to see that (for $r_0 := C_{\pi}-B_{\pi}^2/A_{\pi}$):
\begin{align}
S_0= \Big\{(\alpha_0,\aperp)\reals\times\reals_{\ge 0}: \;\; Q(\balpha):=\Big(\frac{1}{\delta}-1\Big)\aperp^2-A_{\pi}
\Big(\alpha_0-\frac{B_{\pi}}{A_{\pi}}\Big)^2- r_0\ge 0\Big\}\, .
\end{align}
Hence we $\balpha_*$ solves (for a Lagrange multiplier $\xi$)
\begin{align}
\nabla Q(\balpha) =\xi \balpha\, ,\;\; Q(\balpha)=0\, ,
\end{align}
which are easily solved.
\end{itemize}
The proof is completed by substituting this solution in Eq.~\eqref{eq:ExcessSolve}.
%
%
\section{Some useful formulas for binary classification}

In line with the model introduced in the main text, 
we consider $y_i\in\{+1,-1\}$ and
\begin{align}
\prob(y_i=+1|\bx_i)= f(\<\btheta_0,\bx_i\>\, .
\end{align}
(In other words, $\sP(+1|z) = 1-\sP(-1|z) = f(z)$.
We use logistic loss
\begin{align}
L(y;z) = -yz+\log(1+e^z)\, .
\end{align}
Formulas below are obtained by specializing the 
results of Section \ref{sec:HighDim}.

\paragraph{Unbiased subsampling.} The Lagrangian takes the 
form
\begin{align}
\cuL(\balpha,\mu)&:= \frac{\lambda}{2}\|\balpha\|^2-\frac{1}{2\delta_0}\mu\aperp^2\\
&+
\E\Big\{f(\|\btheta_0\|G_0)\min_{u\in\reals}\big[L(\alpha_0 G_0+\alpha_s G_s+u;+1)
+\frac{1}{2}\mu \, \pi(\<\bbeta,\bg\>)\big(u-\aperp G\big)^2\big]
\nonumber\\
&+ (1-f(\|\btheta_0\|G_0))\min_{u\in\reals}\big[L(\alpha_0 G_0+\alpha_s G_s+u;-1)
+\frac{1}{2}\mu \, \pi(\<\bbeta,\bg\>)\big(u-\aperp G\big)^2\big]\Big\}\, .\nonumber
\end{align}

\paragraph{No-reweigthing data selection.} In this case  Lagrangian reduces to
\begin{align}
\cuL(\balpha,\mu)&:= 
\frac{\lambda}{2}\|\balpha\|^2-\frac{1}{2\delta_0}\mu\aperp^2
\label{eq:NR-Lagrangian-binary}\\
&+\E\Big\{f(\|\btheta_0\|G_0)\pi(\<\bbeta,\bg\>)\min_{u\in\reals}\big[L(\alpha_0 G_0+\alpha_s G_s+u;+1)
+\frac{1}{2}\mu\, \big(u-\aperp G_{\perp}\big)^2\big]
\nonumber\\
&+(1-f(\|\btheta_0\|G_0))\pi(\<\bbeta,\bg\>)\min_{u\in\reals}\big[L(\alpha_0 G_0+\alpha_s G_s+u;-1)
+\frac{1}{2}\mu\, \big(u-\aperp G_{\perp}\big)^2\big]\Big\}\, .
\nonumber
\end{align}

\paragraph{Misclassification error.}  For $L_{\stest}(y;z)= \bfone(yz<0)$:
\begin{align}
R(\lambda;\btheta) & = \frac{1}{2}-\frac{1}{2}\E \Big\{\big(2f(\|\btheta_0\|_2G)-1\big)
\big(2\Phi(q\, G)-1\big)
\Big\}\, ,\;\;\; q:= \frac{\alpha_0}{\sqrt{\alpha_s^2+\alpha_{\perp}^2}}\, .
\end{align}
%

%
%
%
%
%
%
%

\end{document}